\documentclass{article} 
\usepackage{iclr2026_conference,times}
\usepackage{amsthm}

\usepackage{amsmath,amsfonts,bm}

\def\eqref#1{equation~\ref{#1}}

\def\1{\bm{1}}

\DeclareMathAlphabet{\mathsfit}{\encodingdefault}{\sfdefault}{m}{sl}
\SetMathAlphabet{\mathsfit}{bold}{\encodingdefault}{\sfdefault}{bx}{n}

\usepackage{mathtools}
\usepackage{graphicx}   
\usepackage{caption}
\usepackage{subcaption} 
\usepackage{booktabs}
\usepackage{pdfpages}  
\usepackage{algorithm} 
\usepackage{wrapfig} 
\usepackage{enumitem} 
\usepackage{algpseudocode}  
\usepackage{hyperref}
\usepackage{booktabs}
\usepackage{array}      
\usepackage{multirow}   
\usepackage{siunitx}    
\usepackage[dvipsnames]{xcolor} 
\usepackage{listings} 
\usepackage{placeins}
\usepackage{url}
\newtheorem{lemma}{Lemma}
\newtheorem{theorem}{Theorem}
\newtheorem{remark}{Remark}
\newtheorem{proposition}{Proposition}

    \title{Unleashing LLMs in Bayesian Optimization: Preference-Guided Framework for Scientific Discovery}

\author{
Xinzhe Yuan\textsuperscript{2,1,\dag} \hspace{0.2cm}
Zhuo Chen\textsuperscript{1,3,\dag} \hspace{0.2cm}
Jianshu Zhang\textsuperscript{1,3} \\
Huan Xiong\textsuperscript{2, \ddag} \hspace{0.2cm}
Nanyang Ye\textsuperscript{3, \ddag} \hspace{0.2cm}
Yuqiang Li\textsuperscript{1, \ddag} \hspace{0.2cm}
Qinying Gu\textsuperscript{1, \ddag}
\smallskip \\
\small \dag\ Equal contribution. \smallskip \\
\small \ddag Corresponding authors: \texttt{\{huan.xiong.math,ynylincolncam\}@gmail.com}, \\
\small\ \ \ \ \ \ \ \ \ \ \ \ \ \ \ \ \ \ \ \ \ \ \ \ \ \ \ \ \ \ \ \ \ \ \ \ \ \ \ \ \ \texttt{\{liyuqiang,guqinying\}@pjlab.org.cn}\bigskip \\
\textsuperscript{1}Shanghai Artificial Intelligence Laboratory, Shanghai, China \\
\textsuperscript{2}Harbin Institute of Technology, Harbin, China \\
\textsuperscript{3}Shanghai Jiao Tong University, Shanghai, China
}

\iclrfinalcopy 
\begin{document}

\maketitle

\begin{abstract}
Scientific discovery is increasingly constrained by costly experiments and limited resources, underscoring the need for efficient optimization in AI for science. Bayesian Optimization (BO), though widely adopted for balancing exploration and exploitation, often exhibits slow cold-start performance and poor scalability in high-dimensional settings, limiting its applicability in real-world scientific problems. To overcome these challenges, we propose LLM-Guided Bayesian Optimization (LGBO), the first LLM preference-guided BO framework that continuously integrates the semantic reasoning of large language models (LLMs) into the optimization loop. Unlike prior works that use LLMs only for warm-start initialization or candidate generation, LGBO introduces a region-lifted preference mechanism that embeds LLM-driven preferences into every iteration, shifting the surrogate mean in a stable and controllable way. Theoretically, we prove that LGBO does not perform significantly worse than standard BO in the worst case, while achieving significantly faster convergence when preferences align with the objective. Empirically, LGBO consistently outperforms existing methods across diverse dry benchmarks in physics, chemistry, biology, and materials science.  Most notably, in a new wet-lab optimization of Fe-Cr battery electrolytes, LGBO attains \textbf{90\% of the best observed value within 6 iterations}, whereas standard BO and existing LLM-augmented baselines require more than 10. Together, these results suggest that LGBO offers a promising direction for integrating LLMs into scientific optimization workflows.

\end{abstract}

\section{Introduction}
Traditional experimental approaches have long driven scientific discovery. However, their labor-intensive and time-consuming nature continues to limit the pace at which new insights can emerge \citep{Xie2023autonomous,Tom2024selfdriving,Seifrid2022sdl}. To alleviate these bottlenecks, self-driving laboratories combine robotic execution with artificial intelligence, enabling systematic and high-throughput exploration \citep{Chen2024array,Ai2024gas}. Within these platforms, Bayesian Optimization (BO) has emerged as a powerful strategy, balancing exploration and exploitation while offering principled uncertainty quantification \citep{Guo2023BOchem,Shields2021reactionopt,Shahriari2015BOreview}.

BO has achieved success across diverse scientific domains, but it faces two well-known challenges. First, the scarcity of early experimental data makes cold-start optimization slow and inefficient. Second, high-dimensional parameter spaces introduce the curse of dimensionality, further limiting scalability \citep{Guo2023BOchem}. These issues often hinder BO in precisely the settings where it is most needed-exploratory, expensive scientific tasks. Several acceleration strategies have been proposed, including transfer learning \citep{Swersky2013multitask}, embedding methods \citep{Nayebi2019subspaces,Moriconi2020lowdim}, and adaptive partitioning \citep{Wang2020lamcts}. Yet, none of these methods fully addresses the cold-start bottleneck.

Recent efforts have attempted to bring large language models (LLMs) into BO. A first line of work uses LLMs for \emph{warm-up}, letting the model propose the initial design points before the optimization loop begins. This strategy can provide a strong prior when the task is well aligned with the model’s knowledge, accelerating the otherwise slow cold-start phase \citep{liularge}. A second line of work leverages LLMs to \emph{propose candidate points} during the loop, which are then re-ranked or filtered by a standard acquisition function \citep{LLINBO,ADO-LLM,ReasoningBO}. These approaches demonstrate the promise of linguistic priors, but they also share a critical limitation: the LLM's guidance remains only loosely embedded in the optimization loop—either injected once at initialization or later overridden by the acquisition function. This motivates a central question:

\textbf{Can we design a BO framework that more fully integrates LLM preferences throughout the optimization process, rather than confining them to warm-up initialization or auxiliary candidate proposals?}  

To this end, we present the \emph{LLM-Guided Bayesian Optimization (LGBO) framework}. Rather than restricting LLMs to one-off suggestions, LGBO treats them as expert priors that provide preferences and directional guidance throughout the acquisition process. This allows the optimizer to leverage LLM reasoning when informative, while remaining robust to weak or noisy signals.

We also evaluate this framework across diverse scientific optimization tasks. Our results show that LLM guidance can substantially accelerate BO when informative, while performance remains competitive under weaker signals. Furthermore, we provide theoretical guarantees ensuring the consistency of the framework.
Our contributions are as follows:
\begin{enumerate}
    \item We propose an \emph{LLM-preference BO} framework that incorporates LLM expertise into Bayesian optimization through continuous preference guidance, moving beyond existing warm-start or candidate-proposal paradigms.  
    \item We provide theoretical guarantees showing that aligned LLM guidance can substantially accelerate convergence, while even under misleading guidance the additional cost remains provably bounded.  
    \item We validate the framework on a broad suite of scientific optimization tasks, spanning diverse dry benchmarks and a wet-lab electrolyte optimization experiment, demonstrating both generality and practical utility.  
\end{enumerate}

\section{Related Work}
In this section, we review existing studies and analyze their limitations. We begin with approaches that incorporate large language models into Bayesian optimization, then turn to human-in-the-loop methods where expert preferences guide the search, and finally discuss how these two lines connect to our framework.

\subsection{LLMs in Bayesian Optimization}

Before reviewing how LLMs have been integrated, it is helpful to first recall what BO is. BO provides a way to search for the best solution when directly testing the target is expensive and time-consuming. The idea is to treat the true objective as a “black box”: we can try different inputs and observe the outcomes, but we do not know its exact formula. To make the search more efficient, BO builds a surrogate model—often a Gaussian Process (GP)—that approximates the black box based on the points already tested. At each step, this surrogate is updated with new observations, and the acquisition function, defined on the surrogate, proposes the next point to evaluate. The new outcome is then obtained from the real black box, after which the surrogate is updated and the cycle repeats. 

Recently, studies have explored using large language models to assist this BO process. LLAMBO \citep{liularge} leverages LLMs to warm-start the optimization with informative initial points and to generate candidate samples conditioned on past evaluations. However, the final decision at
each iteration is still made by a standard acquisition function over a surrogate, leaving the LLM’s
reasoning only indirectly involved. Subsequent studies explore similar ideas in domain-specific settings. ADO-LLM \citep{ADO-LLM} applied LLM-assisted proposals to analog circuit design, and ReasoningBO \citep{ReasoningBO} highlighted how contextual reasoning can enrich candidate generation. Most recently, LLINBO \citep{LLINBO} proposed a more systematic “LLM-in-the-loop” framework, designed for the early exploration stage of BO to address the cold-start problem. In addition, beyond the scientific discovery domain, several LLM–BO approaches have also emerged in other areas. For instance, BOPRO \citep{BOPRO} is a method that leverages large language models as implicit Bayesian priors through in-context learning. CAKE \citep{CAKE}, in contrast, utilizes LLMs to construct kernel functions tailored to specific tasks, enabling structured and data-efficient optimization over complex input spaces.


Across these frameworks, however, LLMs are treated as auxiliary components rather than integral parts of the loop, with their guidance incorporated only partially and indirectly. This gap motivates the development of approaches that continuously and systematically embed evolving model preferences into the optimization loop.

\subsection{Human Experts in Bayesian Optimization}

A natural source of inspiration is human-in-the-loop Bayesian optimization, where algorithms exploit the fact that experts often find it easier to provide preferences or qualitative guidance rather than precise numerical labels. Early work on preference-based BO formalized how such comparative judgments can be integrated into acquisition strategies \citep{Brochu2010tutorial,gonzalez2017preferential}. More recent systems extend this idea by modeling expert reliability \citep{Xu2024PrincipledCollabBO} or progressively eliciting knowledge through targeted queries \citep{Huang2022ActivelyElicitedExpert}, showing that expert feedback can substantially improve sample efficiency even when partial or noisy.

ColaBO \citep{Hvarfner2024ColaBO} provides a general framework in which experts specify a preference, and the optimizer reweights sampled paths based on how well the best candidates identified along those paths align with that preference. This design is effective with human experts, whose input is typically stable and given once at initialization. With LLMs as experts, however, two challenges emerge. First, since ColaBO accepts only an initial preference, it cannot naturally incorporate updated guidance during the optimization process, leaving the LLM’s capacity for ongoing reasoning underutilized. Second, if preferences are revised over time, the required reweighting of sampled paths would need to be updated continually, which may introduce instability or even divergence. These factors make ColaBO less directly applicable to LLM-guided BO and point to the need for frameworks that can embed evolving model preferences in a stable manner.

\section{LLM-Guided Bayesian Optimization}
\label{sec:method}

We present \emph{LLM-Guided Bayesian Optimization}, a framework that integrates large language models into Bayesian optimization in a stable and tractable way. The key challenge is to incorporate external preferences without sacrificing mathematical rigor: standard BO relies purely on data-driven posteriors, while preference-based extensions for human experts assume sparse, stable, or one-off feedback. LLM guidance, by contrast, is iterative, coarse, and potentially noisy. Directly injecting such signals into the acquisition function can destabilize the optimization loop.

LGBO addresses this challenge through the \emph{region-lifted preference}, a mechanism that translates LLM suggestions into lightweight priors which shift the surrogate mean but leave its covariance unchanged. This design allows semantic guidance to be incorporated continuously while preserving the statistical properties of BO. Section~\ref{sec:preliminaries} provides the preliminaries of standard BO and prior expert-guided BO, Section~\ref{sec:region-preference} introduces Region-Lifted Preference as the core part of the LGBO framework, Section~\ref{sec:framework} details the optimization loop, and Section~\ref{sec:theory} presents theoretical guarantees.

\subsection{Preliminaries}
\label{sec:preliminaries}
We first recall the basic setup of Bayesian optimization and the challenges of incorporating external preferences. 

\paragraph{Standard BO.}  

Given observed data $D_t = \{(x_i, y_i)\}_{i=1}^t$, BO maintains a posterior distribution over the objective $f: \mathcal{X} \to \mathbb{R}$, typically modeled with a Gaussian process $\mathrm{GP}(\mu,k)$, where $\mu$ is mean function and $k$ is kernel (covariance) function. This posterior provides both mean predictions and uncertainty estimates. An acquisition function $a_t(x)$ (e.g., EI or UCB) then selects the next query by balancing exploration and exploitation to efficiently approach the global optimum \citep{Shahriari2015BOreview}.

\paragraph{Preference as a functional.}
Most expert-guided Bayesian optimization methods can be viewed through a preference functional $\rho(f)$, which reweights the posterior toward functions that better match external guidance. For example, a statement like “$x_1$ is preferred to $x_2$” can be modeled as $\rho(f) = \sigma(f(x_1)-f(x_2))$, where $\sigma$ is the sigmoid function. The standard BO posterior $p(f \mid D_t)$ represents our belief about the black-box function after observing data $D_t$, and multiplying it with $\rho(f)$ produces an adjusted posterior
    \[
    p(f \mid D_t, \rho) \propto p(f \mid D_t)\,\rho(f),
    \]
which incorporates additional preferences directly into the BO loop. In practice, however, such posteriors are often intractable, especially when $\rho(f)$ depends on global properties of $f$. For this reason, existing human-in-the-loop optimization methods usually inject preferences indirectly by modifying the acquisition function. While this strategy is effective under stable human feedback, letting LLMs directly alter the acquisition function is risky, as it can cause instability or even divergence of the optimization process.
\subsection{Core Tool: Region-Lifted Preference}
\label{sec:region-preference}
We argue that LLMs are naturally better at providing \emph{coarse, semantic guidance}—such as identifying promising regions (e.g., “try near high temperature and low pressure”)—rather than precise pairwise comparisons. To leverage this strength while preserving tractability, we introduce a \emph{region-lifted preference} scheme: the LLM specifies a preferred region $R \subseteq \mathcal X$, and the preference functional is defined as a linear exponential lift over a discretization $\mathcal G = \{x_g\}_{g=1}^G \subset R$:
\[
\rho(f) = \exp\!\Big( \lambda \sum_{g=1}^G a_g f(x_g) \Big) \;=\; \exp\!\big( \lambda \, a^\top F_{\mathcal G} \big),
\]
where $a = [a_1, \dots, a_G]^\top \in \mathbb{R}^G$ with $a_g \ge 0$ 
(e.g., uniform $a_g = 1/G$), $F_{\mathcal G} = [f(x_1), \dots, f(x_G)]^\top \in \mathbb{R}^G$, and $\lambda > 0$ controls the guidance strength. This functional encourages higher function values in $R$ without explicitly modeling optimality, thereby avoiding the intractability of argmax-dependent preferences.

Although this regional lift appears complex, proposition~\ref{prop:tilt} shows that it can be implemented exactly  and tractably, as a mean shift of the GP prior, preserving the Gaussian structure and enabling seamless integration into standard BO pipelines.

\begin{proposition}[Exponential lift equals mean shift]
\label{prop:tilt}
Let $f \sim \mathrm{GP}(\mu,k)$ and consider a linear functional over a grid $\mathcal G=\{x_g\}_{g=1}^G$ with weights $a\in\mathbb R^G$.  
Lifting the posterior distribution by
\[
\exp\!\big(\lambda\,a^\top F_{\mathcal G}\big)
\]
yields a new Gaussian process with the same covariance kernel $k$, but a shifted mean function 
\[
\mu_\lambda(x) \;=\; \mu(x) + \lambda \sum_{g=1}^G a_g\, k(x,x_g).
\]
Equivalently, for any finite query set $X$, the marginal distribution is
\[
F_X \,\sim\, \mathcal N\!\big(\mu_X + \lambda\,\Sigma_{X\mathcal G}a,\ \Sigma_{XX}\big).
\]
In particular, the expected regional lift is
\[
\Delta = \mathbb E_\lambda[a^\top F_{\mathcal G}] - \mathbb E[a^\top F_{\mathcal G}]
= \lambda\,a^\top \Sigma_{\mathcal G\mathcal G} a.
\]
\end{proposition}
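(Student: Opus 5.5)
The plan is to reduce everything to a statement about finite-dimensional Gaussians and then lift it to the process level via Kolmogorov consistency. Fix an arbitrary finite query set $X$ and form the joint vector $Z=(F_X,F_{\mathcal G})$. Under $\mathrm{GP}(\mu,k)$ this vector is jointly Gaussian with mean $(\mu_X,\mu_{\mathcal G})$ and covariance blocks $\Sigma_{XX},\Sigma_{X\mathcal G},\Sigma_{\mathcal G\mathcal G}$. The lift $\exp(\lambda a^\top F_{\mathcal G})$ depends on $f$ only through $F_{\mathcal G}$. So the tilted law of $Z$ is obtained by multiplying its density by $\exp(b^\top Z)$, where $b=(0,\lambda a)$. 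The normalizer is finite because the Gaussian moment generating function exists for every $b$.

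\textbf{Core step (exponential tilting).} For $Z\sim\mathcal N(m,\Sigma)$, I will complete the square:
\[
-\tfrac12 (z-m)^\top\Sigma^{-1}(z-m)+b^\top z
= -\tfrac12 (z-m-\Sigma b)^\top\Sigma^{-1}(z-m-\Sigma b)+\text{const}.
\]
This shows the tilted law is $\mathcal N(m+\Sigma b,\Sigma)$. If $\Sigma$ is singular, which happens when the grid points coincide with query points or the kernel is degenerate, I will avoid densities and use characteristic or moment generating functions instead:
\[
\mathbb E[e^{s^\top Z}e^{b^\top Z}]/\mathbb E[e^{b^\top Z}]
=\exp\!\big(s^\top(m+\Sigma b)+\tfrac12 s^\top\Sigma s\big).
\]
This identifies the tilted law as $\mathcal N(m+\Sigma b,\Sigma)$ in every case. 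The degenerate case is the only delicate point, and the MGF argument handles it cleanly.

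\textbf{Reading off the blocks.} The tilted mean is $m+\Sigma b$, whose $X$-block is $\mu_X+\lambda\Sigma_{X\mathcal G}a$. The $X$-marginal covariance stays $\Sigma_{XX}$. This gives the displayed marginal, and entrywise $\mu_\lambda(x)=\mu(x)+\lambda\sum_g a_g k(x,x_g)$.

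\textbf{Consistency and the process-level claim.} The finite-dimensional laws obtained for different $X$ must be consistent, and they are. Each one is the restriction of the tilted measure on path space, since the same Radon–Nikodym factor is used throughout. Moreover, each has the form "mean $\mu_\lambda$ restricted to $X$, covariance $k$ restricted to $X$." Hence the tilted measure is exactly $\mathrm{GP}(\mu_\lambda,k)$.

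The same argument applies verbatim when $\mu$ and $k$ are the posterior mean and kernel given $D_t$, since that posterior is again a GP.

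\textbf{Expected regional lift.} Take $X=\mathcal G$. Then $\mathbb E_\lambda[F_{\mathcal G}]-\mathbb E[F_{\mathcal G}]=\lambda\Sigma_{\mathcal G\mathcal G}a$. Applying $a^\top$ gives $\Delta=\lambda a^\top\Sigma_{\mathcal G\mathcal G}a$, which is nonnegative by positive semidefiniteness of $k$.
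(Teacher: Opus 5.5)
Your proof is correct and takes essentially the same route as the paper. The paper obtains this proposition from its appendix Theorem~\ref{app:the2}: it tilts finite-dimensional Gaussian projections (by completing the square or via cumulant-generating functions), obtaining mean $\mu_\Phi+\lambda\,\mathrm{Cov}(\cdot,\ell(f))$ with unchanged covariance, and then invokes Kolmogorov consistency; your explicit MGF treatment of a singular $\Sigma$ (e.g.\ when $X$ overlaps $\mathcal G$) is a careful detail the paper only gestures at.
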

Here $\mathcal N(\mu, \Sigma)$ denotes the multivariate normal distribution. This result is a direct finite-dimensional consequence of Theorem~\ref{app:the2} in Appendix~\ref{app:the}, which establishes the general Cameron--Martin form of exponential lifting for Gaussian measures.



\paragraph{Region-lifted as a tool.}
Building on Proposition~\ref{prop:tilt}, the region-lifted preference provides a practical interface
for large language models.  
In the system prompt, the model’s output is strictly constrained to one of two formats:
\begin{enumerate}[label=\arabic*), leftmargin=*]
    \item \textbf{Point mode:} \texttt{[point, [x1, x2, \ldots, xd], ccc]},
    \item \textbf{Region mode:} \texttt{[region, [[lb1, \ldots, lbd], [ub1, \ldots, ubd]], ccc]}.
\end{enumerate}

Here, \texttt{ccc} is a confidence score in $[0,1]$.  
In region mode, the model specifies a hyper-rectangle through \texttt{lb}/\texttt{ub}; we discretize
it using a Sobol grid $\mathcal G$.
In point mode, the lift is localized: we construct $a$ as a smoothly decaying weight vector centered
at the suggested point, so that $a^\top F_{\mathcal G}$ captures a soft neighborhood average.
This structured interface allows the LLM to express coarse semantic preferences while ensuring that
its influence remains mathematically tractable.

\paragraph{Calibrating the guidance strength.}
To ensure that the lift is well-scaled, we map the LLM's confidence $c\in[0,1]$ directly to the
guidance strength. 
From Proposition \ref{prop:tilt}, under exponential tilting, the effect of a lift is equivalent to shifting the regional functional
$a^\top F_{\mathcal G}$ by
\[
\Delta(\lambda)=\lambda\,a^\top\Sigma_{\mathcal G\mathcal G}a,
\]
so the choice of $\lambda$ directly determines the effective magnitude of this shift.
We set
\[
\lambda \;=\; \frac{c}{\sqrt{a^\top \Sigma_{\mathcal G\mathcal G} a}},
\]
where $\Sigma_{\mathcal G\mathcal G}$ is the GP posterior covariance on $\mathcal G$, and
$a^\top \Sigma_{\mathcal G\mathcal G} a$ is the posterior variance of the regional functional.
This normalization calibrates the lift to a consistent ``$c$-standard-deviation'' scale—regions with low
posterior uncertainty receive a small effective shift, while uncertain regions receive a larger one.
In this way, the LLM’s preference integrates smoothly with the model’s uncertainty without introducing
extra hyperparameters or risking premature over-exploitation.

\subsection{LGBO Framework}
\label{sec:framework}
\begin{figure}[t]
    \centering
    \includegraphics[width=1\textwidth]{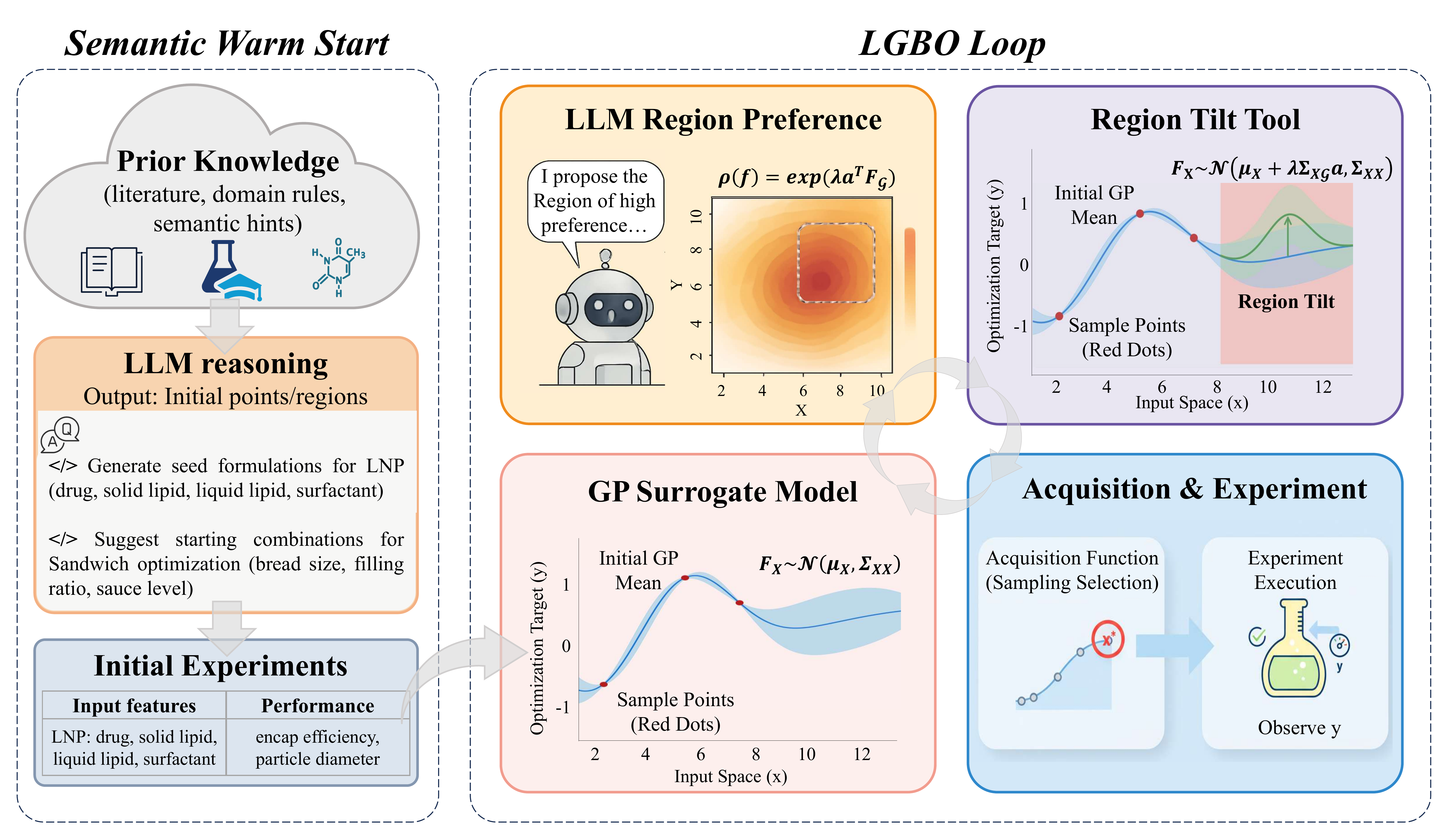}
    \caption{LGBO framework. The proposed LLM-Guided Bayesian Optimization  integrates prior knowledge and LLM reasoning to warm-start the search, and iteratively combines a GP surrogate with region-lifted preferences. At each round, the LLM provides coarse regional guidance, implemented as a mean shift of the surrogate, while the acquisition function selects the next experiment. This design ensures stability in the worst case while enabling acceleration when the guidance is informative.}
    \label{fig:lgbo_framework}
\end{figure}

LGBO integrates these region-lifted preferences into BO in a continuous loop. Instead of altering the acquisition function directly, LLM outputs reshape the surrogate model at each iteration. This maintains robustness while allowing semantic guidance to evolve alongside experimental data.

\paragraph{From warm-start to continuous guidance.}  
Earlier approaches have used LLMs in limited ways: either to provide one-off initialization or auxiliary candidate proposals that are later filtered by the acquisition function. In both cases, the guidance is only loosely connected to the optimization loop. LGBO departs from these approaches by embedding LLM preferences at every iteration.
At every iteration, the surrogate is informed not only by newly collected experimental data but also by updated LLM preferences, which are systematically incorporated through region-lifted priors. This design allows semantic signals, such as domain heuristics or qualitative trends, to shape the surrogate throughout the process, ensuring that LLM reasoning evolves in tandem with data and becomes a persistent part of the optimization loop.

\paragraph{Workflow.}
The LGBO loop, illustrated in Figure \ref{fig:lgbo_framework}, follows the structure of standard BO with one key modification: region-lifted updates are injected at each iteration. The process begins with initialization, where the LLM provides starting points based on prior knowledge. After each evaluation, the GP surrogate is retrained, and the LLM is queried with a structured prompt with fixed components (domain knowledge, constraints, and the required output format) and a dynamic user prompt summarizing experimental history plus the reasoning trace from the previous round. The model then outputs either a point or a region with a confidence score, which is converted into a region-lifted preference and applied as a mean shift of the surrogate mean while leaving the covariance unchanged. The acquisition function, such as EI or UCB, then selects the next experiment based on this updated surrogate. By repeating this cycle, LGBO integrates semantic guidance continuously, while the acquisition function retains full control over decision-making. The full prompt specification is provided in Appendix~\ref{app:prompt}.

\paragraph{Advantages of semantic guidance.}  
This design allows LGBO to harness aspects of LLM reasoning that are otherwise inaccessible to purely data-driven methods.  
For instance, the model can highlight regions suggested by chemical intuition (e.g. higher temperature and lower concentration may accelerate reaction yield) or by cross-domain analogies (e.g., drawing parallels between material compositions).  
By embedding such regional signals into the surrogate, LGBO guides exploration toward scientifically meaningful areas without abandoning the statistical rigor of BO.  

\paragraph{Guarantees.}  
Our framework provides two complementary guarantees:  
in the worst case of irrelevant or misleading guidance, the induced lift only enlarges the surrogate’s norm bound, so regret remains within a constant factor of standard BO;  
when guidance aligns with the true objective, the effective radius shrinks, yielding strictly tighter regret bounds (see Section~\ref{sec:theory}).  
Thus LGBO is provably safe, yet can substantially accelerate convergence when preferences are informative.

\subsection{Theoretical Guarantee}
\label{sec:theory}

We now provide regret guarantees for LGBO.  In the full LGBO algorithm, the LLM proposes a new region at every BO iteration, producing an
adaptive sequence of lifts $\{\tau_t\}$.  
The theoretical analysis is not intended to characterize this fully dynamic process.  
Instead, we study a simplified but mathematically tractable variant in which a single
preference direction $g$ is fixed at the beginning of optimization, and the corresponding lift
$\tau(x)=\lambda g(x)$ is applied consistently throughout the $T$ rounds.

This “frozen-lift’’ setting is a reasonable abstraction because, in scientific optimization tasks,
a domain-informed LLM often exhibits structurally coherent behavior: once the model identifies
features associated with near-optimal or high-quality regions, its suggested regions tend to remain
clustered around similar areas rather than fluctuating dramatically across iterations.  
A fixed direction therefore serves as a representative surrogate for a stable class of regional
preferences while avoiding the technical complications of analyzing a fully adaptive sequence.

Under this setting, the lifted objective $f' = f - \tau$ remains a single element of the RKHS,
allowing direct application of classical GP-UCB theory.  
The analysis isolates the mechanism-level effect of one regional preference direction:
when the direction is aligned with the true objective, the effective RKHS radius contracts and the
regret bound becomes strictly tighter; when the direction is misaligned, the radius increases only
by a constant amount, and the worst-case rate matches standard GP–UCB.  
Thus, although the fixed-lift variant is not an exact description of the full LGBO dynamics, it
provides a principled and verifiable abstraction of the core preference mechanism.

\paragraph{Lift representation.}
Given a finite grid $\{x_i\}$ and non-negative weights $a_i$, the structured region or point
suggested by the LLM is mapped, under the LGBO lifting rule defined in Section~\ref{sec:region-preference}, 
to the kernel-induced function
\[
g(x) = \sum_i a_i k(x,x_i), \qquad 
\tau(x) = \lambda g(x).
\]
This yields an equivalent mean-shifted representation of the prior as $\mathrm{GP}(\mu+\tau,k)$.

\paragraph{Regularity assumptions.}
We follow the classical GP--UCB assumptions~\citep{srinivas2009gaussian}:
\[
f-\mu\in\mathcal{H}_k,\qquad \|f-\mu\|_{\mathcal{H}_k}\le B_0,
\]
    where $\|\cdot\|_{\mathcal{H}_k}$ denotes the RKHS norm associated with kernel $k$ and $R$-sub-Gaussian noise.

\paragraph{Alignment coefficient.}
The analysis uses an alignment coefficient
\[
c \;:=\; 
\frac{\langle f-\tau,\, g\rangle_{\mathcal{H}_k}}
     {\|f-\tau\|_{\mathcal{H}_k}\,\|g\|_{\mathcal{H}_k}}
     \,\in [-1,1],
\]
which measures how well the lifted direction $g$ agrees with the centered 
objective $f-\mu$ in the RKHS. Building on the fixed lift and the alignment structure introduced above, we can now state the corresponding regret bounds.

\begin{theorem}[Lifted regret bounds]
\label{thm:concise}
If $f\!\sim\!\mathrm{GP}(\mu,k)$ satisfies $\|f-\mu\|_{\mathcal{H}_k}\!\le\! B_0$, then under the fixed 
lift $\tau(x)=\lambda g(x)$ defined above.  
Run GP-UCB on the residual labels $y_t'=y_t-\tau(x_t)$ with confidence parameter 
$\beta_T(\cdot)$ and let $\gamma_T$ denote the maximal information gain.  
Then, with probability at least $1-\delta$,

\[
R_T 
\;\le\;
\sqrt{\,8T\,\beta_T(B_{\mathrm{out}})\,\gamma_T\,},
\qquad
B_{\mathrm{out}}
  = B_0 + \lambda\|g\|_{\mathcal{H}_k},
\]

and, when the alignment coefficient satisfies $c>0$,

\[
R_T 
\;\le\;
\sqrt{\,8T\,\beta_T(B_{\mathrm{in}})\,\gamma_T\,},
\qquad
B_{\mathrm{in}}
  = B_0 \sqrt{1-c^{2}}.
\]

All hidden constants are universal.  
Proof is given in Appendix~\ref{app:the}.
\end{theorem}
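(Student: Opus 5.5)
The plan is to reduce both bounds to the classical GP--UCB regret theorem of \citet{srinivas2009gaussian} (in its RKHS/agnostic form). That theorem states: if the unknown function $h$ satisfies $\|h\|_{\mathcal H_k}\le B$ and observations carry $R$-sub-Gaussian noise, then GP-UCB with $\beta_t(B)=2B^2+300\gamma_t\log^3(t/\delta)$ (or any equivalent self-normalized confidence width) has $R_T\le\sqrt{8T\beta_T(B)\gamma_T}$ with probability at least $1-\delta$.

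The key observation is that running GP-UCB on the residual labels $y_t'=y_t-\tau(x_t)$ is exactly running GP-UCB on the function $h:=f-\mu-\tau$, taking $\mu$ as the known prior mean offset, with unchanged noise. Regret is also invariant under this transformation. The reason is that $\tau$ is a fixed, known, deterministic function added back to the UCB index: the queried points are the maximizers of $\mu+\tau+\mu_{t-1}^{h}+\beta_t^{1/2}\sigma_{t-1}$, and the instantaneous regret $f(x^*)-f(x_t)$ is bounded by $2\beta_t^{1/2}\sigma_{t-1}(x_t)$ whenever the confidence band $|h-\mu^h_{t-1}|\le\beta_t^{1/2}\sigma_{t-1}$ holds. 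Because the band concerns $h$ alone, the standard argument, which combines the sum of $\sigma_{t-1}^2(x_t)$ with the information gain $\gamma_T$ via Cauchy--Schwarz, goes through verbatim. So the whole task reduces to bounding $\|h\|_{\mathcal H_k}$ in the two regimes, and then using monotonicity of $\beta_T(\cdot)$ in $B$.

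\textbf{Worst case.} Since $g=\sum_i a_i k(\cdot,x_i)$ is a finite combination of kernel sections, $g\in\mathcal H_k$, with $\|g\|^2_{\mathcal H_k}=a^\top K_{\mathcal G\mathcal G}a$. The triangle inequality then gives $\|h\|\le\|f-\mu\|+\lambda\|g\|\le B_0+\lambda\|g\|=B_{\mathrm{out}}$, which yields the first bound immediately.

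\textbf{Aligned case.} Here $h$ should be compared with its projection onto $g$. Decompose the centered objective orthogonally as $f-\mu=\alpha g+ r$ with $r\perp g$, so that $\|r\|=\|f-\mu\|\sqrt{1-c^2}\le B_0\sqrt{1-c^2}$ when $c$ is read as the cosine between $f-\mu$ and $g$ (the interpretation the text gives). If $\lambda$ matches the projection coefficient $\alpha=\langle f-\mu,g\rangle/\|g\|^2>0$ (positive exactly when $c>0$), then $h=r$ and we obtain $B_{\mathrm{in}}$.

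This is the main obstacle. The statement's definition of $c$ uses $f-\tau$ rather than $f-\mu$, and it does not explicitly tie $\lambda$ to $\alpha$. I would first state the idealized lift choice $\lambda=\alpha$, or more generally interpret the bound with $\lambda$ chosen optimally, since $\min_\lambda\|f-\mu-\lambda g\|=B_0\sqrt{1-c^2}$ exactly. I would then note that for other $\lambda$ the bound becomes $\sqrt{\|r\|^2+(\alpha-\lambda)^2\|g\|^2}$, which still sits below $B_0$ whenever $0<\lambda<2\alpha$. The remaining gaps are a notational identification of the alignment reference function and the convention that hidden constants sit inside $\beta_T$. Once these are handled, plugging $B_{\mathrm{in}}$ into the GP--UCB theorem completes the proof. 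Note also that $\beta_T$ is increasing in $B$, so $B_{\mathrm{in}}<B_0$ gives a strictly tighter bound.
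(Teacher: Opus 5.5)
Your proof has the same architecture as the paper's appendix proof (Theorem~\ref{thm:tilted_ucb}). Both reduce to GP-UCB on the residual $f-\tau$ using a self-normalized confidence band plus the width-sum lemma. Both bound the residual's RKHS norm by the triangle inequality in the misaligned case, and by removing the component along $g$ in the aligned case. The paper's identity $\|f-\lambda g\|^2=(\lambda\|g\|-cB_0)^2+B_0^2(1-c^2)$ is your orthogonal decomposition written as a completed square. At the two points where you depart from the paper, your version is the correct one.

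First, the selection rule. The paper selects $x_t$ by maximizing the residual index $\mu'_{t-1}+\sqrt{\beta_t}\,s_{t-1}$ and then writes $f(x^\star)-f(x_t)=f'(x^\star)-f'(x_t)$. That step silently drops $\tau(x^\star)-\tau(x_t)$, and the algorithm as written optimizes $f-\tau$ rather than $f$. You instead add $\mu+\tau$ back into the index, which is UCB under the mean-shifted prior $\mathrm{GP}(\mu+\tau,k)$ and is what LGBO actually implements. That choice is exactly what makes $r_t\le 2\sqrt{\beta_t}\,s_{t-1}(x_t)$ valid.

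Second, the aligned case. The paper defines $c$ through $f-\tau$, which is circular, since $\tau$ depends on the $\lambda$ that is then chosen from $c$. With $\mu=0$, as in the appendix, this $c$ vanishes at the projection choice of $\lambda$. The paper also sets $\lambda=cB_0/\|g\|$, and its completed square is an identity only if $c$ is the cosine of $f$ with $g$ and $\|f\|=B_0$. When $\|f\|<B_0$, that $\lambda$ overshoots the projection coefficient. You read $c$ as the cosine of $f-\mu$ with $g$ and take $\lambda=\alpha=\langle f-\mu,g\rangle/\|g\|^2$. This gives $\|f-\mu-\tau\|=\|f-\mu\|\sqrt{1-c^2}\le B_0\sqrt{1-c^2}$ from the inequality $\|f-\mu\|\le B_0$ alone.

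The only thing to tighten is your citation. Srinivas et al.'s agnostic theorem assumes almost surely bounded noise and carries a $\sigma$-dependent constant in place of $8$. For $R$-sub-Gaussian noise you should use the Chowdhury--Gopalan band (Lemma~\ref{lem:rkhs_ucb}), as the paper does. Your hedge about ``any equivalent self-normalized confidence width'' already allows for this.
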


\begin{remark}[Reading the bounds]
For standard BO, the complexity is controlled by $B_0$, yielding regret on the order of $\sqrt{8T\,\beta_T(B_0)\,\gamma_T}$ \citep{srinivas2009gaussian}. 
Under LGBO, the misaligned case simply replaces $B_0$ by $B_{\mathrm{out}}=B_0+\lambda\|g\|_{\mathcal{H}_k}$, so the bound remains comparable up to a lift-dependent cost. 
In the aligned case, the effective radius shrinks to $B_{\mathrm{in}}=B_0\sqrt{1-c^2}$, which can be much smaller than $B_0$ whenever $c$ is large, leading to strictly tighter regret. 
Thus LGBO is  not perform significantly worse than standard BO, and can be substantially better when the suggested region is informative.
\end{remark}

\section{Experiments}
\label{sec:exp}

We evaluate LGBO in two settings: dry experiments, which use pre-collected public datasets, and a wet experiment conducted in a real laboratory. Dry experiments provide a finite set of design-response pairs, from which an offline optimum can be identified. While this optimum may not coincide with the true global optimum of the underlying physical system, it serves as a consistent reference for comparing algorithms. In contrast, the wet experiment involves an ongoing task with limited data, measurement noise, and an unknown optimum, representing a realistic scenario where optimization must proceed under laboratory constraints.

\subsection{Experiment Setting}
\label{sec:baselines}
\captionsetup[subfigure]{justification=raggedright,singlelinecheck=false}

\begin{figure}[t]
  \centering

    \includegraphics[width=\linewidth]{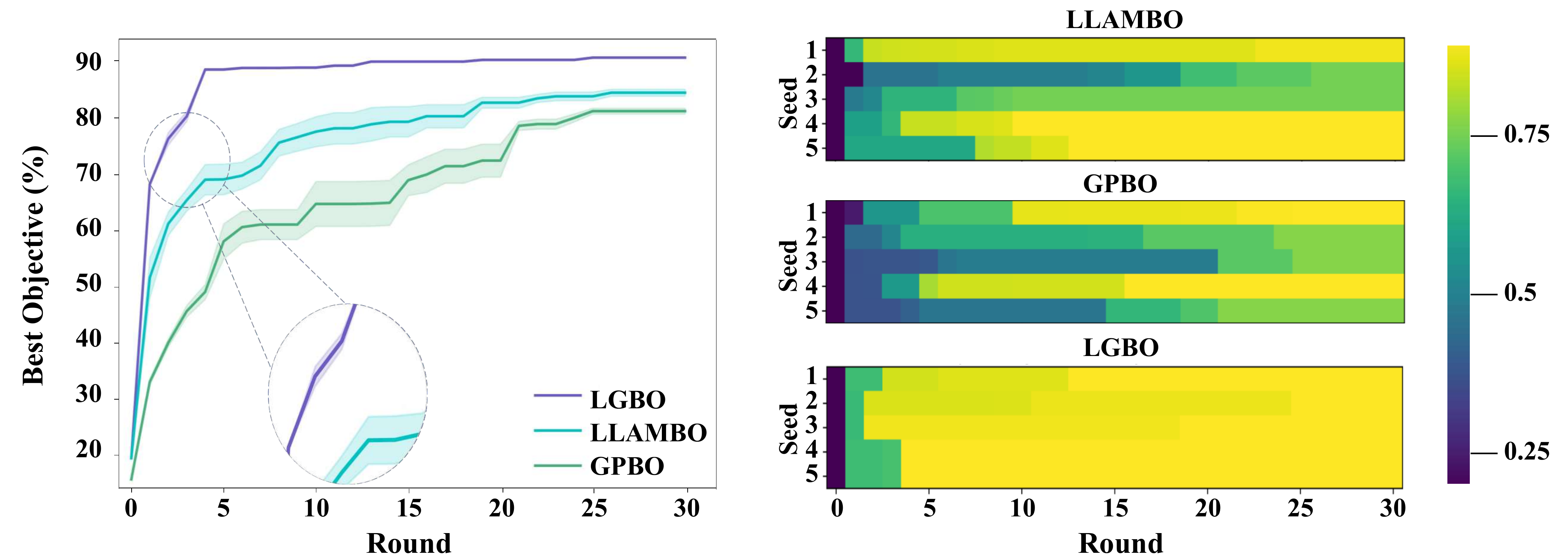}

\caption{LNP3 results. \textbf{Left:} LGBO achieves faster convergence and higher final performance than GPBO and LLAMBO. 
\textbf{Right:} trajectory heatmaps show that LGBO converges more rapidly and consistently across seeds.}\label{fig:lnp3_main}
\end{figure}
\paragraph{Baselines.} 
Two representative methods are used for comparison:  
(i) \textbf{GPBO}, the canonical Gaussian-process Bayesian optimization framework with Matérn-5/2 kernel;  
(ii) \textbf{LLAMBO} \citep{liularge}, a state-of-the-art LLM-augmented BO approach that leverages language models for warm-start initialization and candidate generation.  

\paragraph{Implementation details.} 
All methods use the same surrogate model (GP with Matérn-5/2 kernel), and the log-$q$EI acquisition function. 
GP hyperparameters are re-optimized after each iteration via marginal likelihood. 
Each run begins with two initial evaluations: \textbf{GPBO} uses Sobol initialization, while \textbf{LLAMBO} and \textbf{LGBO} share the same LLM-suggested points for fairness. 
Results are reported as mean $\pm$ variance deviation over five seeds. 
For LLM-based methods, we use \textbf{Intern-S1-241B}, a pretrained large model in the scientific domain \citep{Bai2025InternS1}. 
Prompts are designed so that no task-specific objectives or seeds are exposed, ensuring that the LLM cannot rely on memorized knowledge.

\paragraph{Dry experiment datasets.}  
Four benchmarks are used to assess generality across scientific domains. Detailed specifications are given in Appendix~\ref{app:datasets}:  
\begin{enumerate}[label=\arabic*), leftmargin=*]
\item \textbf{LNP3 \citep{chen2022optimizationlnp3}:} lipid nanoparticle formulations for cannabidiol delivery, optimizing for drug loading, encapsulation efficiency, and particle size.
The search space has five design variables: four mixture inputs (three continuous, one discrete) and one categorical lipid type.

\item \textbf{Cross-barrel \citep{gongora2020bayesiancross}:} 3D-printed crossed-barrel structures, optimized for mechanical toughness.  
The design variables are four continuous geometry parameters ($n,\theta,r,t$).

\item \textbf{Concrete \citep{concrete}:} mixture design of cement, aggregates, water, and admixtures, 
with the objective of maximizing compressive strength under realistic composition constraints. \item \textbf{HPLC \citep{hase2021olympushplc}:} automated high-performance liquid chromatography, 
where six process parameters (e.g., loop size, flow rate, pump settings) are tuned to maximize chromatographic peak area.  
\end{enumerate}
\paragraph{Fe-Cr redox flow battery (wet experiment).}  
The wet experiment focuses on optimizing electrolyte formulations for Fe-Cr redox flow batteries, a system for which no benchmark dataset exists and where the true optimum is unknown. Thus, LLMs cannot rely on memorized training data and must instead draw on general scientific knowledge. Key ion concentrations (HCl, FeCl$_{2/3}$, CrCl$_{3/2}$) jointly determine viscosity and conductivity, and all measurements are aggregated into a weighted scalar objective reflecting practical application priorities. Compared to dry experiments, this setting offers few observations and is subject to laboratory noise and measurement uncertainty, providing a stringent test of whether LGBO can use qualitative priors to guide exploration under scarce data.

\subsection{Main Results}

\paragraph{Dry experiments.}
Figure~\ref{fig:lnp3_main} shows the main results of \textbf{LNP3}. The performance traces (left) show that LGBO converges faster and achieves higher final objectives than both baselines: GPBO suffers from cold-start, while LLAMBO improves early progress but soon plateaus. 
The trajectory heatmaps (right) further reveal that LGBO yields much more consistent convergence across seeds, whereas GPBO and LLAMBO exhibit large variability. 
This stability is not accidental: by continuously incorporating domain-informed LLM preferences through region lifting, LGBO maintains coherent search directions and avoids uninformative exploration. For completeness, Appendix~\ref{app:lnp} provides pairplots of sampled points, 
showing that LGBO avoids large regions of uninformative exploration, 
consistent with its reduced variance and faster convergence.

Across the other benchmarks (Figure~\ref{fig:dry}), LGBO consistently achieves stronger performance.
On \textbf{HPLC}, the task is highly noisy and all methods fluctuate considerably, yet LGBO still maintains the highest final performance.  
On \textbf{Cross-barrel}, the low-dimensional design space allows GPBO to perform reasonably well, yet LGBO achieves the best toughness and explores more effectively around the optimum.  
On \textbf{Concrete}, both LGBO and LLAMBO benefit from LLM warm-starts, but LGBO further escapes local optima more quickly, ultimately attaining higher compressive strength.
Together these results highlight the robustness and generality of LGBO across noisy, simple, and practical engineering settings.

\begin{figure}[t]
  \centering
  \begin{subfigure}[t]{0.32\textwidth}
    \centering
    \includegraphics[width=\linewidth]{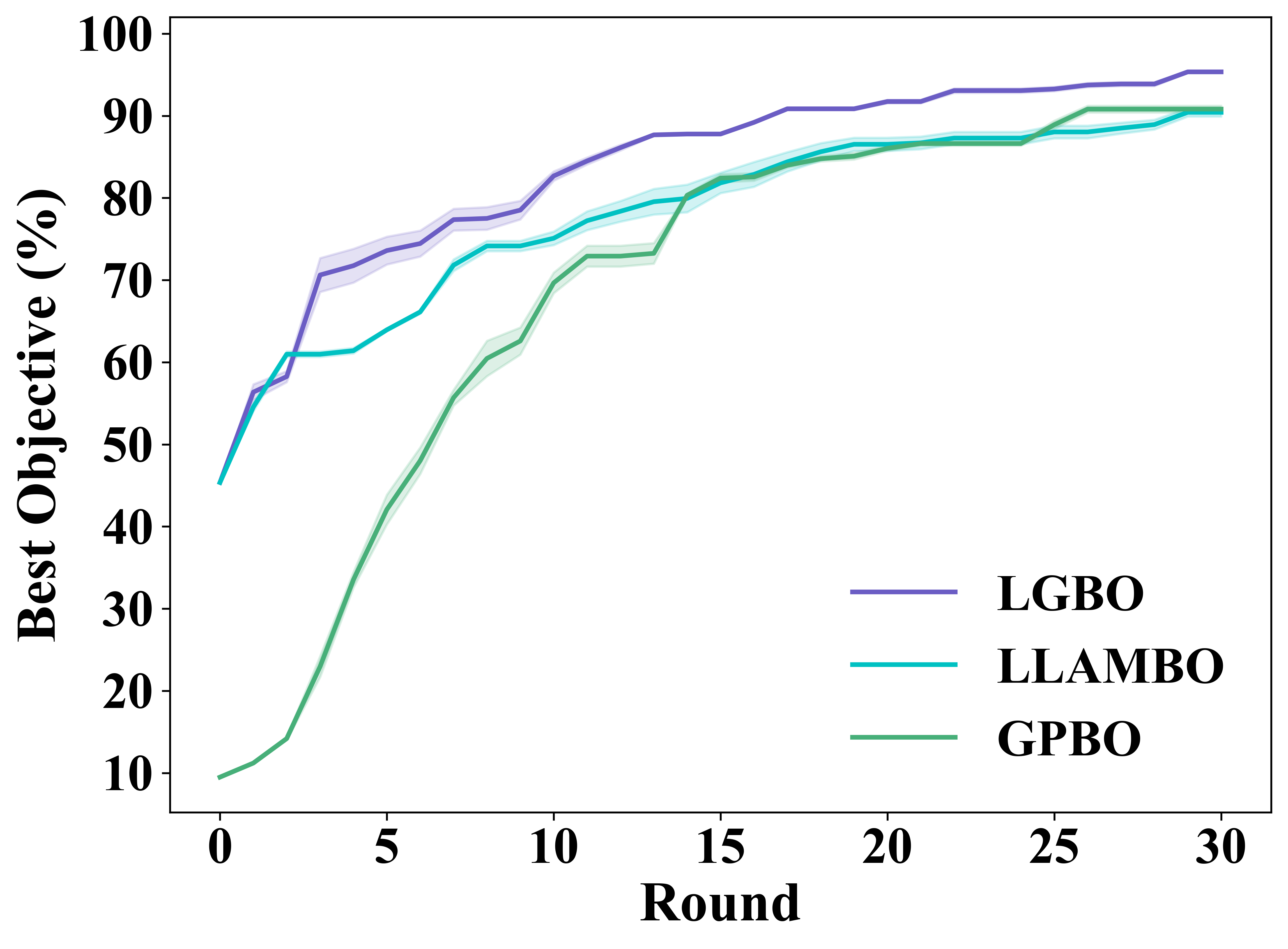}
  \end{subfigure}%
  \begin{subfigure}[t]{0.32\textwidth}
    \centering
    \includegraphics[width=\linewidth]{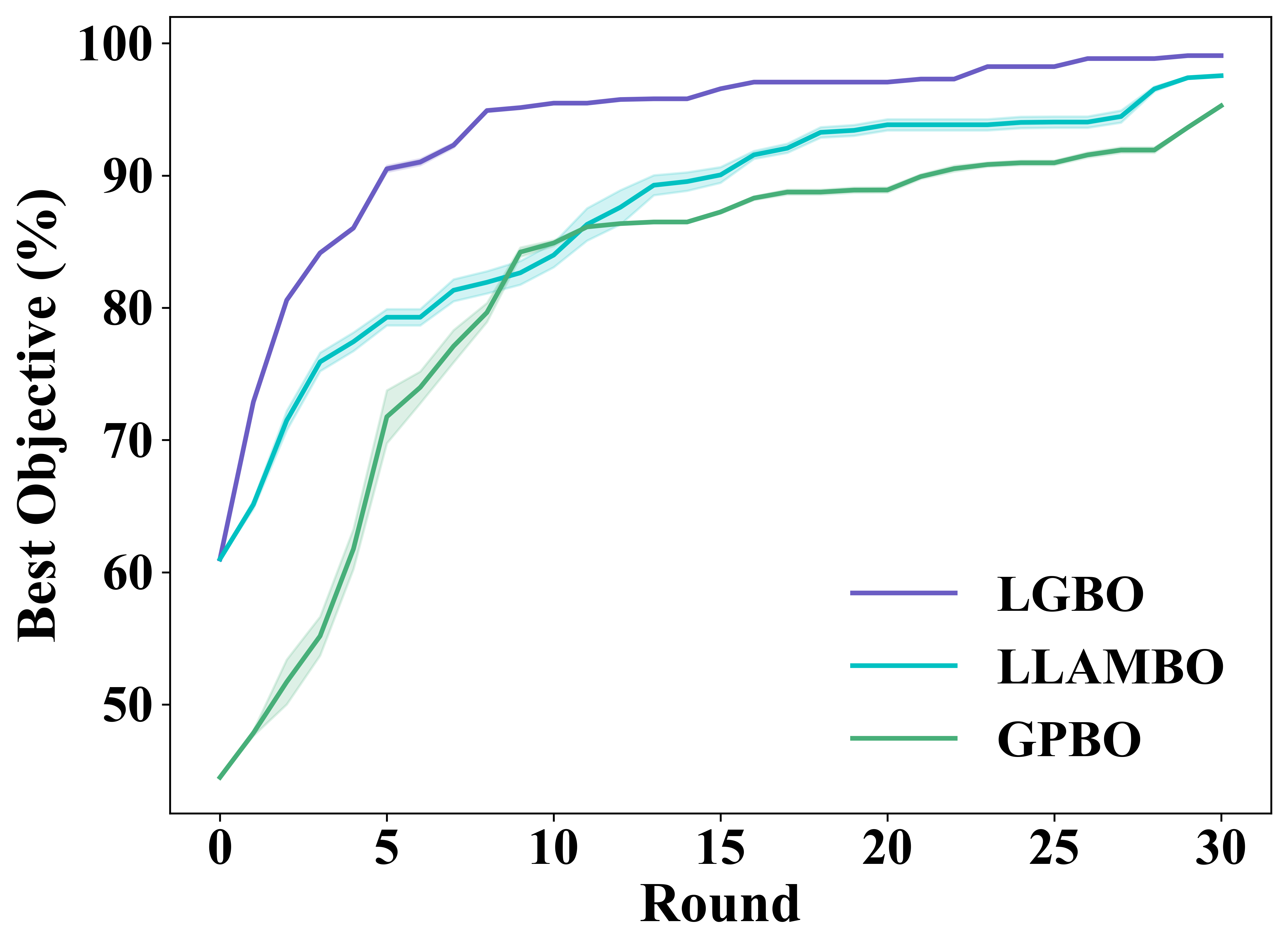}
  \end{subfigure}
  \begin{subfigure}[t]{0.32\textwidth}
    \centering
    \includegraphics[width=\linewidth]{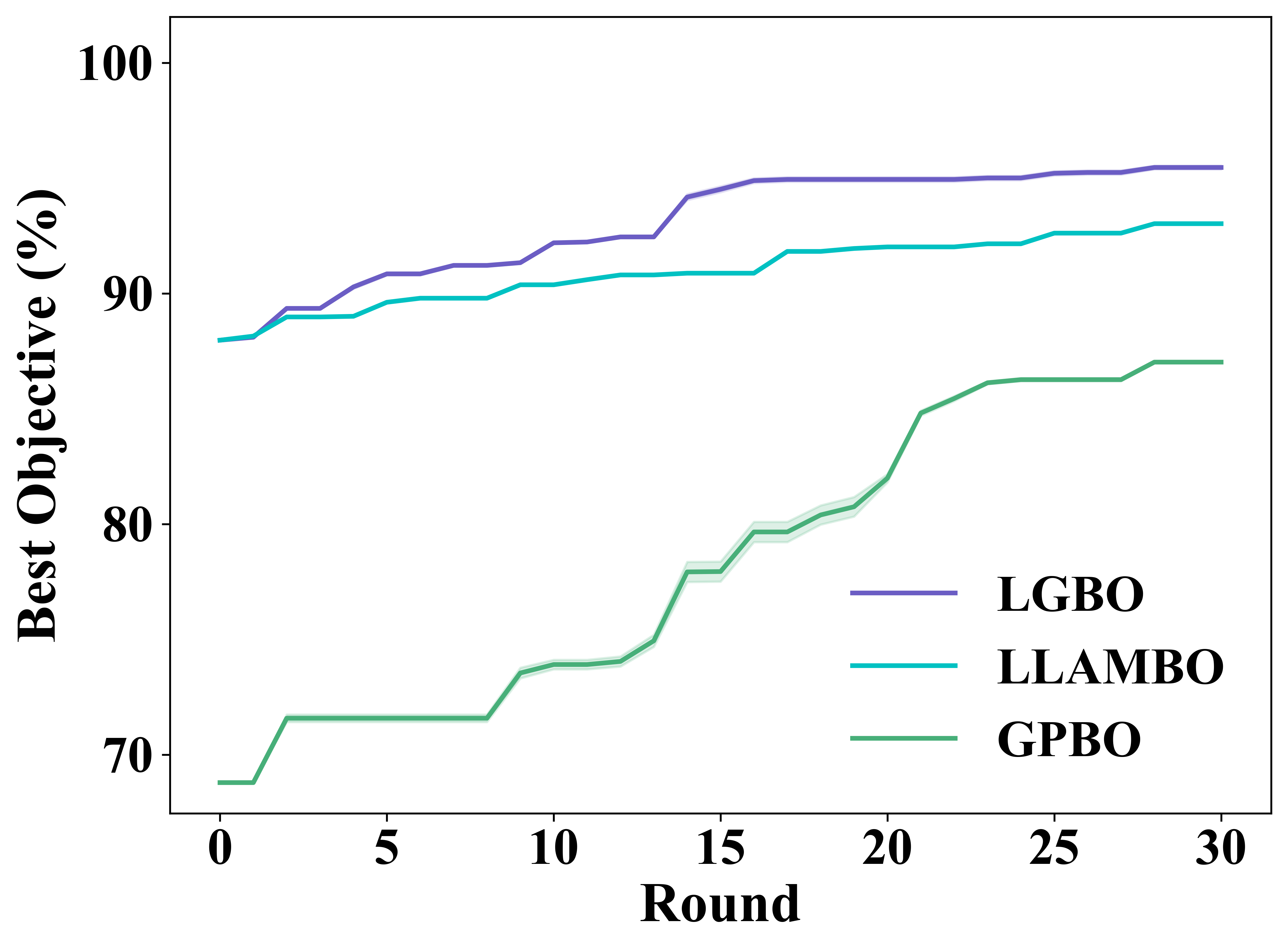}
  \end{subfigure}
    \caption{Convergence traces on the three dry benchmark tasks. 
  From left to right: \textbf{HPLC}, \textbf{Cross-barrel}, and \textbf{Concrete}.}\label{fig:dry}
  \end{figure}
\vspace{-2pt}
\paragraph{Wet experiment.} 
In the Fe-Cr electrolyte optimization (Figure \ref{fig:wet}), LGBO quickly identifies promising regions and by the sixth iteration surpasses 90\% of the best observed value, while the baselines require substantially more evaluations.
As optimization progresses, LGBO steadily refines its surrogate and concentrates on high-performing regions, yielding higher final performance and lower variance across runs. 
By contrast, GPBO and LLAMBO explore scattered, often suboptimal regions; the latter is further constrained by acquisition filtering, which prevents it from leveraging LLM guidance when the GP surrogate is crude. 
Together these results show that LGBO can effectively translate domain knowledge into accelerated and more reliable convergence.
\subsection{Ablation Experiences} 
To disentangle the contributions of different components in LGBO, we conduct two ablation experiments on the \textbf{HPLC} dataset, which serves as a representative benchmark beyond the main results.
\paragraph{Different LLM backbones.}  
To evaluate robustness, we replace Intern-S1 with backbones of different sizes and capabilities: 
\textbf{Intern-S1-mini-7B}, as well as two larger general-purpose large language models, \textbf{Qwen3-235B-Instruct} and \textbf{Qwen3-235B-Thinking}.  
As shown in Figure~\ref{fig:ablation}, larger models exhibit stronger stability, 
which is further enhanced when pretrained on scientific domains. 
Meanwhile, \textbf{Thinking} variants tend to achieve higher final values but converge more slowly, 
likely because their instruction-following ability is weaker under strict prompting compared to \textbf{Instruct} models.
These results demonstrate that LGBO is not tied to a specific LLM: larger or domain-pretrained models can 
provide richer guidance, but the framework itself remains broadly applicable across diverse backbones.

\vspace{-2pt}
    \paragraph{Random region lifting.}
We replace LLM suggestions with randomly lifted regions of matched size and confidence. 
To eliminate warm-start effects, LGBO is also initialized from Sobol points, identical to GPBO.
As shown in Figure~\ref{fig:ablation}, random guidance not only prevents fast convergence in the early stage but also forces the optimizer to spend more rounds on exploration. 
This confirms that LGBO’s advantage arises from informative semantic cues provided by the LLM, rather than from the lifting mechanism alone.

\begin{figure}[t]
    \includegraphics[width=\linewidth]{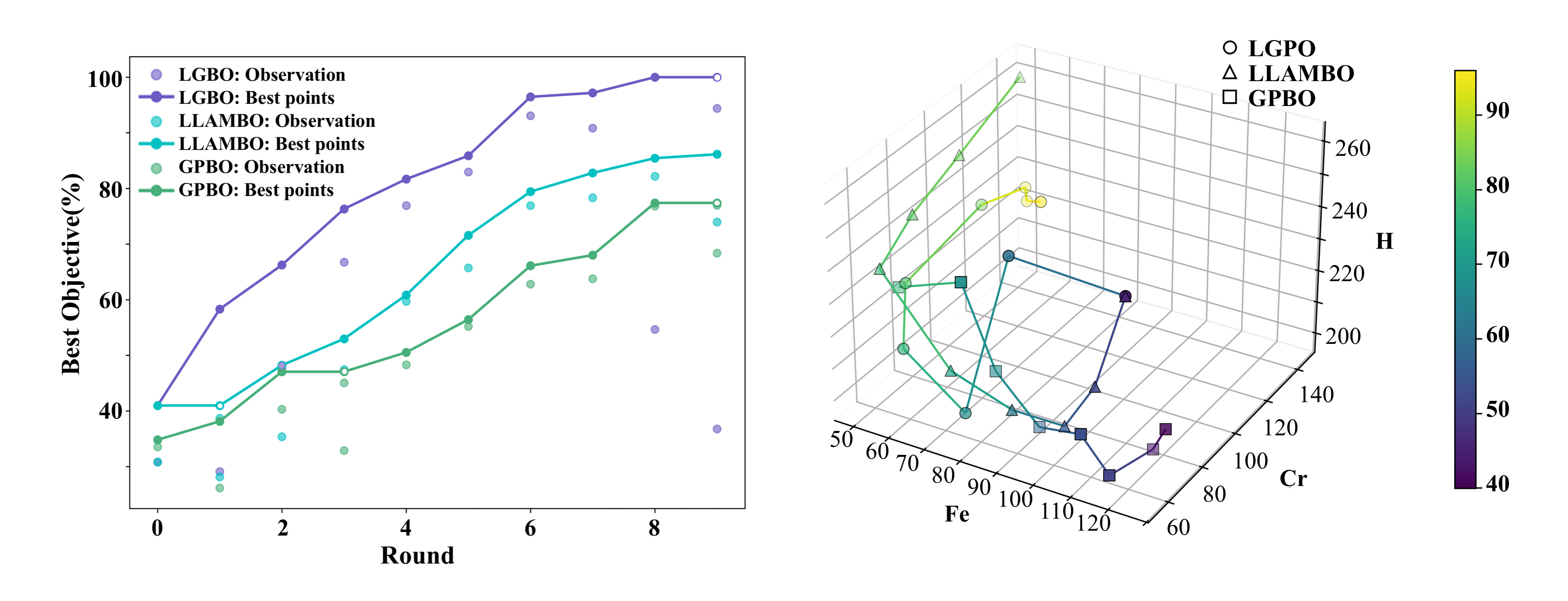}
  \caption{Wet-lab experiment results. 
\textbf{Left}: convergence traces across iteration rounds, showing observed values and best objective performance. The hollow markers indicate the historical best solutions observed up to that round.
\textbf{Right}: 3D optimization trajectories in the chemical concentration space (Fe, Cr, additive), 
with color indicating the best- objective.}\label{fig:wet}
\end{figure}
\begin{figure}[t]
  \centering
  \begin{subfigure}[t]{0.30\textwidth}
    \centering
    \includegraphics[width=\linewidth]{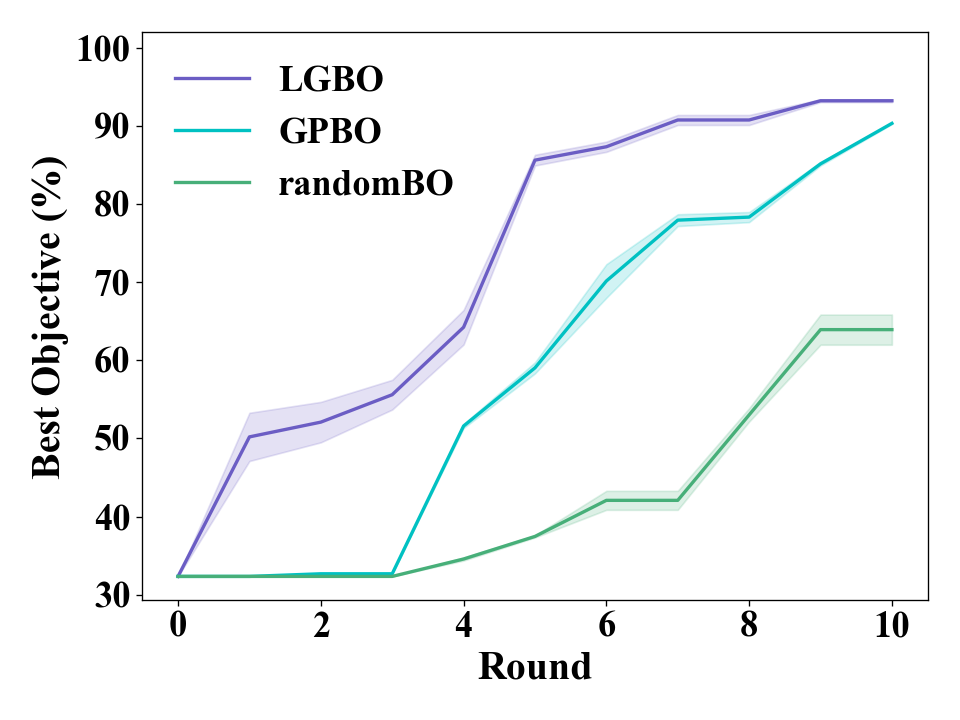}
  \end{subfigure}%
 \begin{subfigure}[t]{0.32\textwidth}
    \centering
    \includegraphics[width=\linewidth]{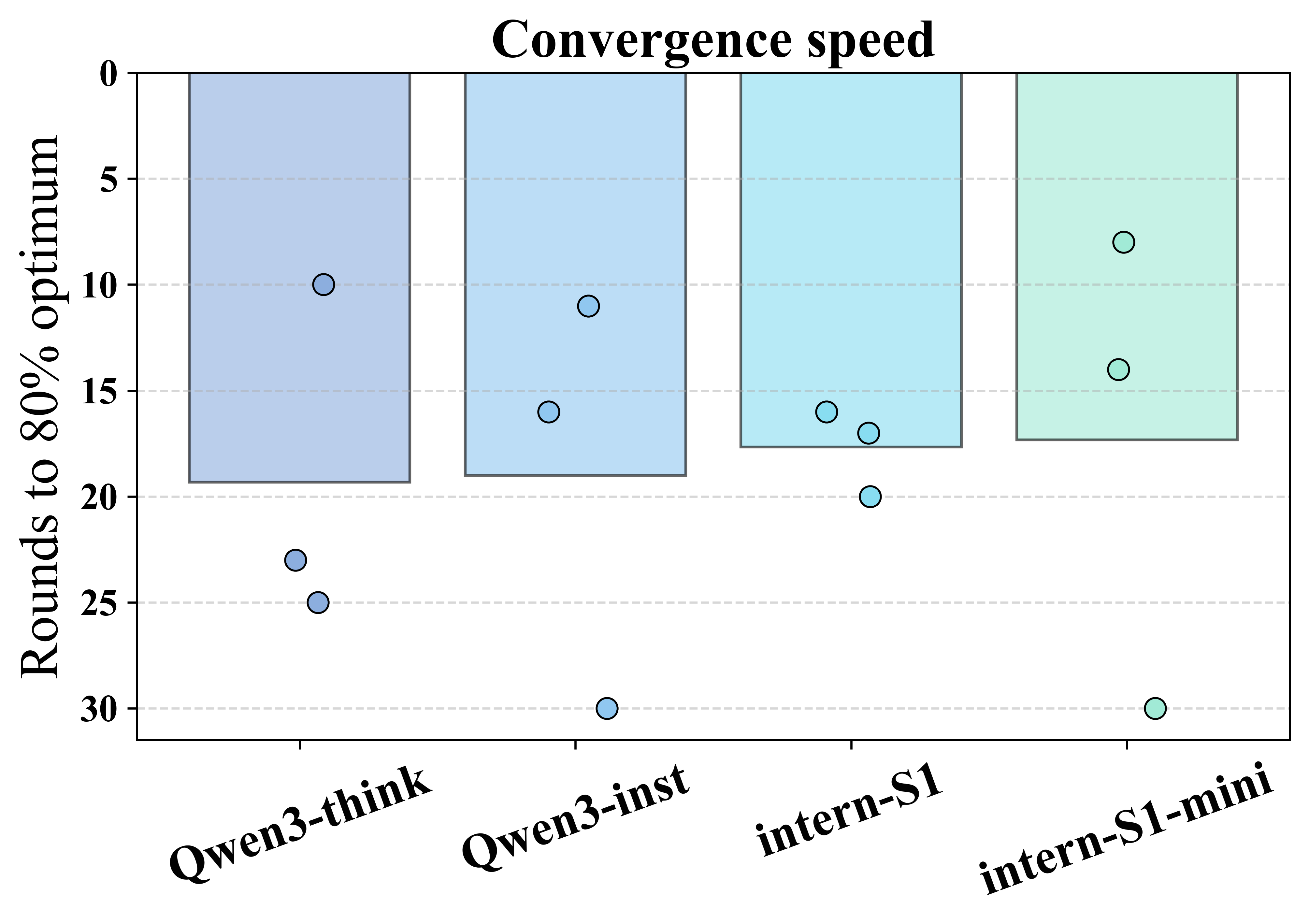}
  \end{subfigure}
  \begin{subfigure}[t]{0.32\textwidth}
    \centering
    \includegraphics[width=\linewidth]{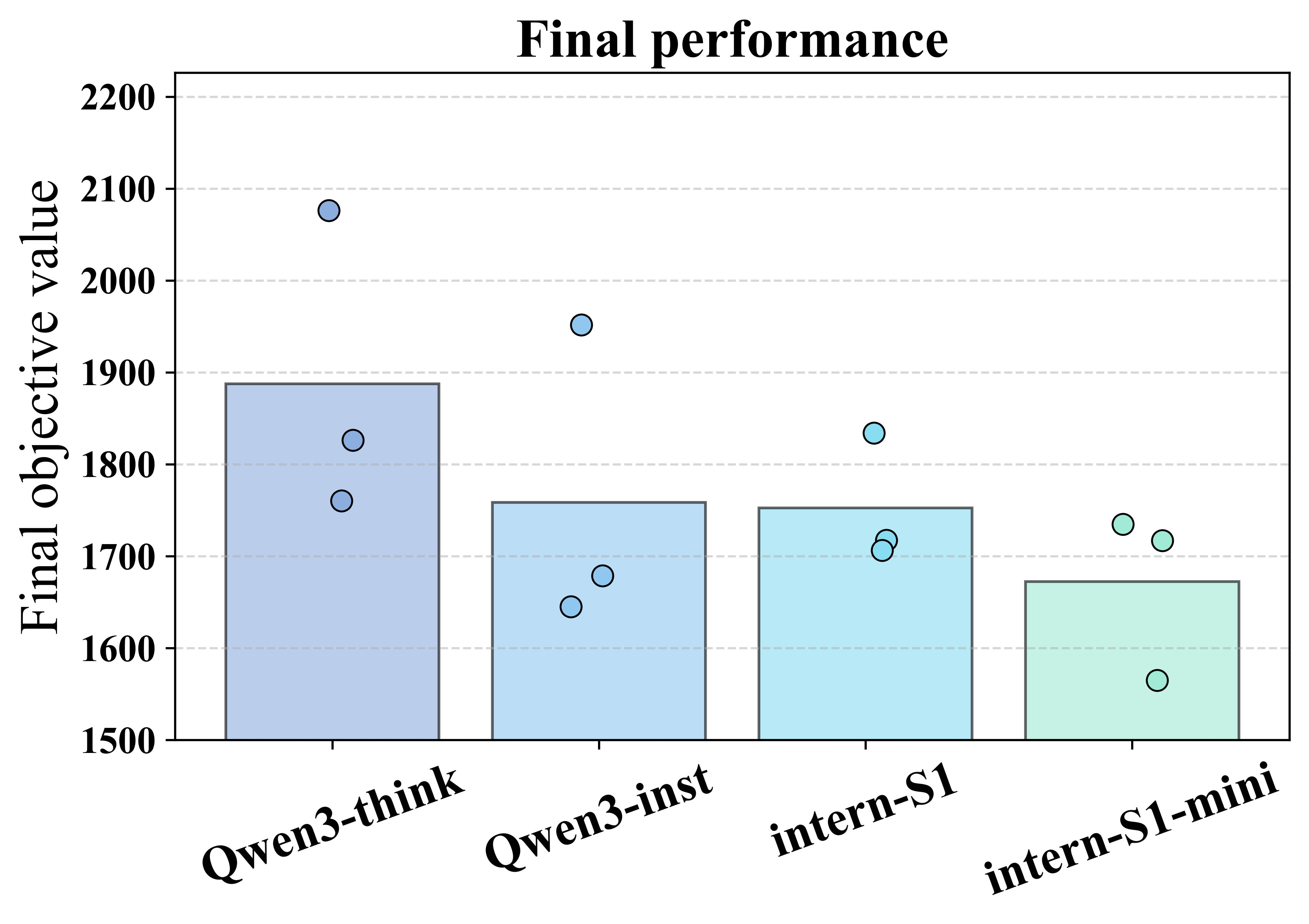}
  \end{subfigure}
     \caption{Ablation study results. 
\textbf{Left:} convergence traces comparing LGBO, GPBO, and Random region lifting BO on the HPLC task. 
\textbf{Middle and Right:} Different LLM backbones ablation experiment, where dots represent individual test runs and bars denote the corresponding means.}
\label{fig:ablation}
  \end{figure}
\vspace{-2pt}
\paragraph{On initialization.}  
Both LGBO and LLAMBO start from the same LLM-suggested initialization points, a common practice in recent literature. Yet, LLAMBO does not achieve the same acceleration as LGBO, indicating that the gain is not from initialization alone but from LGBO’s continuous integration of LLM preferences.  
    \vspace{-2pt}
    \section{Conclusion}
This work introduces \emph{LLM-Guided Bayesian Optimization}, a preference-driven framework that integrates LLMs into BO through a novel \emph{region-lifted preference} mechanism. LGBO provides continuous guidance while preserving the rigor of BO. We establish regret bounds showing that LGBO matches BO in the worst case and accelerates convergence when guidance aligns. Ablation studies confirm that its gains stem from the continuous incorporation of LLM preferences via region lifting, and the framework is broadly applicable across different LLM backbones. On four dry benchmarks and a new wet-lab Fe-Cr electrolyte task, LGBO achieves faster convergence, lower variance, and stronger robustness than GPBO and LLAMBO, demonstrating its promise for scientific discovery.
\section*{Acknowledgements}
This work was supported by the New Generation Artificial Intelligence National Science and Technology Major Project (No. 2025ZD0121802). It was also supported by the New Generation Artificial Intelligence National Science and Technology Major Project (No. 2025ZD0122901) and the National Natural Science Foundation of China (No. 62572313).
\bibliography{iclr2026_conference}

@inproceedings{chowdhury2017kernelized,
  title={On kernelized multi-armed bandits},
  author={Chowdhury, Sayak Ray and Gopalan, Aditya},
  booktitle={International Conference on Machine Learning},
  pages={844--853},
  year={2017},
  organization={PMLR}
}

@article{srinivas2009gaussian,
  title={Gaussian process optimization in the bandit setting: No regret and experimental design},
  author={Srinivas, Niranjan and Krause, Andreas and Kakade, Sham M and Seeger, Matthias},
  journal={arXiv preprint arXiv:0912.3995},
  year={2009}
}

@article{Xie2023autonomous,
  title={Toward autonomous laboratories: Convergence of artificial intelligence and experimental automation},
  author={Xie, Yunchao and Sattari, Kianoosh and Zhang, Chi and Lin, Jian},
  journal={Progress in Materials Science},
  volume={132},
  pages={101043},
  year={2023}
}

@article{Tom2024selfdriving,
  title={Self-driving laboratories for chemistry and materials science},
  author={Tom, Gary and Schmid, Stefan P and Baird, Sterling G and Cao, Yang and Darvish, Kourosh and Hao, Han and Lo, Stanley and Pablo-Garc{\'\i}a, Sergio and Rajaonson, Ella M and Skreta, Marta and others},
  journal={Chemical Reviews},
  volume={124},
  number={16},
  pages={9633--9732},
  year={2024}
}

@article{Seifrid2022sdl,
  title={Autonomous chemical experiments: Challenges and perspectives on establishing a self-driving lab},
  author={Seifrid, Martin and Pollice, Robert and Aguilar-Granda, Andres and Chan, Zamyla M and Hotta, Kazuhiro and Ser, Cher Tian and Vestfrid, Jenya and Wu, Tony C and Aspuru-Guzik, Alan},
  journal={Accounts of Chemical Research},
  volume={55},
  number={17},
  pages={2454--2466},
  year={2022}
}

@article{Chen2024array,
  title={Robot-assisted optimized array design for accurate multi-component gas quantification},
  author={Chen, Yangguan and Zhang, Longhan and Ai, Zhehong and Long, Yifan and Qi, Ji and Bao, Pengxiao and Jiang, Jing},
  journal={Chemical Engineering Journal},
  volume={496},
  pages={154225},
  year={2024}
}

@article{Ai2024gas,
  title={Customizable colorimetric sensor array via a high-throughput robot for mitigation of humidity interference in gas sensing},
  author={Ai, Zhehong and Zhang, Longhan and Chen, Yangguan and Meng, Yu and Long, Yifan and Xiao, Julin and Yang, Yao and Guo, Wei and Wang, Yueming and Jiang, Jing},
  journal={ACS Sensors},
  volume={9},
  number={8},
  pages={4143--4153},
  year={2024}
}

@article{Guo2023BOchem,
  title={Bayesian optimization for chemical reactions},
  author={Guo, Jeff and Rankovi{\'c}, Bojana and Schwaller, Philippe},
  journal={Chimia},
  volume={77},
  number={1/2},
  pages={31--38},
  year={2023}
}

@article{Shields2021reactionopt,
  title={Bayesian reaction optimization as a tool for chemical synthesis},
  author={Shields, Benjamin J and Stevens, Jason and Li, Jun and Parasram, Marvin and Damani, Farhan and Martinez Alvarado, Jesus I and Janey, Jacob M and Adams, Ryan P and Doyle, Abigail G},
  journal={Nature},
  volume={590},
  number={7844},
  pages={89--96},
  year={2021}
}

@article{Shahriari2015BOreview,
  title={Taking the human out of the loop: A review of Bayesian optimization},
  author={Shahriari, Bobak and Swersky, Kevin and Wang, Ziyu and Adams, Ryan P and De Freitas, Nando},
  journal={Proceedings of the IEEE},
  volume={104},
  number={1},
  pages={148--175},
  year={2015}
}

@inproceedings{Swersky2013multitask,
  title={No-regret algorithms for multi-task bayesian optimization},
  author={Chowdhury, Sayak Ray and Gopalan, Aditya},
  booktitle={International Conference on Artificial Intelligence and Statistics},
  pages={1873--1881},
  year={2021},
  organization={PMLR}
}

@inproceedings{Nayebi2019subspaces,
  title={A framework for Bayesian optimization in embedded subspaces},
  author={Nayebi, Amin and Munteanu, Alexander and Poloczek, Matthias},
  booktitle={International Conference on Machine Learning},
  pages={4752--4761},
  year={2019}
}

@article{Moriconi2020lowdim,
  title={High-dimensional Bayesian optimization using low-dimensional feature spaces},
  author={Moriconi, Riccardo and Deisenroth, Marc Peter and Kumar, KS Sesh},
  journal={Machine Learning},
  volume={109},
  pages={1925--1943},
  year={2020}
}

@inproceedings{Wang2020lamcts,
  title={Learning search space partition for black-box optimization using Monte Carlo tree search},
  author={Wang, Linnan and Fonseca, Rodrigo and Tian, Yuandong},
  booktitle={Advances in Neural Information Processing Systems},
  volume={33},
  pages={19511--19522},
  year={2020}
}

@inproceedings{gonzalez2017preferential,
  title={Preferential bayesian optimization},
  author={Gonz{\'a}lez, Javier and Dai, Zhenwen and Damianou, Andreas and Lawrence, Neil D},
  booktitle={International Conference on Machine Learning},
  pages={1282--1291},
  year={2017},
  organization={PMLR}
}

@inproceedings{liularge,
  title={Large Language Models to Enhance Bayesian Optimization},
  author={Liu, Tennison and Astorga, Nicol{\'a}s and Seedat, Nabeel and van der Schaar, Mihaela},
  booktitle={The Twelfth International Conference on Learning Representations}
}

@article{LLINBO,
  title={{LLINBO}: Trustworthy LLM-in-the-Loop Bayesian Optimization},
  author={Chang, Chih-Yu and Azvar, Milad and Okwudire, Chinedum and Kontar, Raed Al},
  journal={arXiv preprint arXiv:2505.14756},
  year={2025}
}

@inproceedings{ADO-LLM,
  title={Ado-llm: Analog design bayesian optimization with in-context learning of large language models},
  author={Yin, Yuxuan and Wang, Yu and Xu, Boxun and Li, Peng},
  booktitle={Proceedings of the 43rd IEEE/ACM International Conference on Computer-Aided Design},
  pages={1--9},
  year={2024}
}

@article{ReasoningBO,
  title={Reasoning BO: Enhancing Bayesian optimization with long-context reasoning power of LLMs},
  author={Yang, Zhuo and Wang, Daolang and Ge, Lingli and Wang, Beilun and Fu, Tianfan and Li, Yuqiang},
  journal={arXiv preprint arXiv:2505.12833},
  year={2025}
}

@inproceedings{Xu2024PrincipledCollabBO,
  title     = {Principled Bayesian Optimization in Collaboration with Human Experts},
  author    = {Wenjie Xu and Masaki Adachi and Colin N. Jones and Michael A. Osborne},
  booktitle = {Advances in Neural Information Processing Systems (NeurIPS)},
  year      = {2024},
  url       = {https://papers.nips.cc/paper_files/paper/2024/file/bc82dbfbfa43232be85b8d9838f49c3e-Paper-Conference.pdf}
}

@inproceedings{Hvarfner2024ColaBO,
  title     = {A General Framework for User-Guided Bayesian Optimization},
  author    = {Carl Hvarfner and Frank Hutter and Luigi Nardi},
  booktitle = {International Conference on Learning Representations (ICLR)},
  year      = {2024},
  url       = {https://openreview.net/forum?id=NjU0jtXcYn}
}

@article{Huang2022ActivelyElicitedExpert,
  title   = {Bayesian Optimization Augmented with Actively Elicited Expert Knowledge},
  author  = {Daolang Huang and Louis Filstroff and Petrus Mikkola and Runkai Zheng and Samuel Kaski},
  journal = {arXiv preprint arXiv:2208.08742},
  year    = {2022},
  url     = {https://arxiv.org/abs/2208.08742}
}

@article{Brochu2010tutorial,
  title={A tutorial on Bayesian optimization of expensive cost functions, with application to active user modeling and hierarchical reinforcement learning},
  author={Brochu, Eric and Cora, Vlad M and De Freitas, Nando},
  journal={arXiv preprint arXiv:1012.2599},
  year={2010}
}

@article{Bai2025InternS1,
  title        = {Intern-S1: A Scientific Multimodal Foundation Model},
  author       = {Lei Bai and Zhongrui Cai and Maosong Cao and Weihan Cao and Chiyu Chen and Haojiong Chen and Kai Chen and Pengcheng Chen and Ying Chen and Yongkang Chen and Yu Cheng and …},
  journal      = {arXiv preprint arXiv:2508.15763},
  year         = {2025},
  url          = {https://arxiv.org/abs/2508.15763}
}

@article{gongora2020bayesiancross,
  title={A Bayesian experimental autonomous researcher for mechanical design},
  author={Gongora, Aldair E and Xu, Bowen and Perry, Wyatt and Okoye, Chika and Riley, Patrick and Reyes, Kristofer G and Morgan, Elise F and Brown, Keith A},
  journal={Science advances},
  volume={6},
  number={15},
  pages={eaaz1708},
  year={2020},
  publisher={American Association for the Advancement of Science}
}

@article{chen2022optimizationlnp3,
  title={Optimization of lipid nanoformulations for effective m{RNA} delivery},
  author={Chen, Huiling and Ren, Xuan and Xu, Shi and Zhang, Dekui and Han, TiYun},
  journal={International journal of nanomedicine},
  pages={2893--2905},
  year={2022},
  publisher={Taylor \& Francis}
}

@article{hase2021olympushplc,
  title={Olympus: a benchmarking framework for noisy optimization and experiment planning},
  author={H{\"a}se, Florian and Aldeghi, Matteo and Hickman, Riley J and Roch, Lo{\"\i}c M and Christensen, Melodie and Liles, Elena and Hein, Jason E and Aspuru-Guzik, Al{\'a}n},
  journal={Machine Learning: Science and Technology},
  volume={2},
  number={3},
  pages={035021},
  year={2021},
  publisher={IOP Publishing}
}

@misc{concrete,
  author       = {Pratham Tripathi},
  title        = {Regression With Neural Networking Dataset},
  year         = {2020}, 
  howpublished = {\url{https://www.kaggle.com/datasets/prathamtripathi/regression-with-neural-networking}},
  note         = {Accessed: 2025-09-22}
}

@inproceedings{BOPRO,
  title={Searching for optimal solutions with LLMs via bayesian optimization},
  author={Agarwal, Dhruv and Arivazhagan, Manoj Ghuhan and Das, Rajarshi and Swamy, Sandesh and Khosla, Sopan and Gangadharaiah, Rashmi},
  booktitle={The Thirteenth International Conference on Learning Representations},
  year={2025}
}

@article{CAKE,
  title={Adaptive Kernel Design for Bayesian Optimization Is a Piece of CAKE with LLMs},
  author={Suwandi, Richard Cornelius and Yin, Feng and Wang, Juntao and Li, Renjie and Chang, Tsung-Hui and Theodoridis, Sergios},
  journal={arXiv preprint arXiv:2509.17998},
  year={2025}
}

@article{COF,
  title={Multi-fidelity Bayesian Optimization of Covalent Organic Frameworks for Xenon/Krypton Separations},
  author={Gantzler, Nickolas and Deshwal, Aryan and Rao Doppa, Janardhan and Simon, Cory},
  journal={Digital Discovery},
  year={2023}
}
\bibliographystyle{iclr2026_conference}
\newpage
\appendix
\section{Theoretical details}\label{app:the}
\begin{theorem}[Gaussian measure under exponential linear lift]\label{app:the2}
 Let $(E,\mathcal B)$ be a separable real topological vector space, $E^*$ its continuous dual, and $\mathbb P=\mathcal N(m,C)$ a Gaussian measure with mean $m\in E$ and covariance operator $C:E^*\to E$.  
For any continuous linear functional $\ell\in E^*$ and any $\lambda\in\mathbb R$, define the lifted measure
\[
\frac{d\mathbb P_\lambda}{d\mathbb P}(f)
= \exp\!\big(\lambda\,\ell(f)-\psi(\lambda)\big),\qquad
\psi(\lambda)=\log\mathbb E_\mathbb P[e^{\lambda \ell(f)}].
\]

Then $f|\mathbb P_\lambda$ is still Gaussian with unchanged covariance $C$, and mean shifted by
\[
m_\lambda = m + \lambda C\ell.
\]
Moreover, the expected lift of the functional is
\[
\Delta = \mathbb E_{\mathbb P_\lambda}[\ell(f)] - \mathbb E_\mathbb P[\ell(f)]
= \lambda\,\langle C\ell,\ell\rangle
= \lambda\,\mathrm{Var}_\mathbb P[\ell(f)].
\]
Thus a prescribed increment $\delta$ is achievable iff $\mathrm{Var}_\mathbb P[\ell(f)]>0$, in which case $\lambda=\Delta/\mathrm{Var}_\mathbb P[\ell(f)]$.
\end{theorem}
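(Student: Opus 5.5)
The plan is to identify $\mathbb P_\lambda$ through its characteristic functional. A Borel probability measure on a separable space is determined by its characteristic functional. So it suffices to show that, for every $h\in E^*$,
\[
\mathbb E_{\mathbb P_\lambda}\big[e^{i h(f)}\big]=\exp\!\Big(i\,h(m+\lambda C\ell)-\tfrac12\langle Ch,h\rangle\Big),
\]
which is the characteristic functional of $\mathcal N(m+\lambda C\ell,C)$.

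\textbf{Step 1: normalizer.} Under $\mathbb P$, the scalar $\ell(f)$ is Gaussian with mean $\ell(m)$ and variance $\langle C\ell,\ell\rangle$. Its moment generating function is therefore finite for every real $\lambda$, and
\[
\psi(\lambda)=\lambda\,\ell(m)+\tfrac12\lambda^2\langle C\ell,\ell\rangle .
\]
In particular the density $e^{\lambda\ell-\psi(\lambda)}$ is integrable with integral one, so $\mathbb P_\lambda$ is a well-defined probability measure.

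\textbf{Step 2: joint Gaussian computation.} For a fixed $h\in E^*$, the pair $(\ell(f),h(f))$ is a two-dimensional Gaussian vector. Its mean is $(\ell(m),h(m))$ and its covariance entries are $\langle C\ell,\ell\rangle$, $\langle C\ell,h\rangle=h(C\ell)$ and $\langle Ch,h\rangle$. We then compute
\[
\mathbb E_{\mathbb P}\big[e^{\lambda\ell(f)+ih(f)}\big]
\]
using the complex moment generating function of a bivariate Gaussian, i.e.\ analytic continuation of $\mathbb E[e^{s\ell+t h}]$ from real to complex $t$. This gives
\[
\exp\!\Big(\lambda\ell(m)+ih(m)+\tfrac12\lambda^2\langle C\ell,\ell\rangle+i\lambda\,h(C\ell)-\tfrac12\langle Ch,h\rangle\Big).
\]
Subtracting $\psi(\lambda)$ in the exponent leaves exactly $\exp\big(ih(m+\lambda C\ell)-\tfrac12\langle Ch,h\rangle\big)$. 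Hence $\mathbb P_\lambda=\mathcal N(m+\lambda C\ell,C)$: the covariance is unchanged and the mean is shifted by $\lambda C\ell$.

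\textbf{Step 3: expected lift.} Applying $\ell$ to the new mean gives
\[
\Delta=\ell(\lambda C\ell)=\lambda\langle C\ell,\ell\rangle=\lambda\,\mathrm{Var}_{\mathbb P}[\ell(f)].
\]
Alternatively, $\Delta=\psi'(\lambda)-\psi'(0)$, which serves as a consistency check. The last claim is then immediate. If the variance is positive, the map $\lambda\mapsto\Delta$ is a linear bijection of $\mathbb R$, so $\lambda=\delta/\mathrm{Var}$ achieves any prescribed increment $\delta$. If the variance is zero, then $\Delta\equiv0$ and no nonzero increment is achievable.

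\textbf{Main obstacle.} The delicate part is foundational rather than computational. In a general topological vector space we must make sure that:
\begin{itemize}
\item the covariance operator $C:E^*\to E$ exists, so that the mean $m+\lambda C\ell$ lies in $E$ (here this is part of the hypothesis);
\item characteristic functionals determine Borel measures, which needs separability or a Radon assumption so that cylinder sets generate $\mathcal B$.
\end{itemize}
I would state these as standing conventions, following Bogachev's \emph{Gaussian Measures}. The complex-exponent step can be handled cleanly either by analytic continuation in $t$ or by completing the square in the two-dimensional Gaussian density.

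Proposition~\ref{prop:tilt} then follows as a special case. Take $E=\mathbb R^{X\cup\mathcal G}$ and $\ell(F)=a^\top F_{\mathcal G}$, so that $C\ell$ has components $\sum_g a_g k(x,x_g)$.
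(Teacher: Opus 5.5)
Your proposal is correct and is essentially the paper's argument. Both reduce to a finite-dimensional Gaussian tilt computation: the paper completes the square for $(\phi_1(f),\dots,\phi_m(f),\ell(f))$, and you evaluate the complex moment generating function of $(\ell(f),h(f))$. Both then identify $\mathbb P_\lambda$ through its finite-dimensional projections. Here your appeal to uniqueness of the characteristic functional is the more accurate tool than the paper's ``Kolmogorov consistency,'' since $\mathbb P_\lambda$ already exists and only needs to be identified.

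Your caveats about cylinder versus Borel $\sigma$-algebras are refinements the paper glosses over. So is your observation that the final claim holds only for nonzero increments $\delta$, with $\lambda=\delta/\mathrm{Var}_{\mathbb P}[\ell(f)]$.
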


\begin{proof}
We present the argument via finite-dimensional projections.

For any finite set of functionals $\Phi=(\phi_1,\dots,\phi_m)\subset E^*$, the vector
\[
Y=(\phi_1(f),\dots,\phi_m(f),\,\ell(f))
\]
is multivariate Gaussian under $\mathbb P$.  
Lifting the density by $\exp(\lambda \ell(f))$ yields, by standard Gaussian completion-of-squares or cumulant-generating function arguments, that
\[
(\phi_1(f),\dots,\phi_m(f))^\top \;\sim\; 
\mathcal N\!\big( \mu_\Phi + \lambda\,\mathrm{Cov}_\mathbb P((\phi_i(f))_i,\ell(f)),\ \Sigma_\Phi \big).
\]
Here $\mu_\Phi=(\phi_i(m))_i$ and $\Sigma_\Phi=(\langle C\phi_i,\phi_j\rangle)_{ij}$.  
But
\[
\mathrm{Cov}_\mathbb P(\phi(f),\ell(f))=\langle C\ell,\phi\rangle,
\]
so the new mean is exactly $(\phi_i(m+\lambda C\ell))_i$, while the covariance remains $\Sigma_\Phi$.  
By Kolmogorov consistency, this determines a Gaussian measure $\mathcal N(m+\lambda C\ell, C)$ on $E$.  

Finally,
\[
\Delta=\ell(m+\lambda C\ell)-\ell(m)
=\lambda\,\langle C\ell,\ell\rangle,
\]
which equals $\lambda\,\mathrm{Var}_\mathbb P[\ell(f)]$, yielding the desired characterization of $\lambda$.
\end{proof}
\begin{lemma}[{\cite[Theorem~2]{chowdhury2017kernelized}}]\label{lem:rkhs_ucb}
Let $\mathcal X$ be compact and $k:\mathcal X\times\mathcal X\to\mathbb R$ a continuous kernel scaled so that $k(x,x)\le 1$. 
Assume the (possibly mean-centered) target function $f$ satisfies $f\in\mathcal H_k$ with $\|f\|_{\mathcal H_k}\le B$, and observations follow
\[
y_t \;=\; f(x_t)\;+\;\varepsilon_t,\qquad t=1,2,\ldots,
\]
where $\{\varepsilon_t\}$ are independent $R$-sub-Gaussian noise variables. 
Let $K_{t-1}=[k(x_i,x_j)]_{i,j=1}^{t-1}$, $k_{t-1}(x)=[k(x_1,x),\ldots,k(x_{t-1},x)]^\top$, and define the GP posterior (with noise variance $\sigma^2$ and prior mean $0$) by
\[
\mu_{t-1}(x)\;=\;k_{t-1}(x)^\top\!\big(K_{t-1}+\sigma^2 I\big)^{-1}\!y_{1:t-1},
\quad\]
\[
s_{t-1}^2(x)\;=\;k(x,x)-k_{t-1}(x)^\top\!\big(K_{t-1}+\sigma^2 I\big)^{-1}\!k_{t-1}(x),
\]
where $y_{1:t-1}=[y_1,\dots,y_{t-1}]^\top$. 
Let the (maximal) information gain be
\[
\gamma_t \;\coloneqq\; \max_{A\subset\mathcal X,\;|A|=t}\; \tfrac12\log\det\!\big(I+\sigma^{-2}K_A\big).
\]
Then for any $\delta\in(0,1)$, with probability at least $1-\delta$, simultaneously for all $t\ge 1$ and all $x\in\mathcal X$,
\[
\big|\,f(x)-\mu_{t-1}(x)\,\big|
\;\le\;
\Big(B + R\,\sqrt{\,2\big(\gamma_{t-1}+1+\ln\tfrac1\delta\big)}\Big)\;\, s_{t-1}(x).
\]
Equivalently, defining
\[
\beta_t(B)\;\coloneqq\; B^2 \;+\; 2R^2\!\left(\gamma_{t-1}+1+\ln\tfrac1\delta\right),
\]
we have 
\begin{equation}\label{equ:lem1}
    big|f(x)-\mu_{t-1}(x)\big|\le \sqrt{\beta_t(B)}\,s_{t-1}(x)
\end{equation}\ uniformly over all $t,x$ with probability at least $1-\delta$.
\end{lemma}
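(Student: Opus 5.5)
The plan is to follow the argument of \cite{chowdhury2017kernelized}: split the posterior error into a deterministic bias term and a stochastic noise term. The bias term is controlled by the RKHS norm bound $B$. The noise term is controlled by a self-normalized martingale inequality in the RKHS. Fix $t$ and $x$, and write $f_{1:t-1}=[f(x_1),\dots,f(x_{t-1})]^\top$ and $\varepsilon_{1:t-1}$ for the noise vector. Set $\alpha(x)=(K_{t-1}+\sigma^2 I)^{-1}k_{t-1}(x)$. Since $y_{1:t-1}=f_{1:t-1}+\varepsilon_{1:t-1}$, we have
\[
f(x)-\mu_{t-1}(x)=\big(f(x)-\alpha(x)^\top f_{1:t-1}\big)-\alpha(x)^\top\varepsilon_{1:t-1},
\]
and the two pieces will be bounded separately.

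\textbf{Bias term.} By the reproducing property, $f(x)-\alpha(x)^\top f_{1:t-1}=\langle f,\ k(\cdot,x)-\sum_i\alpha_i(x)k(\cdot,x_i)\rangle_{\mathcal H_k}$. Expanding the squared norm of the right-hand vector and writing $K=K_{t-1}$, $k=k_{t-1}(x)$, $\alpha=\alpha(x)$, I expect
\[
k(x,x)-2k^\top\alpha+\alpha^\top K\alpha
= s_{t-1}^2(x)-\sigma^2\|\alpha\|^2
\le s_{t-1}^2(x).
\]
Cauchy--Schwarz then gives a bias of at most $B\,s_{t-1}(x)$. This holds deterministically and simultaneously for all $t$ and $x$.

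\textbf{Noise term.} I will pass to the feature map $\Phi_{t-1}$ and the regularized operator $V_{t-1}=\Phi_{t-1}^\top\Phi_{t-1}+\sigma^2 I$. Then Cauchy--Schwarz in the $V_{t-1}$-geometry bounds $|\alpha(x)^\top\varepsilon_{1:t-1}|$ by
\[
\sigma^{-1}\, s_{t-1}(x)\,\big\|\Phi_{t-1}^\top\varepsilon_{1:t-1}\big\|_{V_{t-1}^{-1}}.
\]
In kernel form the last factor equals $\varepsilon^\top K(K+\sigma^2 I)^{-1}\varepsilon$. It remains to bound this self-normalized quantity uniformly in $t$. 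I will use the method of mixtures. First, $\exp\big(\langle h,\Phi^\top\varepsilon\rangle-\tfrac{R^2}{2}\|\Phi h\|^2\big)$ is a supermartingale for each fixed $h$, because the noise is conditionally $R$-sub-Gaussian given the past. Second, I integrate $h$ against a Gaussian prior; restricting to finite-dimensional projections $(\langle h,\phi(x_i)\rangle)_i$ keeps the computation tractable. Third, Ville's maximal inequality, applied through a stopping-time argument, makes the bound uniform over all $t$. This should yield, with probability at least $1-\delta$,
\[
\varepsilon^\top\big((K+\eta I)^{-1}+I\big)^{-1}\varepsilon\le 2R^2\log\frac{\sqrt{\det((1+\eta)I+K)}}{\delta}.
\]

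Finally I would choose the regularization parameter (Chowdhury--Gopalan take $\eta$ of order $1/T$, i.e.\ $\sigma^2=1+\eta$) and bound $\tfrac12\log\det\big(I+\sigma^{-2}K_{t-1}\big)\le\gamma_{t-1}$. The extra $\tfrac12\log\det((1+\eta)I)$ contributes $O(\eta t)$, which is absorbed into the additive $+1$. Combining the two bounds gives the factor $B+R\sqrt{2(\gamma_{t-1}+1+\ln(1/\delta))}$. The stated form $\beta_t(B)=B^2+2R^2(\cdots)$ then follows from $(a+b)^2\le 2a^2+2b^2$ up to the universal constant; I would check whether the paper's $\beta_t$ needs a factor of 2 here. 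I expect the main obstacle to be the self-normalized bound in the possibly infinite-dimensional RKHS. Care is needed in three places: defining the Gaussian mixture only on the span of the observed features, handling the random, adaptively chosen $x_t$ via the filtration, and reconciling the noise-variance $\sigma^2$ used in the posterior with the regularizer $1+\eta$ so that the constants come out as stated.
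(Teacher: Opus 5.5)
Your bias/noise split, the bias bound $B\,s_{t-1}(x)$, and the feature-space bound $\sigma^{-1}s_{t-1}(x)\,\|\Phi_{t-1}^\top\varepsilon_{1:t-1}\|_{V_{t-1}^{-1}}$ follow the Chowdhury--Gopalan argument, which is all the paper offers here, since it only cites that result. One small slip: it is the \emph{square} of that norm that equals $\varepsilon^\top K(K+\sigma^2 I)^{-1}\varepsilon$. The real problems are in the two items you leave as ``checks'', and neither is bookkeeping.

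First, the factor $\sigma^{-1}$ cannot be reconciled away for arbitrary $\sigma^2$. As stated, the lemma is false for small $\sigma$. Take $f\equiv 0$, one query $x_1$ with $k(x_1,x_1)=1$, $\varepsilon_1\sim\mathcal N(0,R^2)$, and look at $t=2$, $x=x_1$. Then $|f(x_1)-\mu_1(x_1)|=|\varepsilon_1|/(1+\sigma^2)$. The claimed bound is $\big(B+R\sqrt{2(\gamma_1+1+\ln\frac1\delta)}\big)\sigma/\sqrt{1+\sigma^2}$ with $\gamma_1\le\frac12\log(1+\sigma^{-2})$, and this tends to $0$ as $\sigma\to 0$. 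So the inequality fails with probability tending to $1$. What can be proved is the case $\sigma\ge1$ (C\&G's $\lambda=1+\eta$), or the general case with $R$ replaced by $R/\sigma$, and you should state that restriction.

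Moreover, plugging in C\&G's Theorem~1 with $\sigma^2=1+\eta$ leaves the term $\frac{t-1}{2}\log(1+\eta)$. It fits under the $+1$ only while $\eta(t-1)\le 2$, i.e.\ $t\le T$ for $\eta=2/T$. That is not ``for all $t\ge1$'', and it ties $\sigma^2$ to a horizon the lemma does not have.

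The clean fix is to rescale the mixture. Draw once, independently of the data, $h\sim\mathrm{GP}(0,(R\sigma)^{-2}k)$, so that conditionally on $x_{1:t}$ you have $h(x_{1:t})\sim\mathcal N(0,(R\sigma)^{-2}K_t)$; this also settles your concern about the span of the observed features. The mixed supermartingale is then $\det(I+\sigma^{-2}K_t)^{-1/2}\exp\big(\tfrac{1}{2R^2}\varepsilon_{1:t}^\top K_t(K_t+\sigma^2 I)^{-1}\varepsilon_{1:t}\big)$. This holds even for singular $K_t$, so no $\eta$ is needed. Ville's inequality gives $\varepsilon_{1:t}^\top K_t(K_t+\sigma^2 I)^{-1}\varepsilon_{1:t}\le 2R^2(\gamma_t+\ln\frac1\delta)$ for all $t$ simultaneously. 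Combined with your factor $\sigma^{-1}$, this yields $|f(x)-\mu_{t-1}(x)|\le\big(B+\frac{R}{\sigma}\sqrt{2(\gamma_{t-1}+\ln\frac1\delta)}\big)s_{t-1}(x)$ for every $t\ge1$, which implies the first display whenever $\sigma\ge1$.

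Second, the ``equivalently'' step is wrong, and the lemma has no hidden constant to absorb the discrepancy. From $(a+b)^2\le 2a^2+2b^2$ you get $B+R\sqrt{2(\gamma_{t-1}+1+\ln\frac1\delta)}\le\sqrt{2\beta_t(B)}$. Reaching $\sqrt{\beta_t(B)}$ would require $a+b\le\sqrt{a^2+b^2}$, which fails whenever $a,b>0$. So the $\beta_t$ display is strictly stronger than the first one. It is not what separate bounds on the bias and noise terms deliver, since the two can have the same sign. You should prove the second display with $\beta_t(B):=\big(B+R\sqrt{2(\gamma_{t-1}+1+\ln\frac1\delta)}\big)^2$ (or $2B^2+4R^2(\gamma_{t-1}+1+\ln\frac1\delta)$), and record that the stated $\beta_t$ is unjustified. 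This matters downstream, because the appendix regret theorem uses $\sqrt{\beta_t(B)}$ as its confidence width.
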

\begin{lemma}[{\cite[Lemma~5.3 \& Lemma~5.4]{srinivas2009gaussian}}]\label{lem:width_sum}
Under the same setting and notation as in Lemma~\ref{lem:rkhs_ucb} (with $k(x,x)\le 1$), for any chosen sequence $\{x_t\}_{t=1}^T$ the following hold:
\begin{enumerate}
\item (\emph{Information gain decomposition}) 
\[
I\big(y_{1:T}; f\big)
\;=\;
\tfrac12\sum_{t=1}^T \log\!\Big(1+\sigma^{-2}\,s_{t-1}^2(x_t)\Big).
\]
\item (\emph{Capped variance sum}) 
\[
\sum_{t=1}^T \min\!\Big\{\,1,\ \sigma^{-2}\,s_{t-1}^2(x_t)\Big\}
\;\le\; 2\,\gamma_T.
\]
\item (\emph{Width-sum bound}) Consequently,
\begin{equation}\label{equ:lem2}
    \sum_{t=1}^T s_{t-1}(x_t)
\;\le\; \sqrt{\,2T\,\gamma_T\,}.
\end{equation}
\end{enumerate}
\end{lemma}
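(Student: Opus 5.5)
The plan is to follow the information-theoretic argument of \citet{srinivas2009gaussian}, treating the three items in order, since each feeds the next. Throughout, write $u_t:=\sigma^{-2}s_{t-1}^2(x_t)$ for the normalized posterior variance at the queried point.

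For item 1, I will use the chain rule for mutual information together with the Gaussian entropy formula. Since $f$ is a GP and the noise is Gaussian with variance $\sigma^2$ in the working model, we have
\[
I(y_{1:T};f)=H(y_{1:T})-H(y_{1:T}\mid f)=\sum_{t=1}^T\big[H(y_t\mid y_{1:t-1})-H(y_t\mid f)\big].
\]
Conditioned on $y_{1:t-1}$, the observation $y_t$ is Gaussian with variance $\sigma^2+s_{t-1}^2(x_t)$, because $x_t$ is a deterministic function of past data. Moreover $y_t\mid f$ has variance $\sigma^2$. Each summand is therefore $\tfrac12\log(1+u_t)$, which gives the decomposition. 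Since $\{x_t\}_{t\le T}$ is one admissible set of size $T$ in the definition of $\gamma_T$, item 1 immediately yields
\[
\tfrac12\sum_t\log(1+u_t)\le\gamma_T .
\]

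For item 2, the key step is a scalar comparison between $\min\{1,u\}$ and $\log(1+u)$. Put $v_t=\min\{1,u_t\}\in[0,1]$. Concavity of $\log(1+\cdot)$ on $[0,1]$ gives $\log(1+v)\ge v\log 2$, and monotonicity gives $\log(1+v_t)\le\log(1+u_t)$. Summing and using item 1, I get
\[
\sum_t v_t\le\frac{1}{\log 2}\sum_t\log(1+u_t)\le\frac{2}{\log 2}\,\gamma_T .
\]
This is the main obstacle. The chord bound loses exactly the factor $1/\log 2\approx1.44$ relative to the stated $2\gamma_T$. It cannot be removed pointwise, because $v\le\log(1+v)$ fails for every $v>0$.

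To reach the constant $2\gamma_T$ as worded, I will not compare term by term. Instead I will exploit the slack in the maximization defining $\gamma_T$, by treating separately the rounds with $u_t\ge1$ and those with $u_t<1$.
\begin{itemize}
\item For rounds with $u_t\ge 1$, each contributes $\log(1+u_t)\ge\log 2$ to $2I$ but only $1$ to the left-hand side, so a deficit remains there as well.
\item For rounds with $u_t<1$, the sharper expansion $\log(1+u)\ge u-u^2/2\ge u/2$ yields $\sum_{u_t<1}u_t\le 2\sum\log(1+u_t)\le 4\gamma_T$. This also fails to give $2\gamma_T$.
\end{itemize}
My honest expectation is therefore that item 2 holds with the constant $2\gamma_T$ only under a normalization convention. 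One route is to read $\gamma_T$ in the $\log_2$ (bits) convention, under which $\tfrac{1}{\log 2}$ is absorbed. The other is to follow \citet{srinivas2009gaussian} and fold the factor into the constant in front of $\gamma_T$. I will state explicitly which convention is used and prove the inequality under it.

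For item 3, I will apply Cauchy--Schwarz, $\sum_t s_{t-1}(x_t)\le\sqrt{T\sum_t s_{t-1}^2(x_t)}$. Then I use $k(x,x)\le1$, which gives $s_{t-1}^2\le1$, so that
\[
s_{t-1}^2(x_t)=\min\{1,\sigma^2u_t\}\le\max\{1,\sigma^2\}\,\min\{1,u_t\}.
\]
Item 2 then gives $\sum_t s_{t-1}^2(x_t)\le 2\gamma_T$ precisely when $\sigma^2\le1$ (the standard normalization $\sigma^2\le k(x,x)\le 1$) and the convention from item 2 is in force. Under these conditions the result is exactly $\sqrt{2T\gamma_T}$. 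Without them, the same chain of inequalities produces the usual $\sigma$-dependent constant $1/\log(1+\sigma^{-2})$ of \citet[Lemma~5.4]{srinivas2009gaussian}, and I will record that as the unconditional form.
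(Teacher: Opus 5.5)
Your suspicion about item 2 is right, but the proposal stops one step short. Items 2 and 3 are not just hard to prove with these constants; they are false, with $\gamma_T$ in nats (consistent with the $\ln\tfrac1\delta$ term in $\beta_t$). So the planned ``slack in the maximization'' argument cannot succeed, and it should be replaced by a counterexample.

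Take $T=1$ and any kernel with $k(x,x)\equiv 1$, e.g.\ a unit-outputscale Mat\'ern kernel. Every admissible set is a single point carrying the same information, so the maximization has no slack: $\gamma_1=\tfrac12\ln(1+\sigma^{-2})$ and $s_0^2(x_1)=1$. Item 2 becomes $\min\{1,\sigma^{-2}\}\le\ln(1+\sigma^{-2})$, and item 3 becomes $1\le\sqrt{\ln(1+\sigma^{-2})}$. Both fail for every $\sigma^2>1/(e-1)$; at $\sigma^2=1$ they read $1\le\ln 2$ and $1\le\sqrt{\ln 2}$.

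At $\sigma^2=1$ your chord bound $\sum_t\min\{1,u_t\}\le\frac{2}{\ln 2}\gamma_T$ holds with equality. So the factor $1/\ln 2$ is the sharp constant, not an artifact of your method. Item 2 can even fail where item 3 holds. With the constant kernel $k\equiv 1$ on $[0,1]$, $\sigma^2=\tfrac12$ and $T=2$, you get $u_1=2$ and $u_2=\tfrac23$, so the left side is $\tfrac53>\ln 5=2\gamma_2$. Reading $\gamma_T$ in bits does not prove the stated lemma. It merely rewrites your chord bound, and it turns item 3 into $\sqrt{2T\gamma_T/\ln 2}$ in nats, which is weaker than what is claimed.

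The paper gives no argument beyond citing Srinivas et al., and the cited Lemma~5.4 does not support the stated constant either: it carries $C_1=8/\ln(1+\sigma^{-2})$. Concretely, since $0\le u_t\le\sigma^{-2}$ (because $k(x,x)\le 1$), concavity gives $s_{t-1}^2(x_t)=\sigma^2u_t\le\ln(1+u_t)/\ln(1+\sigma^{-2})$. Combining this with item 1 and Cauchy--Schwarz yields
\[
\sum_{t=1}^T s_{t-1}(x_t)\;\le\;\sqrt{\frac{2T\gamma_T}{\ln(1+\sigma^{-2})}}.
\]
This is your ``unconditional form,'' and it is the correct statement. It implies the stated $\sqrt{2T\gamma_T}$ exactly when $\sigma^2\le 1/(e-1)$, which is sharp by the $T=1$ example; the threshold is not $\sigma^2\le 1$. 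It also follows directly from item 1, not ``consequently'' from item 2. Routing item 3 through item 2 is genuinely lossy, as the constant-kernel example shows.

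The right write-up has three parts. First, prove item 1. It is cleanest as the deterministic identity $\tfrac12\ln\det(I+\sigma^{-2}K_T)=\tfrac12\sum_t\ln(1+u_t)$ for the realized points, since $s_{t-1}$ does not depend on the labels; this avoids interpreting $I(y_{1:T};f)$ under an adaptive design. Second, prove item 2 with constant $\frac{2}{\ln 2}\gamma_T$ and item 3 with the factor $1/\ln(1+\sigma^{-2})$. Third, flag that the bound $\sqrt{8T\beta_T\gamma_T}$ in Theorem~\ref{thm:tilted_ucb} inherits this missing factor unless $\sigma^2\le 1/(e-1)$.
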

\begin{theorem}[Lifted-GP-UCB: high-probability regret bound]\label{thm:tilted_ucb}
Let $\mathcal X$ be compact and $k$ continuous with $k(x,x)\le 1$. 
Let the true function $f\in\mathcal H_k$ with $\|f\|_{\mathcal H_k}\le B_0$, and observations $y_t=f(x_t)+\varepsilon_t$ where $\{\varepsilon_t\}$ are i.i.d.\ $R$-sub-Gaussian.
Fix a grid $\mathcal G=\{x_g\}_{g=1}^G$ and weights $a\in\mathbb R^G$, and define
\[
g(x)\;=\;\sum_{g=1}^G a_g\,k(x,x_g),\qquad 
\tau(x)\;=\;\lambda\,g(x),\qquad 
\|g\|_{\mathcal H_k}^2 \;=\; a^\top \Sigma_{\mathcal G\mathcal G} a,
\]
where $(\Sigma_{\mathcal G\mathcal G})_{gg'}=k(x_g,x_{g'})$.
Consider running GP regression on the \emph{residual} labels $y'_t:=y_t-\tau(x_t)$ (same kernel $k$ and noise variance $\sigma^2$), yielding posterior mean $\mu'_{t-1}$ and standard deviation $s_{t-1}$, and choose
\[
x_t \;\in\; \arg\max_{x\in\mathcal X}\ \mu'_{t-1}(x) + \sqrt{\beta_t(B)}\, s_{t-1}(x),
\]
where $\beta_t(B)=B^2+2R^2\big(\gamma_{t-1}+1+\ln\tfrac1\delta\big)$ and $\gamma_{t}$ is the maximal information gain.
Define the ``in-region'' and conservative radii
\[
B_{\mathrm{in}}\;:=\;\|\,f-\tau\,\|_{\mathcal H_k},
\qquad
B_{\mathrm{out}}\;:=\;B_0+|\lambda|\,\|g\|_{\mathcal H_k}.
\]
Then, with probability at least $1-\delta$, for all $T\ge 1$ the cumulative regret $R_T=\sum_{t=1}^T\big(f(x^\star)-f(x_t)\big)$ satisfies
\[
R_T \ \le\ 2\sqrt{\beta_T(B_{\mathrm{in}})}\sum_{t=1}^T s_{t-1}(x_t)
\ \le\ \sqrt{\,8T\,\beta_T(B_{\mathrm{in}})\,\gamma_T\,},
\]
and, without assuming alignment,
\[
R_T \ \le\ \sqrt{\,8T\,\beta_T(B_{\mathrm{out}})\,\gamma_T\,}.
\]
\end{theorem}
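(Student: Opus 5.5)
The plan is to reduce the lifted problem to standard GP-UCB on the residual function $h:=f-\tau$ and then run the usual regret argument, using Lemma~\ref{lem:rkhs_ucb} for concentration and Lemma~\ref{lem:width_sum} for summing widths.

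\textbf{Step 1: $h$ lies in the RKHS.} Since $g(x)=\sum_g a_g k(x,x_g)$ is a finite kernel expansion, $g\in\mathcal H_k$. The reproducing property gives $\|g\|_{\mathcal H_k}^2=\sum_{g,g'}a_ga_{g'}k(x_g,x_{g'})=a^\top\Sigma_{\mathcal G\mathcal G}a$. Hence $h=f-\lambda g\in\mathcal H_k$. By definition $\|h\|_{\mathcal H_k}=B_{\mathrm{in}}$, and the triangle inequality gives $B_{\mathrm{in}}\le \|f\|_{\mathcal H_k}+|\lambda|\|g\|_{\mathcal H_k}\le B_{\mathrm{out}}$.

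\textbf{Step 2: residual labels are a standard bandit problem.} The residual labels satisfy $y'_t=h(x_t)+\varepsilon_t$ with the same $R$-sub-Gaussian noise. Since $\tau(x_t)$ is a deterministic function of the chosen point, it does not interfere with the martingale structure used in Lemma~\ref{lem:rkhs_ucb}. Apply that lemma to $h$ with radius $B\ge B_{\mathrm{in}}$, taking $B=B_{\mathrm{in}}$ in the first bound and $B=B_{\mathrm{out}}$ in the second. With probability at least $1-\delta$, for all $t$ and $x$,
\[
|h(x)-\mu'_{t-1}(x)|\le\sqrt{\beta_t(B)}\,s_{t-1}(x).
\]

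\textbf{Step 3: per-round regret.} I will run the UCB argument on $h$ rather than on $f$; this is the delicate point. The cleanest route is to define regret with respect to $h$, or equivalently to note that the acquisition $\mu'_{t-1}+\tau+\sqrt{\beta_t}s_{t-1}$ is a UCB for $f$. The intended algorithm is the mean-shifted GP with prior mean $\tau$, so its posterior mean is $\mu'_{t-1}+\tau$. Its confidence bound is
\[
|f-(\mu'_{t-1}+\tau)|=|h-\mu'_{t-1}|\le\sqrt{\beta_t}\,s_{t-1}.
\]
Then the standard chain applies:
\[
f(x^\star)\le \mu'_{t-1}(x^\star)+\tau(x^\star)+\sqrt{\beta_t}s_{t-1}(x^\star)\le \mu'_{t-1}(x_t)+\tau(x_t)+\sqrt{\beta_t}s_{t-1}(x_t)\le f(x_t)+2\sqrt{\beta_t}s_{t-1}(x_t).
\]
So I will interpret the argmax in the statement as over the shifted mean, which is the LGBO acquisition. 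I expect this identification, making sure the selection rule's mean includes $\tau$, to be the main obstacle. If the literal rule without $\tau$ is insisted upon, the bound instead holds for regret measured on $h$, and I would note this explicitly.

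\textbf{Step 4: sum the widths.} Use that $\beta_t$ is nondecreasing in $t$ (since $\gamma_t$ is), and sum over $t$:
\[
R_T\le 2\sqrt{\beta_T(B)}\sum_t s_{t-1}(x_t).
\]
Equation~\eqref{equ:lem2} of Lemma~\ref{lem:width_sum} gives $\sum_t s_{t-1}(x_t)\le\sqrt{2T\gamma_T}$, hence $R_T\le\sqrt{8T\beta_T(B)\gamma_T}$. Taking $B=B_{\mathrm{in}}$ gives the first bound. Since $\beta_T$ is monotone in $B$ and $B_{\mathrm{in}}\le B_{\mathrm{out}}$ by Step~1, the bound with $B_{\mathrm{out}}$ follows; this is the alignment-free worst case.
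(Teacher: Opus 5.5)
Your skeleton matches the paper's. You pass to $h=f-\tau$ (the paper's $f'$), note that $y'_t=h(x_t)+\varepsilon_t$ so Lemma~\ref{lem:rkhs_ucb} applies directly, run the UCB chain, and sum widths with Lemma~\ref{lem:width_sum}. You diverge only at the step you flag as delicate, and there your version is the right one.

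The paper keeps the literal rule $x_t\in\arg\max_x \mu'_{t-1}(x)+\sqrt{\beta_t(B)}\,s_{t-1}(x)$ and derives $h(x^\star)-h(x_t)\le 2\sqrt{\beta_t(B)}\,s_{t-1}(x_t)$. It then writes $r_t=f(x^\star)-f(x_t)=h(x^\star)-h(x_t)$, which silently drops $\tau(x^\star)-\tau(x_t)$. So the paper's argument only bounds regret measured on $h$, which is your fallback.

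The literal statement is in fact false in general. Suppose $f$ is uniquely maximized at $x^\star$ and the lift is placed there, with $a\ge 0$ and $\lambda$ large. Then the lift depresses $h$ exactly in the correctly preferred region, and every maximizer of $h$ has $f$-value at most $f(x^\star)-\Delta$ for some $\Delta>0$. On the event of Lemma~\ref{lem:rkhs_ucb}, the $h$-regret is $O(\sqrt T\,\gamma_T)$. By continuity and compactness, all but $o(T)$ queries then land where $f\le f(x^\star)-\Delta/2$. Hence $R_T=\Omega(T)$, which eventually exceeds $\sqrt{8T\beta_T\gamma_T}$ whenever $\gamma_T=o(\sqrt T)$ (e.g.\ the squared-exponential kernel).

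Your repair is to maximize $\mu'_{t-1}+\tau+\sqrt{\beta_t}\,s_{t-1}$. This is exactly the posterior of the GP with prior mean $\tau$, i.e.\ the mean shift of Proposition~\ref{prop:tilt} that LGBO implements. With it, the chain $f(x^\star)\le\mu'_{t-1}(x_t)+\tau(x_t)+\sqrt{\beta_t}\,s_{t-1}(x_t)\le f(x_t)+2\sqrt{\beta_t}\,s_{t-1}(x_t)$ is sound. Your Step~1 also supplies what the paper only asserts for the conservative bound: $B_{\mathrm{in}}\le B_{\mathrm{out}}$ by the triangle inequality, together with monotonicity of $\beta_T$ in $B$.

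One caveat you share with the paper concerns the parameter $B$. The chain needs the width inside the $\arg\max$ to dominate the concentration width. So the regret bound is in terms of the $B$ the algorithm is actually run with, and that $B$ must be at least $B_{\mathrm{in}}$. The $B_{\mathrm{in}}$ bound therefore presupposes running with the oracle value $B=B_{\mathrm{in}}$, which depends on the unknown $f$. Run with the computable $B=B_{\mathrm{out}}$, the algorithm only inherits the second bound. Your phrase ``taking $B=B_{\mathrm{in}}$ in the first bound and $B=B_{\mathrm{out}}$ in the second'' should be read as a choice of the algorithm's parameter, not just of the radius fed to Lemma~\ref{lem:rkhs_ucb}.
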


\begin{proof}
Let $f'(x):=f(x)-\tau(x)$. By Lemma~\ref{lem:rkhs_ucb}, for any $B\ge 0$ such that $\|f'\|_{\mathcal H_k}\le B$,
\[
|\,f'(x)-\mu'_{t-1}(x)\,|\ \le\ \sqrt{\beta_t(B)}\, s_{t-1}(x),
\quad \forall x\in\mathcal X,\ \forall t\ge 1,
\]
holds with probability at least $1-\delta$. By the selection rule,
\[
f'(x^\star)\ \le\ \mu'_{t-1}(x^\star)+\sqrt{\beta_t(B)}\,s_{t-1}(x^\star)
\ \le\ \mu'_{t-1}(x_t)+\sqrt{\beta_t(B)}\,s_{t-1}(x_t),
\]
and
\[
f'(x_t)\ \ge\ \mu'_{t-1}(x_t)-\sqrt{\beta_t(B)}\,s_{t-1}(x_t).
\]
Thus the instantaneous regret
\[
r_t \;=\; f(x^\star)-f(x_t)\;=\; f'(x^\star)-f'(x_t)
\ \le\ 2\sqrt{\beta_t(B)}\,s_{t-1}(x_t)
\ \le\ 2\sqrt{\beta_T(B)}\,s_{t-1}(x_t).
\]
Summing over $t$ and applying Lemma~\ref{lem:width_sum} gives
\[
R_T \ \le\ 2\sqrt{\beta_T(B)}\sum_{t=1}^T s_{t-1}(x_t)
\ \le\ 2\sqrt{\beta_T(B)}\cdot \sqrt{2T\,\gamma_T}
\ =\ \sqrt{8T\,\beta_T(B)\,\gamma_T}.
\]

When the LLM provides unreasonable regions, we have 
\[R_T 
\ \le\ 2\sqrt{\beta_T(B_{out})}\cdot \sqrt{2T\,\gamma_T}
\ =\ \sqrt{8T\,\beta_T(B_0+\lambda\|g\|)\,\gamma_T}.\]

Now, we focus on the bound of $B_{in}$. Let $h=f-\tau$, $c=\frac{\big<h,g\big>}{\|h\|\|g\|}$. If LLM provides a suitable lifting area, we have $c>0$. Let $\lambda=\frac{cB_0}{\|g\|}$ we have  
\[B^2_{\mathrm{in}}=\|f-\tau\|_{\mathcal H_k}=(\lambda\|g\|-cB_0)^2+B_0^2(1-c^2)=B_0^2(1-c^2).\]
So, we have
\[R_T \ \le\ 2\sqrt{\beta_T(B_{in})}\cdot \sqrt{2T\,\gamma_T}
\ =\ \sqrt{8T\,\beta_TB_0\sqrt{1-c^2}\,\gamma_T}.\]
\end{proof}

\section{Prompt}
\label{app:prompt}
To ensure stability, reproducibility, and transparency, we design the prompt in a \emph{modular} fashion, consisting of a fixed \textit{system prompt} and a dynamic \textit{user prompt}. The system prompt encodes general scientific principles and strict output rules that remain invariant across tasks, while the user prompt provides experiment-specific background and historical context. This separation allows LGBO to (i) maintain consistent reasoning discipline, (ii) incorporate domain knowledge explicitly, and (iii) adapt flexibly to different experimental datasets.  

\paragraph{User prompt.}  
The user prompt augments the fixed rules with dataset-specific context. 
It defines the experimental background, parameter order, optimization objectives, and constraints. 
In addition, it incorporates both the \emph{experimental history} (latest observations) 
and the \emph{reasoning trace} from the previous round, enabling iterative refinement while avoiding overfitting to past data.  
To illustrate the structure, a schematic excerpt of the user prompt is shown below:  

\begin{lstlisting}
[Background]
- Experiment type & purpose: ...
- Parameter order (d=):...
- Objective: Minimize f(x) (single objective).
- Constraints: All parameters real-valued, dimensionless;
  respect declared order and bounds; do not normalize.
- Bounds: ...
[Review]
- Historical data (newest first):  ...
-Thinking from the previous round:...
- Adoption note: Suggestions were used as guidance; 
                 actual tested points may differ.
\end{lstlisting}
\paragraph{System prompt.}  
The system prompt frames the LLM as a scientist specializing in experimental optimization. It establishes an \emph{evidence hierarchy}—prioritizing mechanistic background knowledge over historical data—and specifies strict output formats to ensure valid and auditable responses. The output must be either a point or a region with a confidence score, following the declared parameter order and physical units. To mitigate collapse, the prompt explicitly forbids anchoring on past best points unless mechanistically justified.  
Below is the core structure:  
\newpage
\begin{lstlisting}

# Evidence hierarchy (critical)
- PRIMARY: Background knowledge, physical/chemical mechanisms, constraints, and units.
- SECONDARY (auxiliary only): Historical trial points/observations and thinking in the review.
- If background implications conflict with historical points, SIDE WITH BACKGROUND.

# Background (fixed across rounds)
- We run iterative chemistry experiments (e.g., polymerization/hydrolysis/organic synthesis).
- Parameters are physical and NOT normalized. Always use the declared units & order.

# Modes (pick exactly ONE)
1) [point, [x1, x2, ..., xd], ccc]
2) [region, [[lb1, lb2, ..., lbd],
             [ub1, ub2, ..., ubd]], ccc]
- ccc $\in$ [0,1].
- In region mode, each dimension must have (lb $\leq$ ub) and follow the declared parameter order & units.
- For categorical variables, output their literal value (e.g., "DMF") in both point and region.
  If a category is fixed in region, set lb=ub to that same literal value.

# How to reason (prioritize background over past points)
- Start from first principles: mechanism-driven trends, feasible/unsafe ranges, known monotonicities, interactions.
- Use historical data ONLY as weak corroboration or disproof of a background-based hypothesis.
- Do NOT anchor on previous best/nearest points; avoid proposing a point merely because it appeared before.
- If historical points cluster narrowly, consider a background-justified exploratory move (e.g., shift in a mechanism-relevant factor).
- Prefer REGION when background suggests multiple nearby settings could satisfy the mechanistic target; choose POINT only when background+data imply a sharp optimum.

# Output protocol (two blocks)
1) Thinking:
   - Be concise but informative, in this order:
     (a) Background-based rationale (mechanism/constraints) that leads to your proposal.
     (b) How (if at all) historical data supports/contradicts this mechanism ($\leq2 sentences).
     (c) Why point vs region given the mechanism and uncertainty.
2) Final Answer:
   - Strict structure with no extra words:
     [point, [x1, x2, ..., xd], ccc]
     OR
     [region, [[lb1, lb2, ..., lbd],
               [ub1, ub2, ..., ubd]], ccc]

# Hard constraints
- Do NOT normalize or re-order parameters.
- Keep units consistent with the declared parameter order.
- No extra commentary in Final Answer beyond the bracketed structure.
# Anti-collapse checks
- Never center a region or point on a past observation unless mechanistically justified.
- If you reuse a past setting, explicitly state the mechanism that makes it optimal (in Thinking).
\end{lstlisting}

\paragraph{Summary.}  
Together, the system and user prompts enforce a disciplined reasoning protocol: the system ensures \emph{scientific validity and structural consistency}, while the user prompt injects \emph{contextual knowledge and history}. This modularity makes LGBO’s prompting scheme both transparent and easily transferable to new experimental domains.

\section{Dry experiment datasets}
\label{app:datasets}

We provide detailed specifications of the four dry-experiment datasets used in our study. 
Each dataset is summarized with variable types and ranges, and the corresponding optimization objective. 
Together they span formulation science, structural design, materials engineering, and process control, 
offering a diverse and representative testbed for Bayesian optimization.

\paragraph{Lipid Nanoparticle Formulation) \citep{chen2022optimizationlnp3}.}
This dataset contains 768 samples describing lipid nanoparticle (LNP) formulations for cannabidiol (CBD) delivery.  
It has 5 input variables and 3 objectives. We perform 2D one-hot interpolation for solid lipid based on whether it is an acid.
Because the dataset fully enumerates all variable combinations, no interpolation is required.

\begin{table}[h]
\centering
\caption{LNP3 dataset specification.}
\label{tab:lnp3}
\begin{tabular}{lll}
\toprule
Variable & Type & Range / Options \\
\midrule
drug\_input & discrete & \{6, 12, 24, 48\} \\
solid\_lipid & categorical & \{Stearic\_acid,\ Compritol\_888, Glyceryl\_monostearate\} \\
solid\_lipid\_input & discrete & \{120, 108, 96, 72\} \\
liquid\_lipid\_input & discrete & \{0, 12, 24, 48\} \\
surfactant\_input & discrete & \{0, 0.0025, 0.005, 0.01\} \\
\midrule
drug\_loading & objective & maximize \\
encapsulation\_efficiency & objective & maximize \\
particle\_diameter & objective & minimize \\
\bottomrule
\end{tabular}
\end{table}

In our experiments, each objective is normalized to $[0,1]$ by its observed range, 
and the three normalized objectives are summed to define a single scalar objective.

\paragraph{3D-printed structural design Cross-barrel \citep{gongora2020bayesiancross}.}
This dataset consists of 600 samples characterizing the performance of crossed-barrel 3D-printed structures.  
Interpolation (multilinear grid or kNN+IDW fallback) is applied to enable continuous evaluation.

\begin{table}[h]
\centering
\caption{Cross-barrel dataset specification.}
\label{tab:crossbarrel}
\begin{tabular}{lll}
\toprule
Variable & Type & Range \\
\midrule
$n$ & continuous & [6.0, 12.0] \\
$\theta$ & continuous & [0.0, 200.0] \\
$r$ & continuous & [1.5, 2.5] \\
$t$ & continuous & [0.7, 1.4] \\
\midrule
toughness & objective & maximize \\
\bottomrule
\end{tabular}
\end{table}

\paragraph{Concrete compressive strength \citep{concrete}.}
This dataset contains 1,030 samples describing mixture design for compressive strength optimization.  
Age is excluded, leaving 7 continuous mixture variables.  
Interpolation (multilinear grid or kNN+IDW fallback) is used to build the oracle.

\begin{table}[t]
\centering
\caption{Concrete dataset specification.}
\label{tab:concrete}
\begin{tabular}{lll}
\toprule
Variable & Type & Range \\
\midrule
cement & continuous & [102.0, 540.0] \\
blast furnace slag & continuous & [0.0, 359.4] \\
fly ash & continuous & [0.0, 200.1] \\
water & continuous & [121.8, 247.0] \\
superplasticizer & continuous & [0.0, 32.2] \\
coarse aggregate & continuous & [801.0, 1145.0] \\
fine aggregate & continuous & [594.0, 992.6] \\
\midrule
compressive strength & objective & maximize \\
\bottomrule
\end{tabular}
\end{table}

\paragraph{HPLC \citep{hase2021olympushplc}.}
This dataset contains 1,386 samples of high-performance liquid chromatography (HPLC) experiments.  
The objective is to maximize chromatographic peak area.  
Interpolation (multilinear grid or kNN+IDW fallback) is applied to extend the dataset to a continuous oracle.

\begin{table}[h]
\centering
\caption{HPLC dataset specification.}
\label{tab:hplc}
\begin{tabular}{lll}
\toprule
Variable & Type & Range \\
\midrule
sample\_loop & continuous & [0.00, 0.08] ml \\
additional\_volume & continuous & [0.00, 0.06] ml \\
tubing\_volume & continuous & [0.10, 0.90] ml \\
sample\_flow & continuous & [0.50, 2.50] ml/min \\
push\_speed & continuous & [80.0, 150.0] Hz \\
wait\_time & continuous & [1.0, 10.0] s \\
\midrule
peak area & objective & maximize \\
\bottomrule
\end{tabular}
\end{table}

\paragraph{Oracle construction protocol.}
For consistency, datasets are treated as black-box objectives over their observed parameter ranges.  
LNP3 requires no interpolation since it fully enumerates all parameter combinations.  
For Cross-barrel, Concrete, and HPLC, if a complete grid is detected, $N$-dimensional multilinear interpolation is applied; 
otherwise, scattered data interpolation with kNN+IDW ($k=12$, exponent $p=2.0$, tolerance $\varepsilon=10^{-12}$) is used.  
\section{Additional Experimental Results}
\label{app:exp}

\subsection{LNP3}
\label{app:lnp}

Figures~\ref{fig:lnp3_gpbo}-\ref{fig:lnp3_lgbo} present the scatter plots of sampled points for the three methods on the LNP3 task.  
Each plot is a pairplot in the normalized input space: off-diagonal panels show pairwise projections of the six variables, with points colored by optimization round (darker = earlier, lighter = later), while diagonal panels display marginal histograms.  
This visualization enables us to directly compare how different methods explore the search space.  

Compared with \textbf{GPBO}, which scatters samples broadly across the space and exhibits large run-to-run variability, and \textbf{LLAMBO}, which provides a warmer start but still explores many uninformative regions, \textbf{LGBO under expert LLM guidance concentrates sampling much faster into promising subregions}.  
This behavior avoids wasting evaluations in irrelevant areas and explains why LGBO achieves both lower variance across runs and faster convergence, even though the exact optimum is unknown in real-world experiments.  

\begin{figure}[t]
    \centering
    \includegraphics[width=0.7\linewidth]{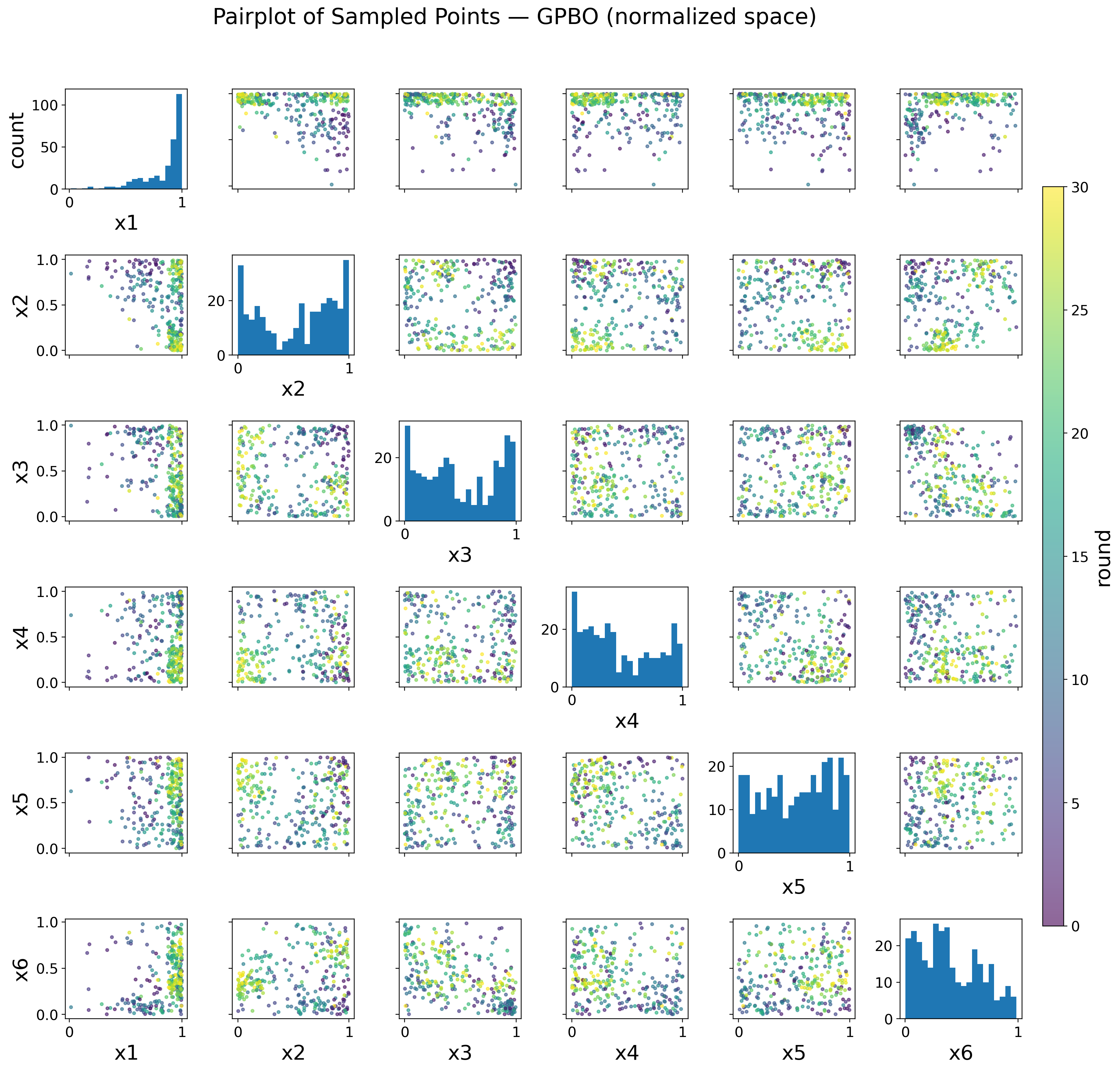}
    \caption{Pairplot of sampled points for GPBO on LNP3 (normalized space).}
    \label{fig:lnp3_gpbo}
\end{figure}

\begin{figure}[t]
    \centering
    \includegraphics[width=0.7\linewidth]{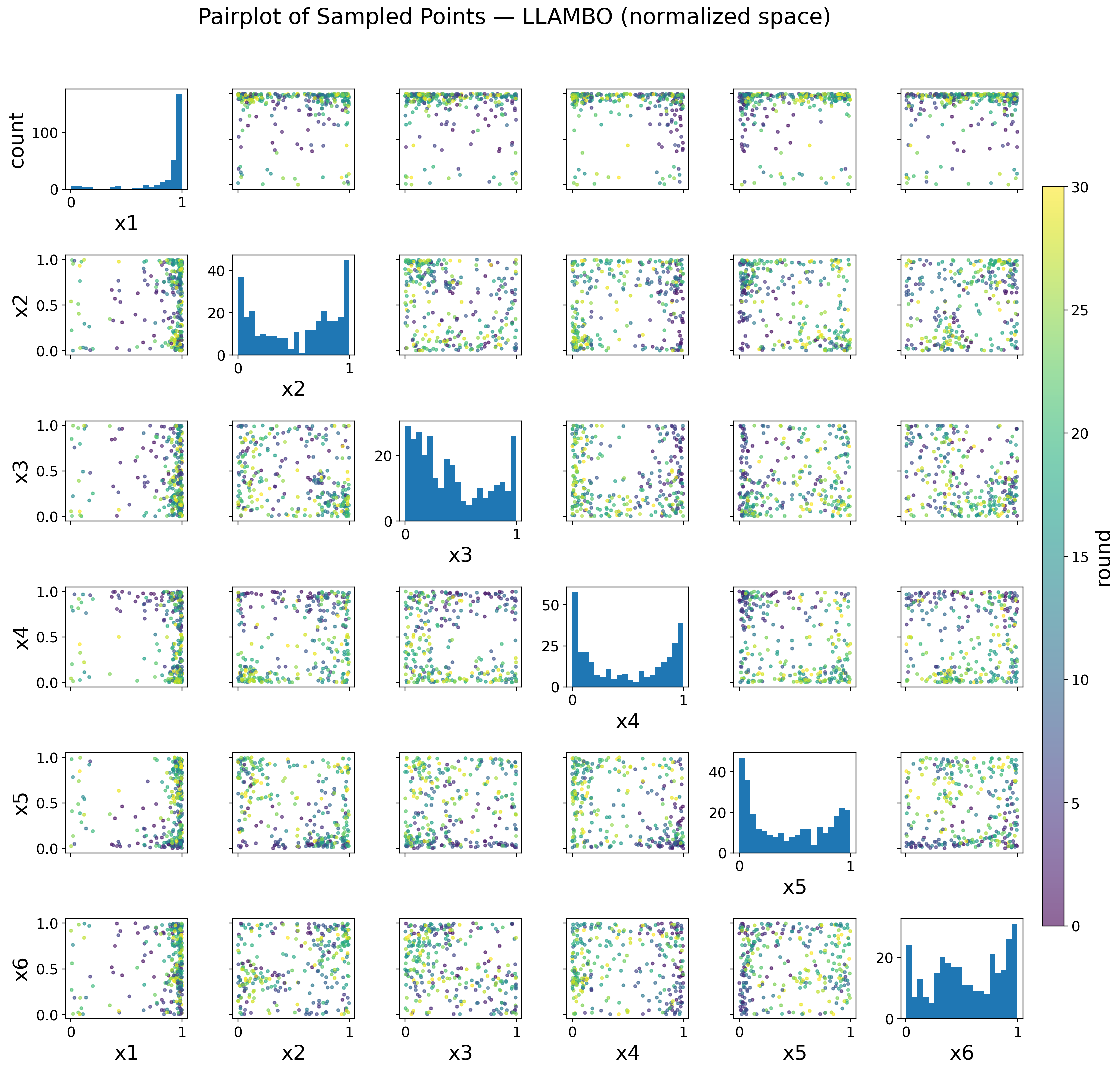}
    \caption{Pairplot of sampled points for LLAMBO on LNP3 (normalized space).}
    \label{fig:lnp3_llambo}
\end{figure}

\begin{figure}[t]
    \centering
    \includegraphics[width=0.7\linewidth]{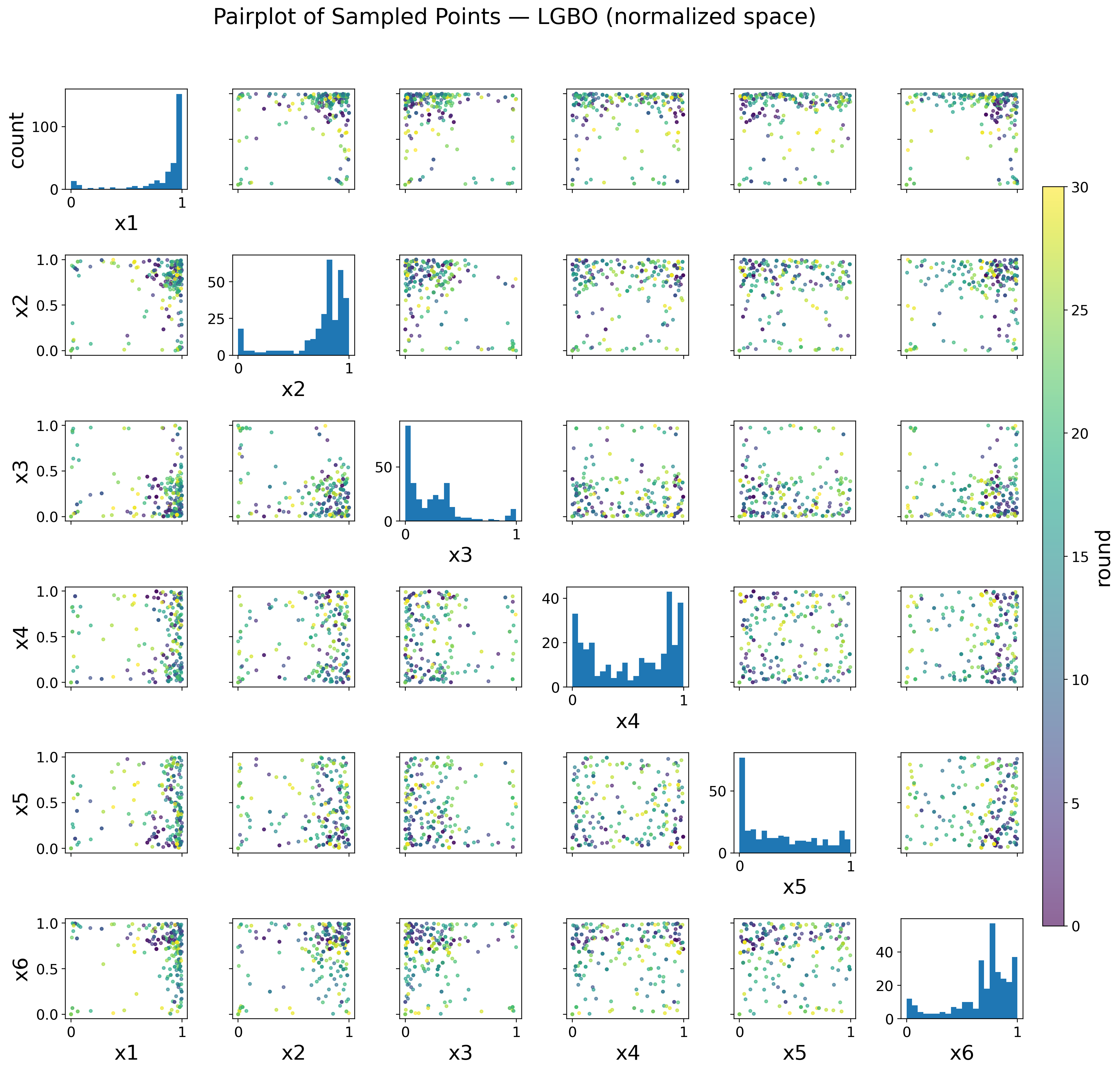}
    \caption{Pairplot of sampled points for LGBO (ours) on LNP3 (normalized space).}
    \label{fig:lnp3_lgbo}
\end{figure}
\subsection{HPLC}
\begin{figure}[t]
  \centering
  \begin{minipage}[t]{0.53\textwidth}
    \vspace{0pt}
    \centering
    \includegraphics[width=\linewidth]{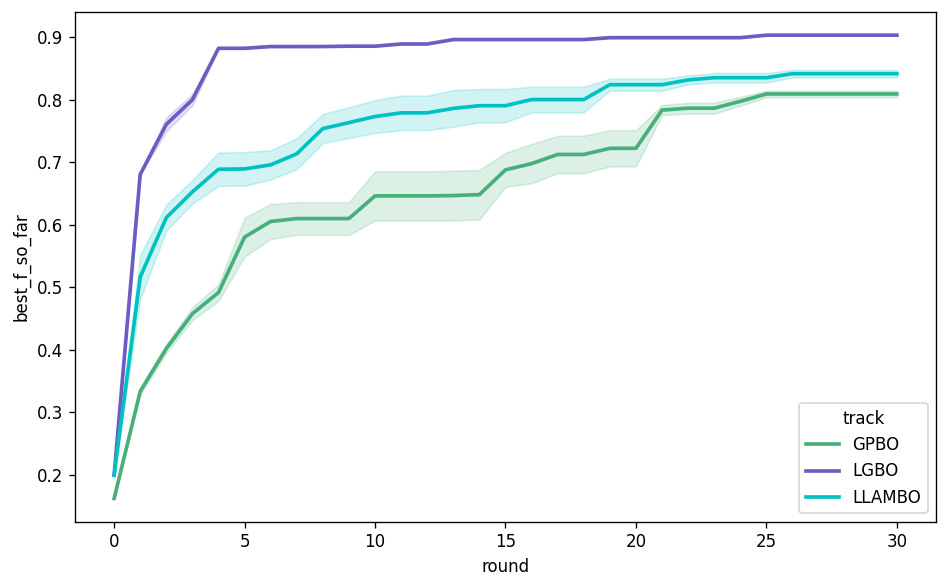}
  \end{minipage}%
  \hfill
  \begin{minipage}[t]{0.44\textwidth}
    \vspace{0pt}
    \centering
    \includegraphics[width=\linewidth]{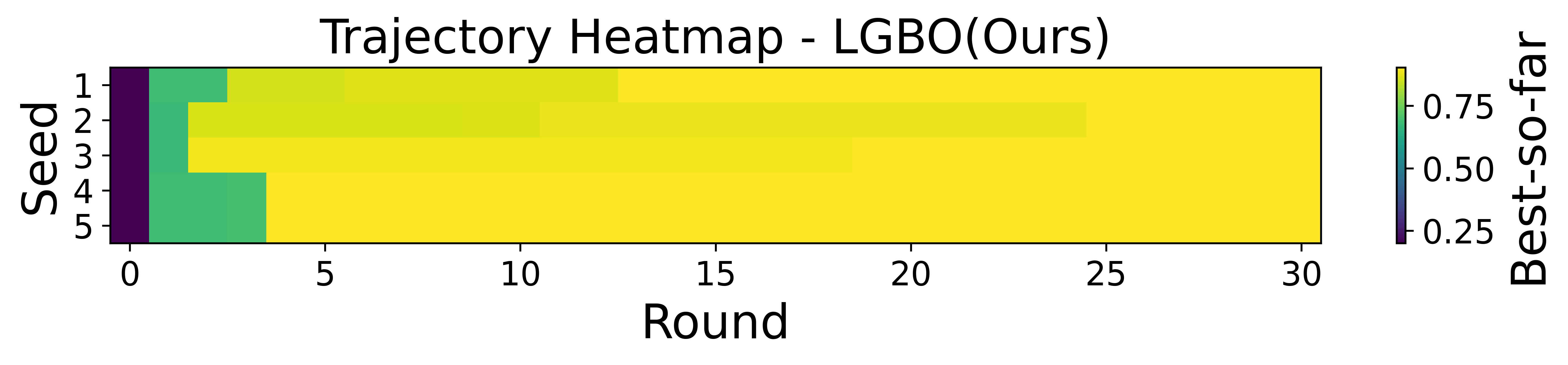}\\[3pt]
    \includegraphics[width=\linewidth]{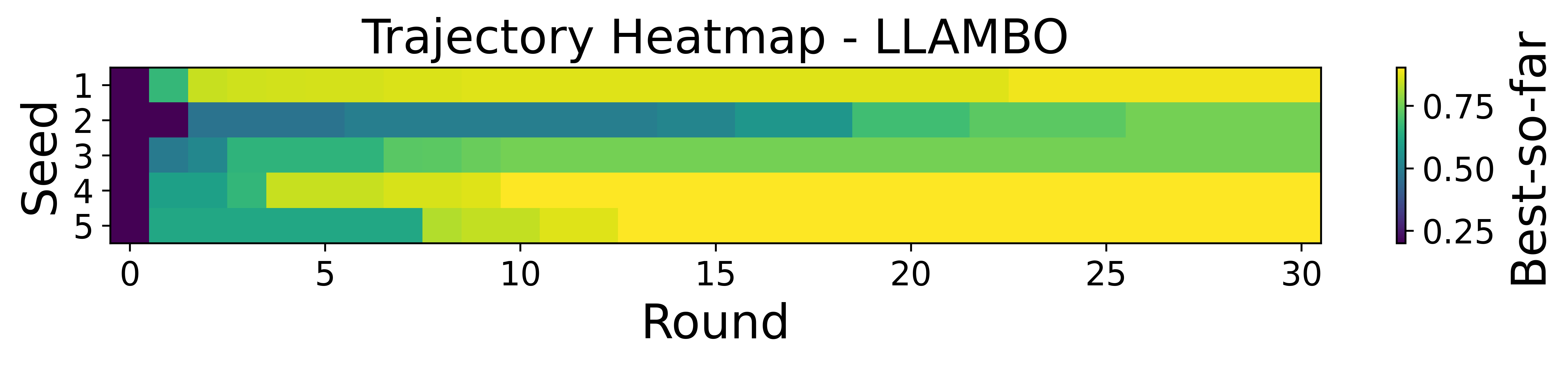}\\[3pt]
    \includegraphics[width=\linewidth]{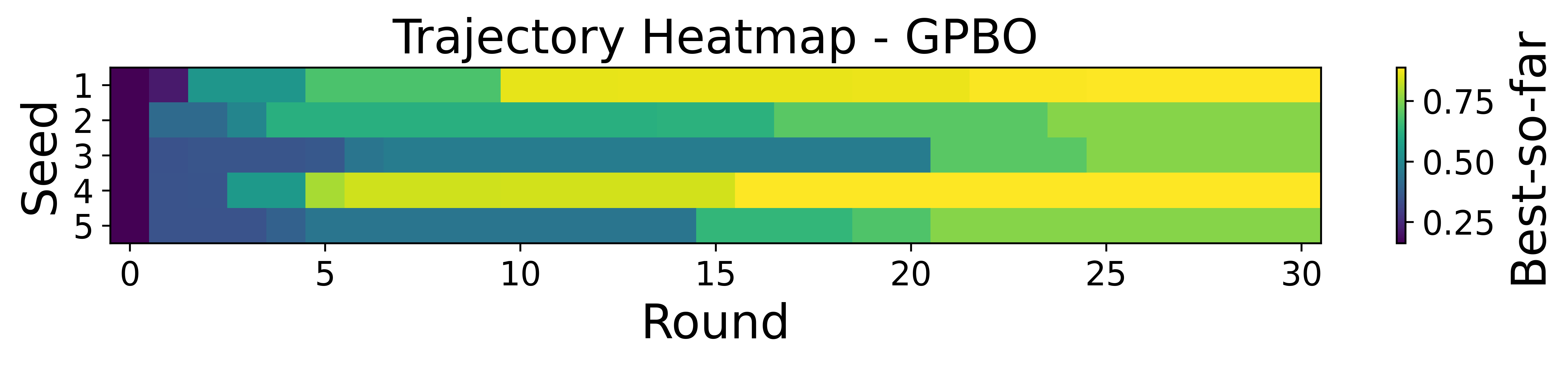}
  \end{minipage}

  \caption{LNP3 results. Left: LGBO achieves faster convergence and higher final performance than GPBO and LLAMBO. 
  Right: trajectory heatmaps show that LGBO converges more rapidly and consistently across seeds.}
  \label{fig:LNP3_main}
\end{figure}
As in the LNP3 case, Figure~\ref{fig:HPLC_main} presents the convergence traces and trajectory heatmaps on the HPLC task.  
The overall trend is similar: LGBO converges faster and achieves higher final performance than the baselines.  
The main difference is that this experiment is considerably noisier, leading to much larger fluctuations across all methods.

Figures~\ref{fig:HPLC_gpbo}-\ref{fig:HPLC_lgbo} show the pairplots of sampled points for the three methods.  
Compared with \textbf{GPBO}, which scatters samples broadly and exhibits high variability, and \textbf{LLAMBO}, which provides a warmer start but still explores many uninformative regions, \textbf{LGBO} quickly concentrates sampling into promising subregions under expert LLM guidance.  
This targeted exploration avoids wasting evaluations and explains why LGBO achieves both faster convergence and lower variance across runs, even under noisy real-world conditions.

\begin{figure}[h]
    \centering
    \includegraphics[width=0.7\linewidth]{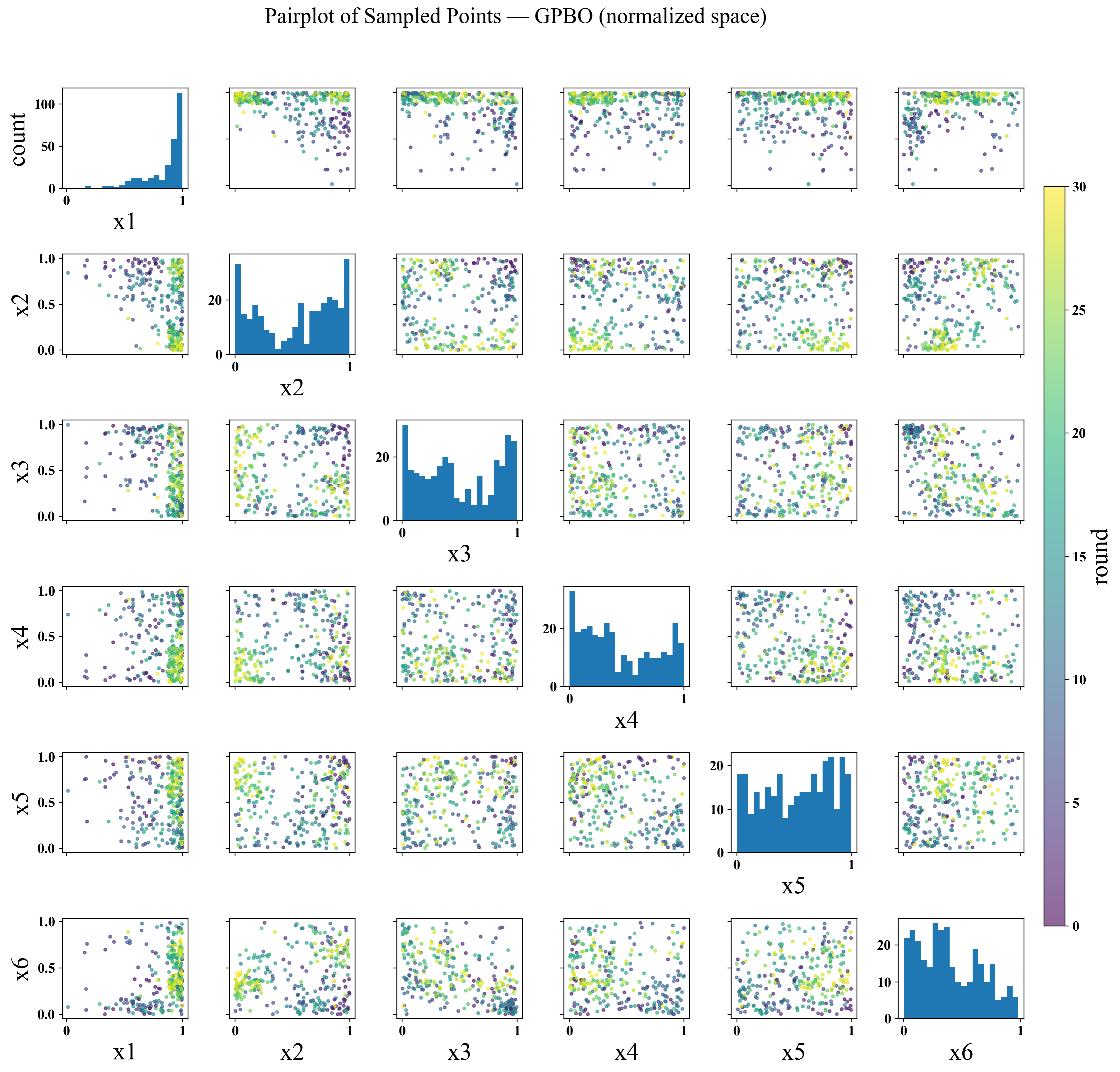}
    \caption{Pairplot of sampled points for GPBO on HPLC (normalized space).}
    \label{fig:HPLC_gpbo}
\end{figure}

\begin{figure}[h]
    \centering
    \includegraphics[width=0.7\linewidth]{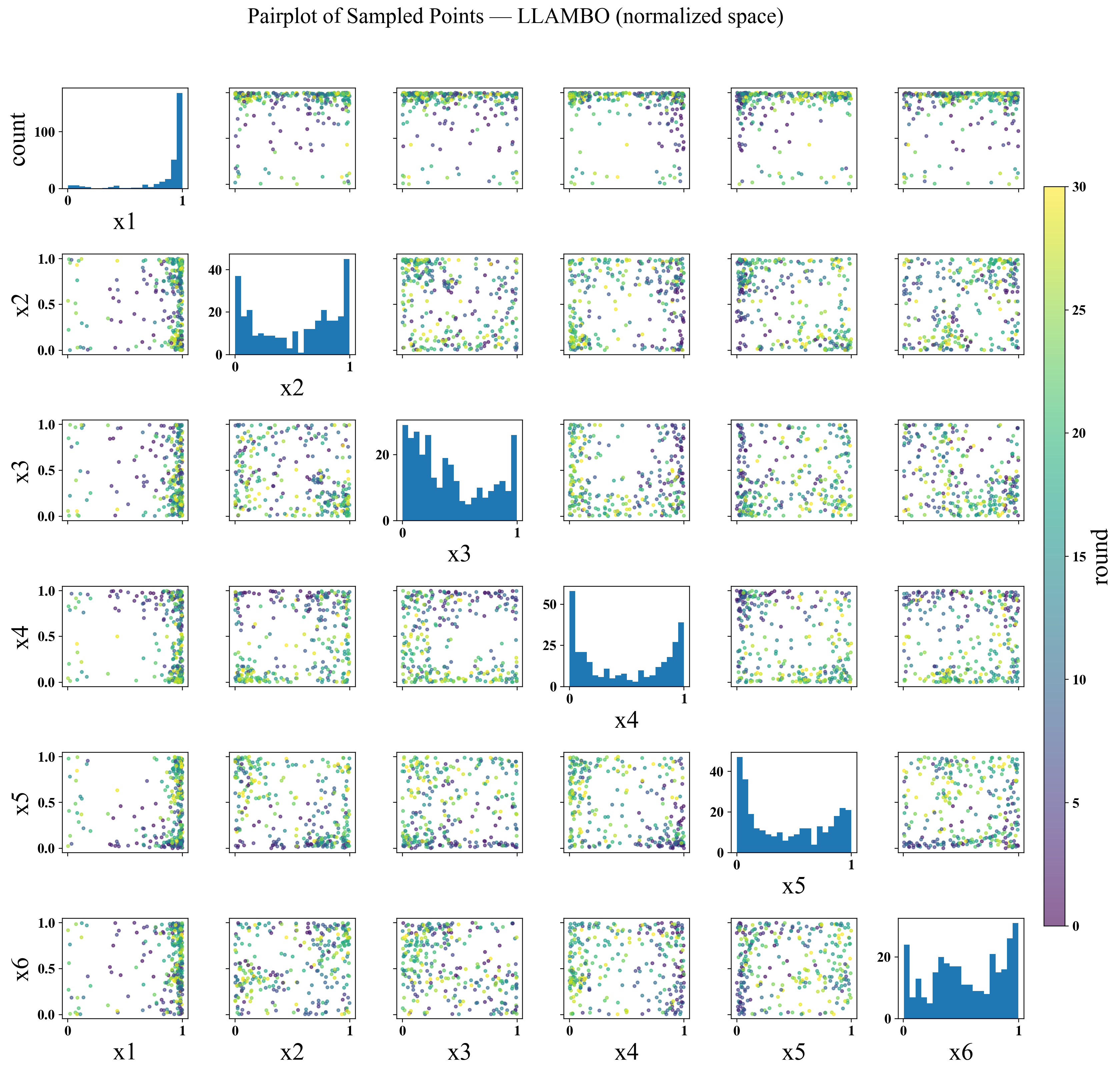}
    \caption{Pairplot of sampled points for LLAMBO on HPLC (normalized space).}
    \label{fig:HPLC_llambo}
\end{figure}

\begin{figure}[h]
    \centering
    \includegraphics[width=0.7\linewidth]{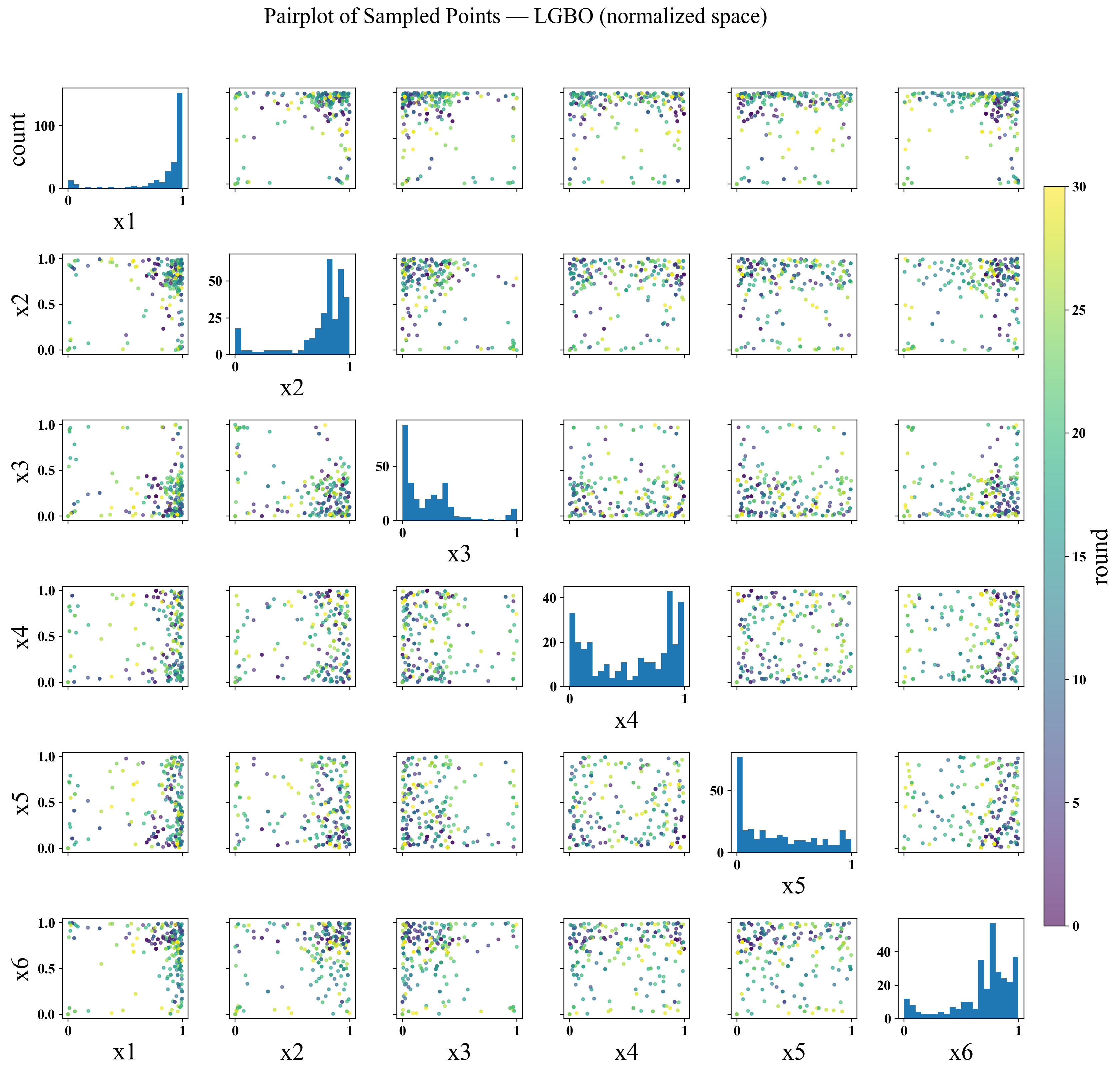}
    \caption{Pairplot of sampled points for LGBO (ours) on HPLC (normalized space).}
    \label{fig:HPLC_lgbo}
\end{figure}
\label{app:hplc}

\begin{figure}[h]
  \centering
  \begin{minipage}[t]{0.53\textwidth}
    \vspace{0pt}
    \centering
    \includegraphics[width=\linewidth]{./Figure/hplc/mean_with_variance_shade.png}
  \end{minipage}%
  \hfill
  \begin{minipage}[t]{0.44\textwidth}
    \vspace{0pt}
    \centering
    \includegraphics[width=\linewidth]{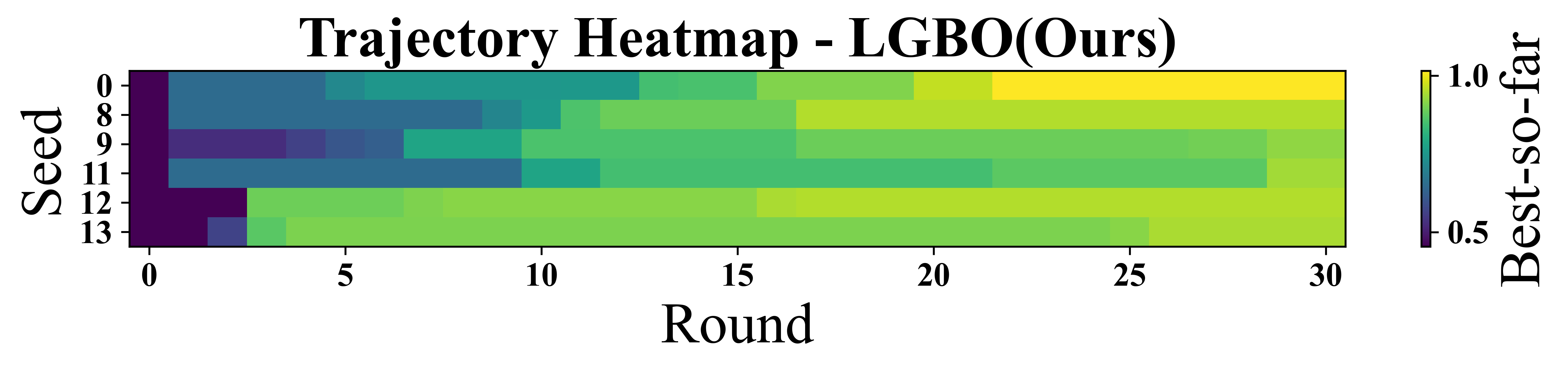}\\[3pt]
    \includegraphics[width=\linewidth]{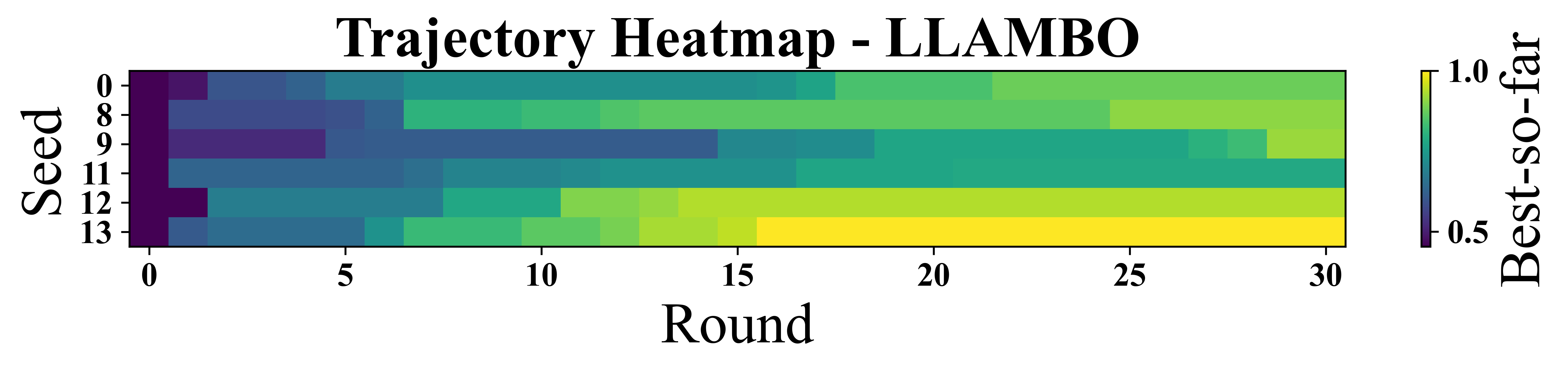}\\[3pt]
    \includegraphics[width=\linewidth]{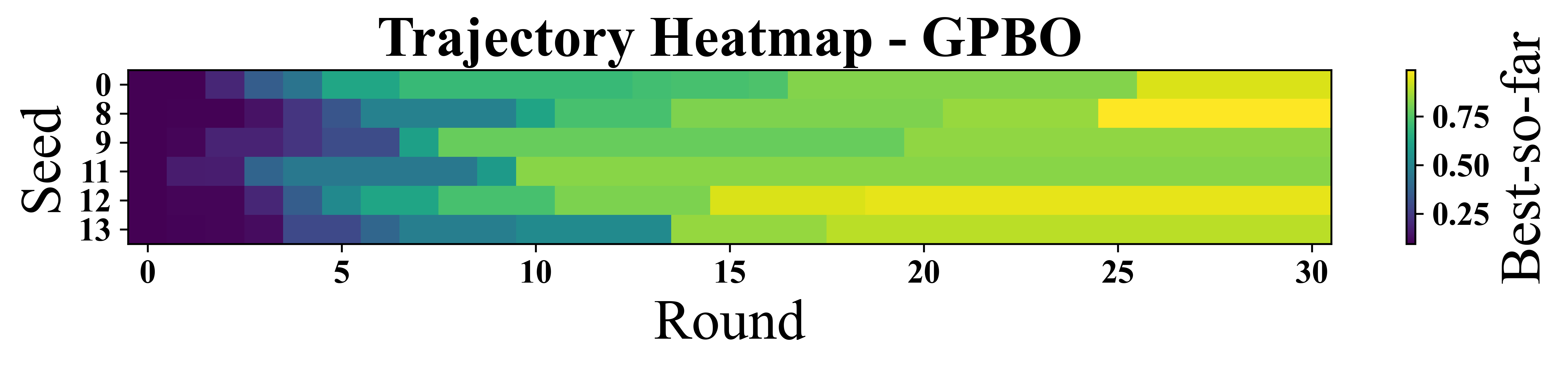}
  \end{minipage}
  \caption{HPLC results. Left: mean performance traces (mean $\pm$ variance over five runs). 
  Right: trajectory heatmaps showing convergence dynamics across seeds.}
  \label{fig:HPLC_main}
\end{figure}
\subsection{Cross Barrel}
\begin{figure}[t] 
  \centering
  \begin{minipage}[t]{0.53\textwidth}
    \vspace{0pt}
    \centering
    \includegraphics[width=\linewidth]{./Figure/cb/mean_with_variance_shade.png}
  \end{minipage}%
  \hfill
  \begin{minipage}[t]{0.44\textwidth}
    \vspace{0pt}
    \centering
    \includegraphics[width=\linewidth]{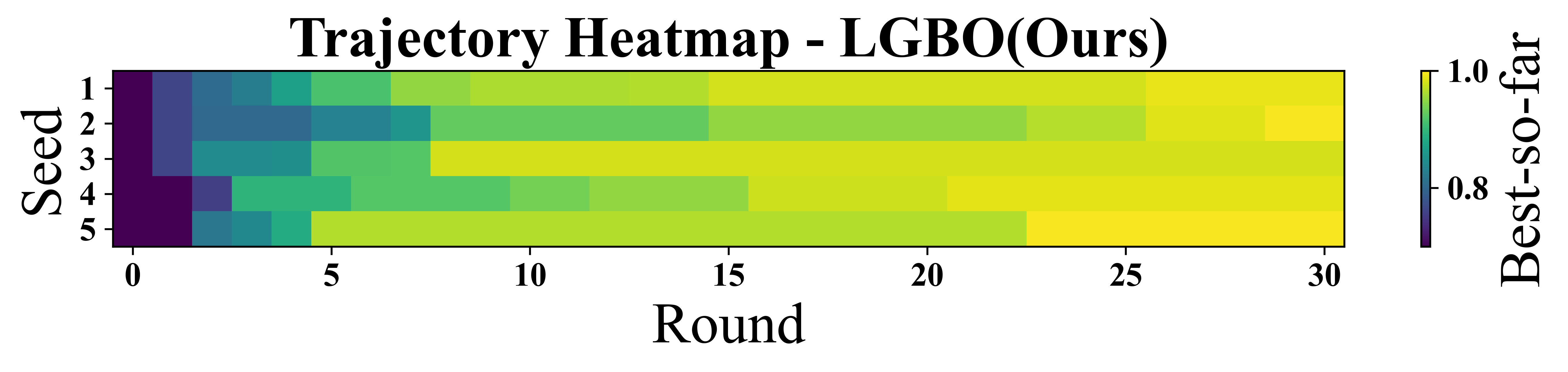}\\[3pt]
    \includegraphics[width=\linewidth]{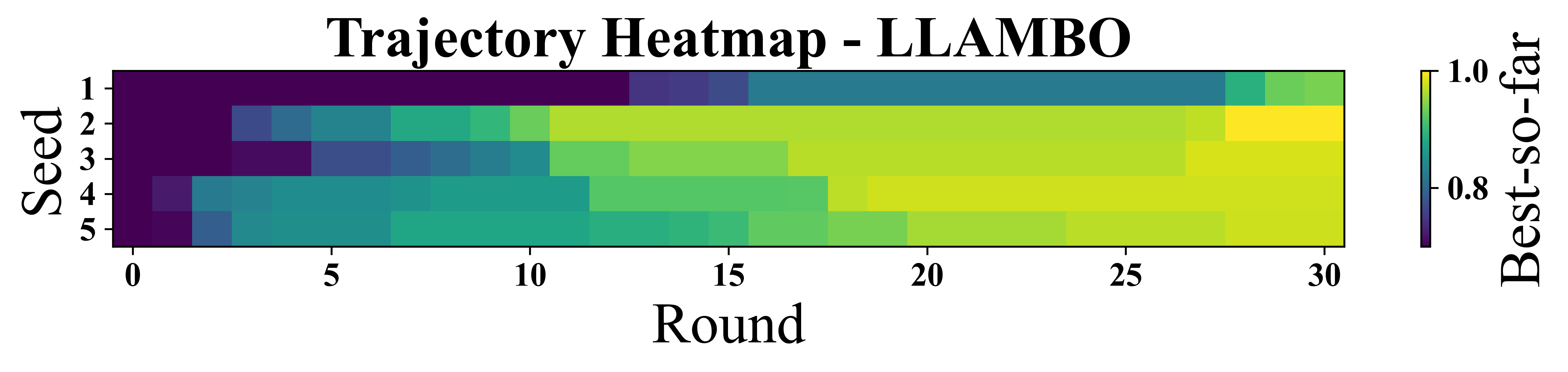}\\[3pt]
    \includegraphics[width=\linewidth]{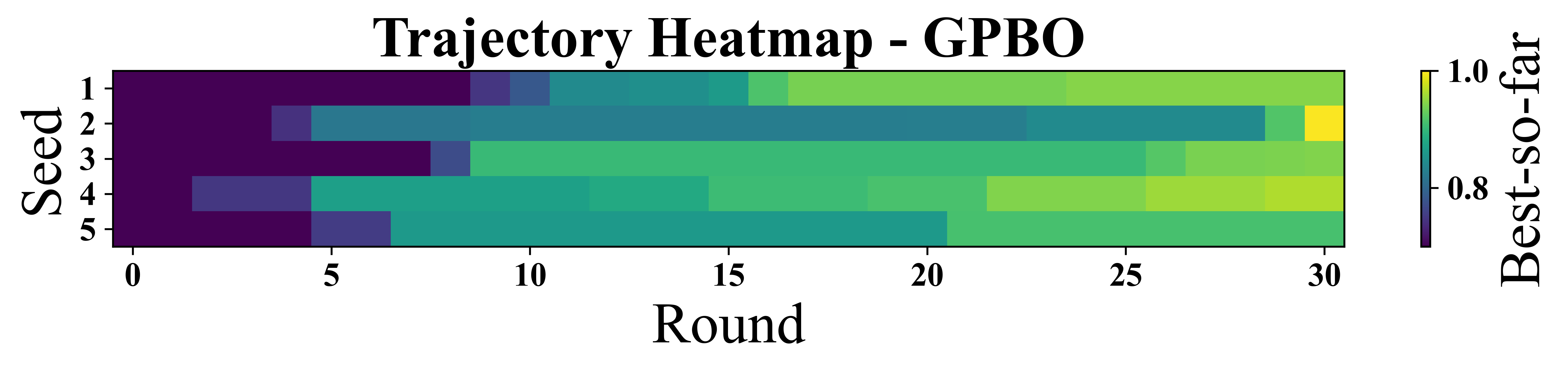}
  \end{minipage}

  \caption{Cross Barrel results. Left: LGBO achieves faster convergence and higher final performance than GPBO and LLAMBO. 
  Right: trajectory heatmaps show that LGBO converges more rapidly and consistently across seeds.}
\label{fig:cb_main}
\end{figure}
For the Cross-barrel task (a 3D-printed structural optimization with only four variables), 
the search space is relatively low-dimensional and thus easier for Bayesian optimization.  
As shown in Figure~\ref{fig:cb_main}, even pure GPBO quickly reaches strong performance.  
Nevertheless, LGBO consistently achieves the best results across runs.  
In Figures~\ref{fig:cb_gpbo}-\ref{fig:cb_lgbo}, one interesting observation is that the scatter plots appear more dispersed in later rounds: 
this is because the search rapidly approaches the optimum, after which additional rounds 
tend to explore more broadly around the high-performing region.

\begin{figure}[h]
    \centering
    \includegraphics[width=0.7\linewidth]{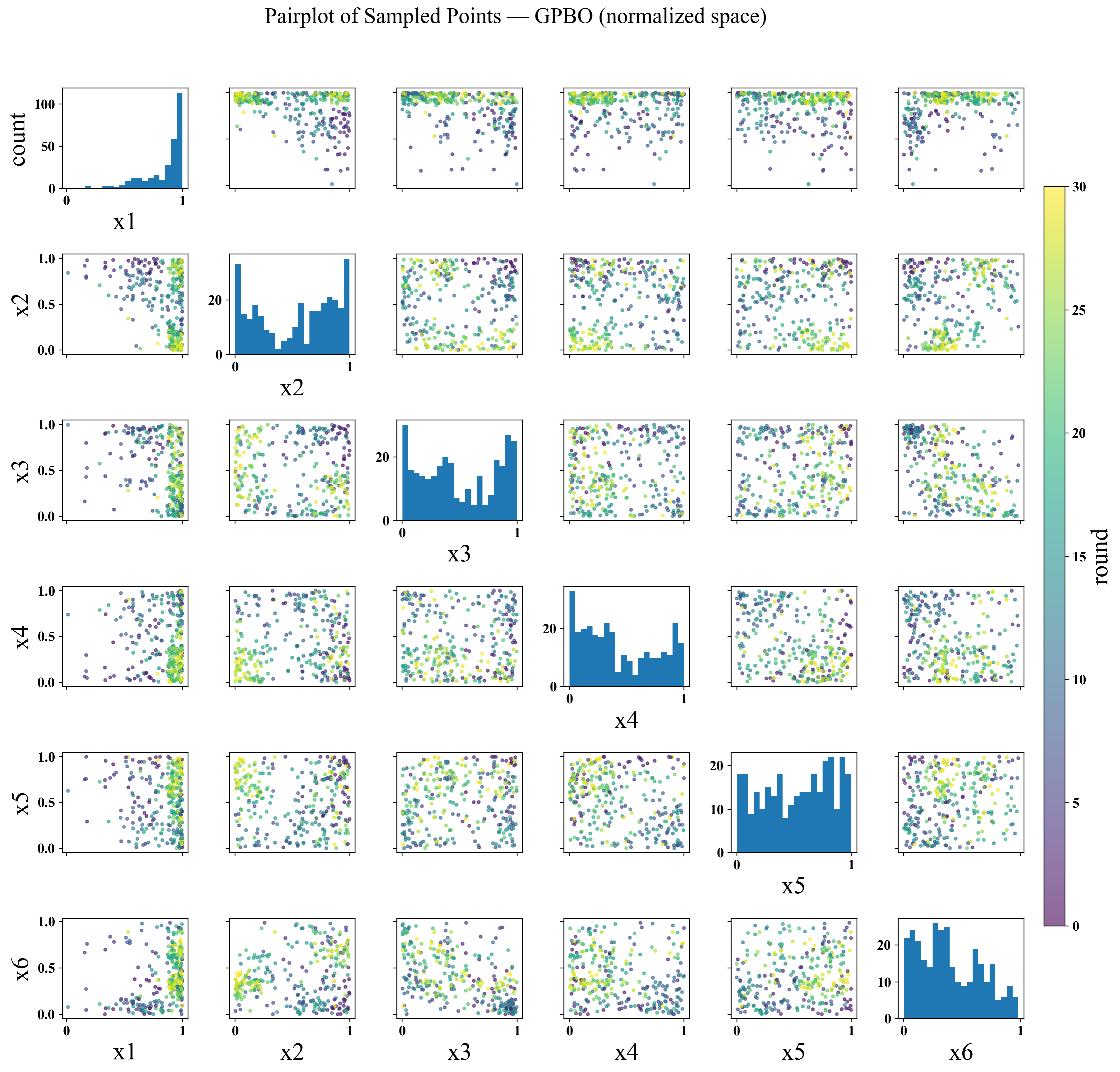}
    \caption{Pairplot of sampled points for GPBO on Cross Barrel (normalized space).}
    \label{fig:cb_gpbo}
\end{figure}

\begin{figure}[h]
    \centering
    \includegraphics[width=0.7\linewidth]{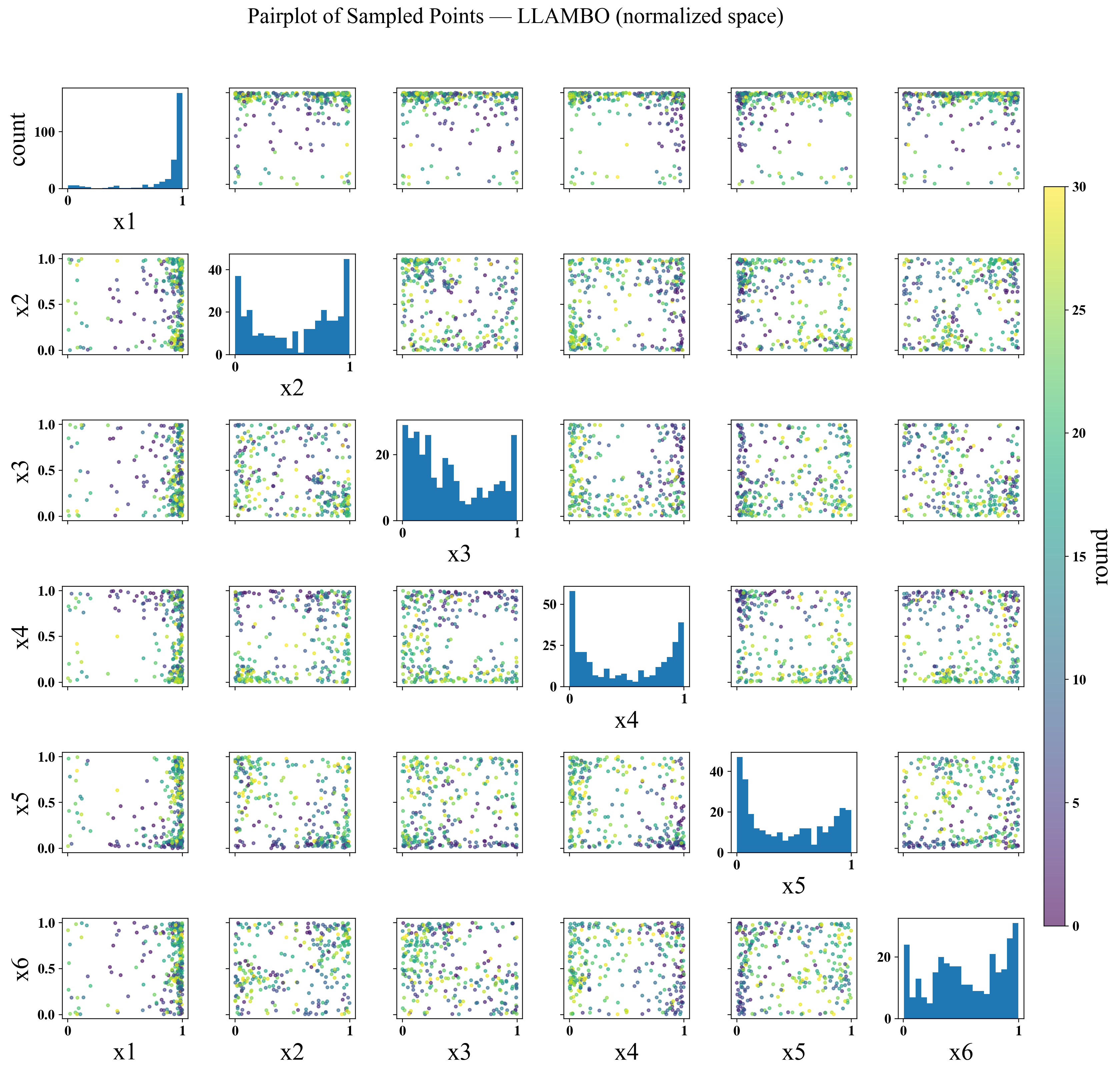}
    \caption{Pairplot of sampled points for LLAMBO on Cross Barrel (normalized space).}
    \label{fig:cb_llambo}
\end{figure}

\begin{figure}[h]
    \centering
    \includegraphics[width=0.7\linewidth]{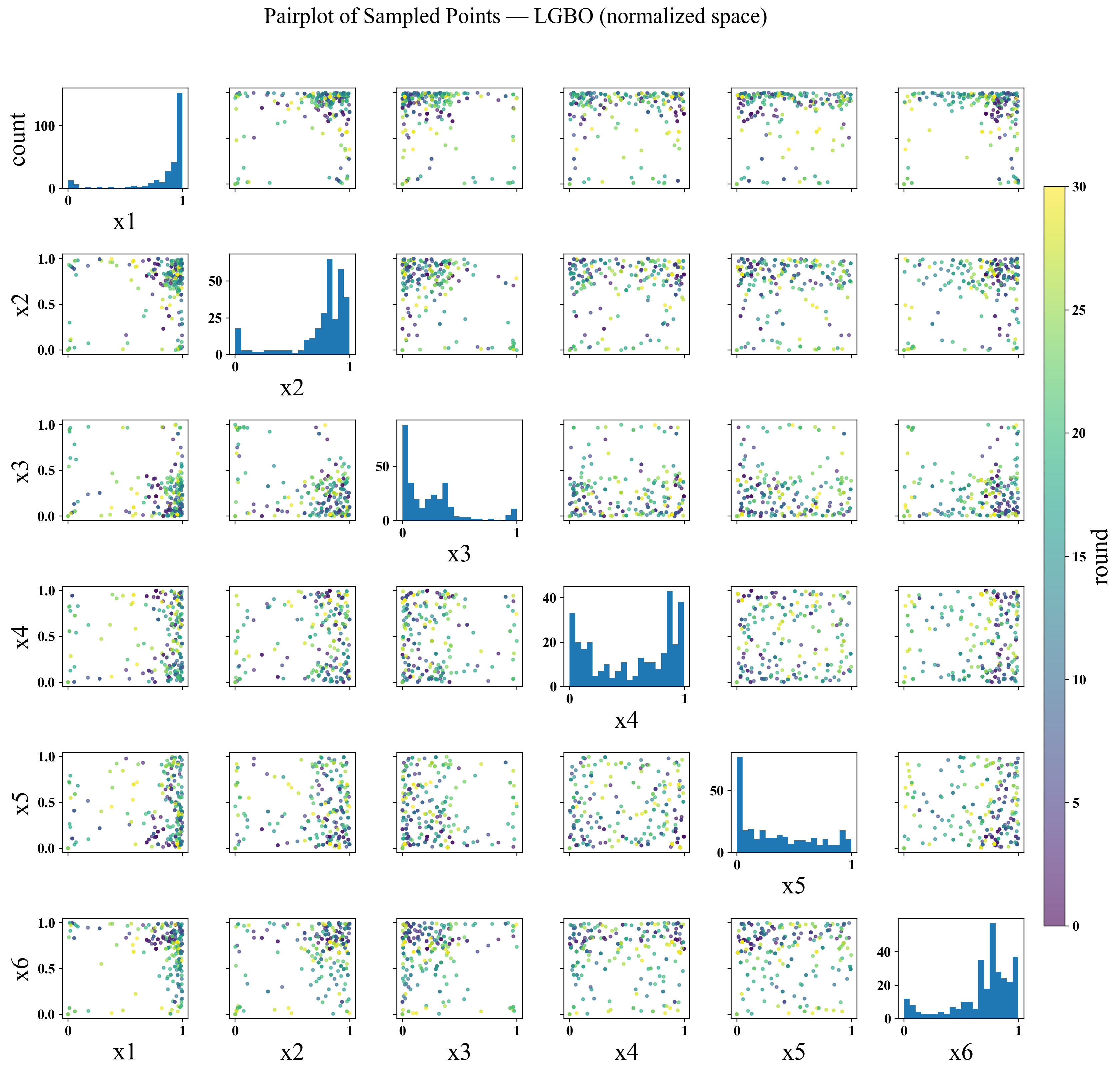}
    \caption{Pairplot of sampled points for LGBO (ours) on Cross Barrel (normalized space).}
    \label{fig:cb_lgbo}
\end{figure}
\subsection{Concrete}
\begin{figure}[t]
  \centering
  \begin{minipage}[t]{0.53\textwidth}
    \vspace{0pt}
    \centering
    \includegraphics[width=\linewidth]{./Figure/con/mean_with_variance_shade.png}
  \end{minipage}%
  \hfill
  \begin{minipage}[t]{0.44\textwidth}
    \vspace{0pt}
    \centering
    \includegraphics[width=\linewidth]{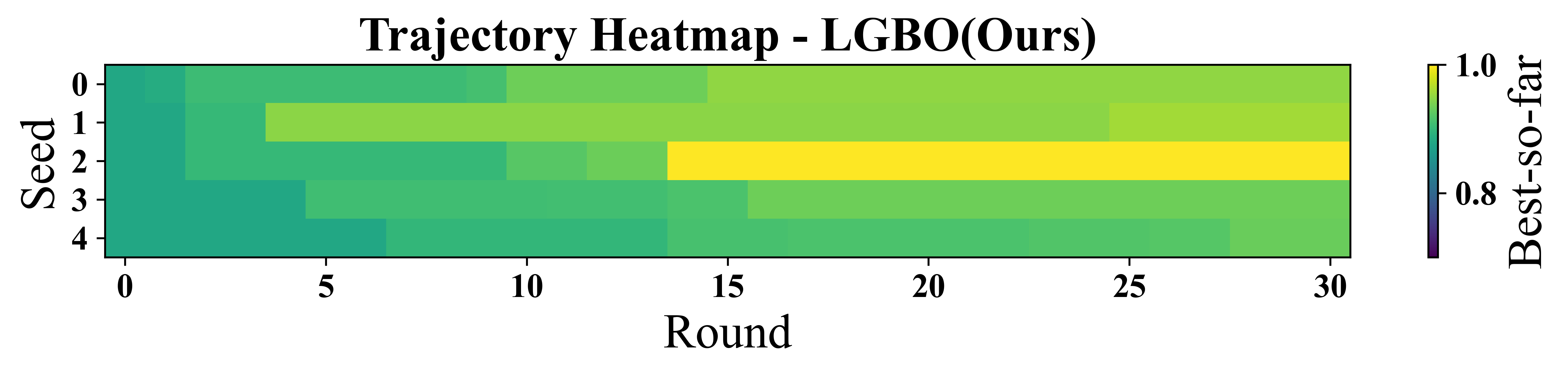}\\[3pt]
    \includegraphics[width=\linewidth]{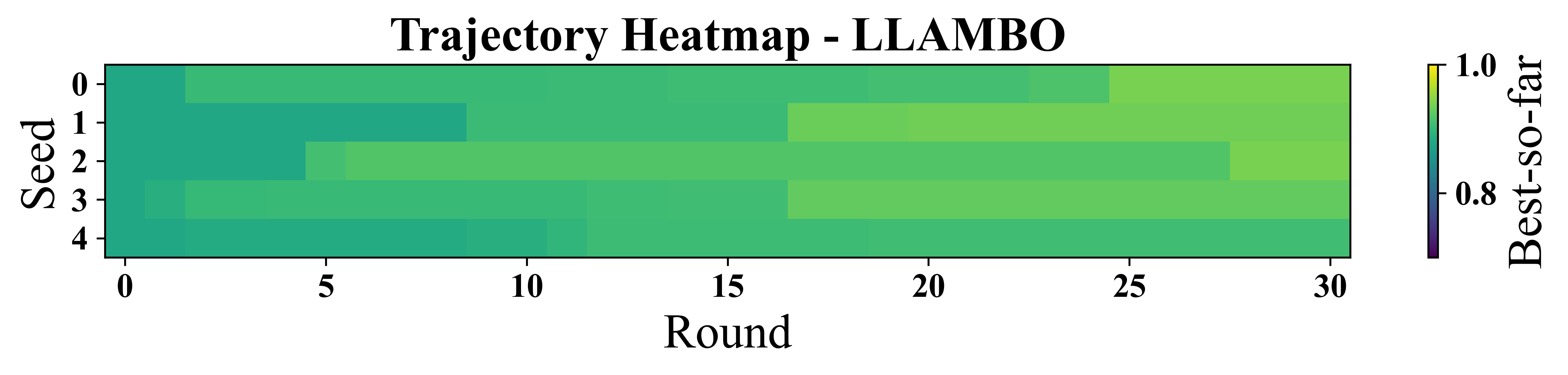}\\[3pt]
    \includegraphics[width=\linewidth]{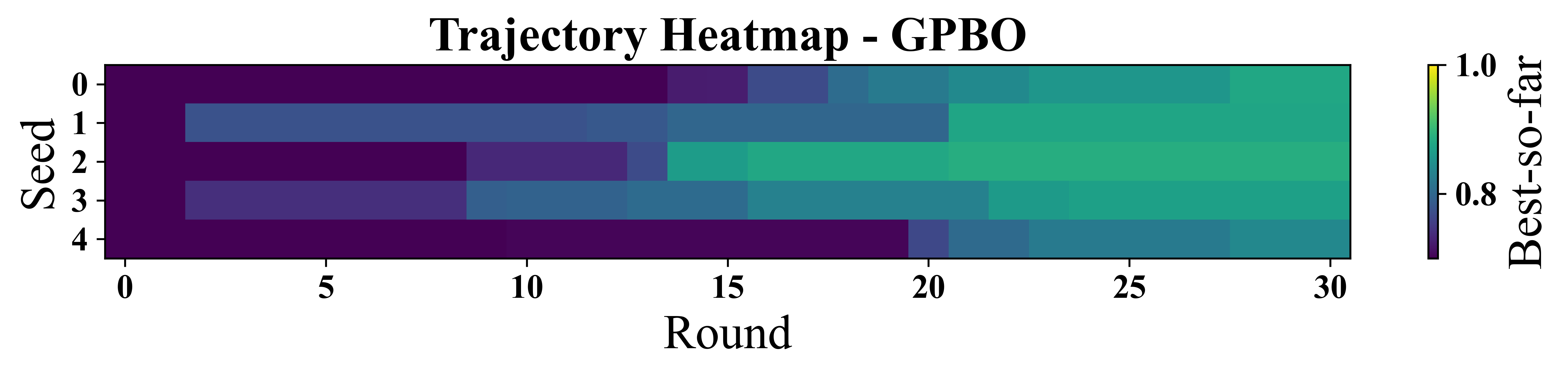}
  \end{minipage}

  \caption{Concrete results. Left: LGBO achieves faster convergence and higher final performance than GPBO and LLAMBO. 
  Right: trajectory heatmaps show that LGBO converges more rapidly and consistently across seeds.}
  \label{fig:con_main}
\end{figure}
For the Concrete mixture design task, in Figure~\ref{fig:con_main}, a widely used engineering benchmark, 
the LLM warm-up already lifts the initial performance to a high level.  
Interestingly, LGBO appears slightly weaker than LLAMBO in the very early rounds.  
We attribute this to the fact that LGBO, guided by the LLM, is directed toward 
higher-value regions rather than over-exploiting the initial local optima.  
As a result, although LLAMBO temporarily maintains an advantage, 
LGBO rapidly catches up and ultimately surpasses both baselines, 
achieving the highest final objective values.

\begin{figure}[h]
    \centering
    \includegraphics[width=0.7\linewidth]{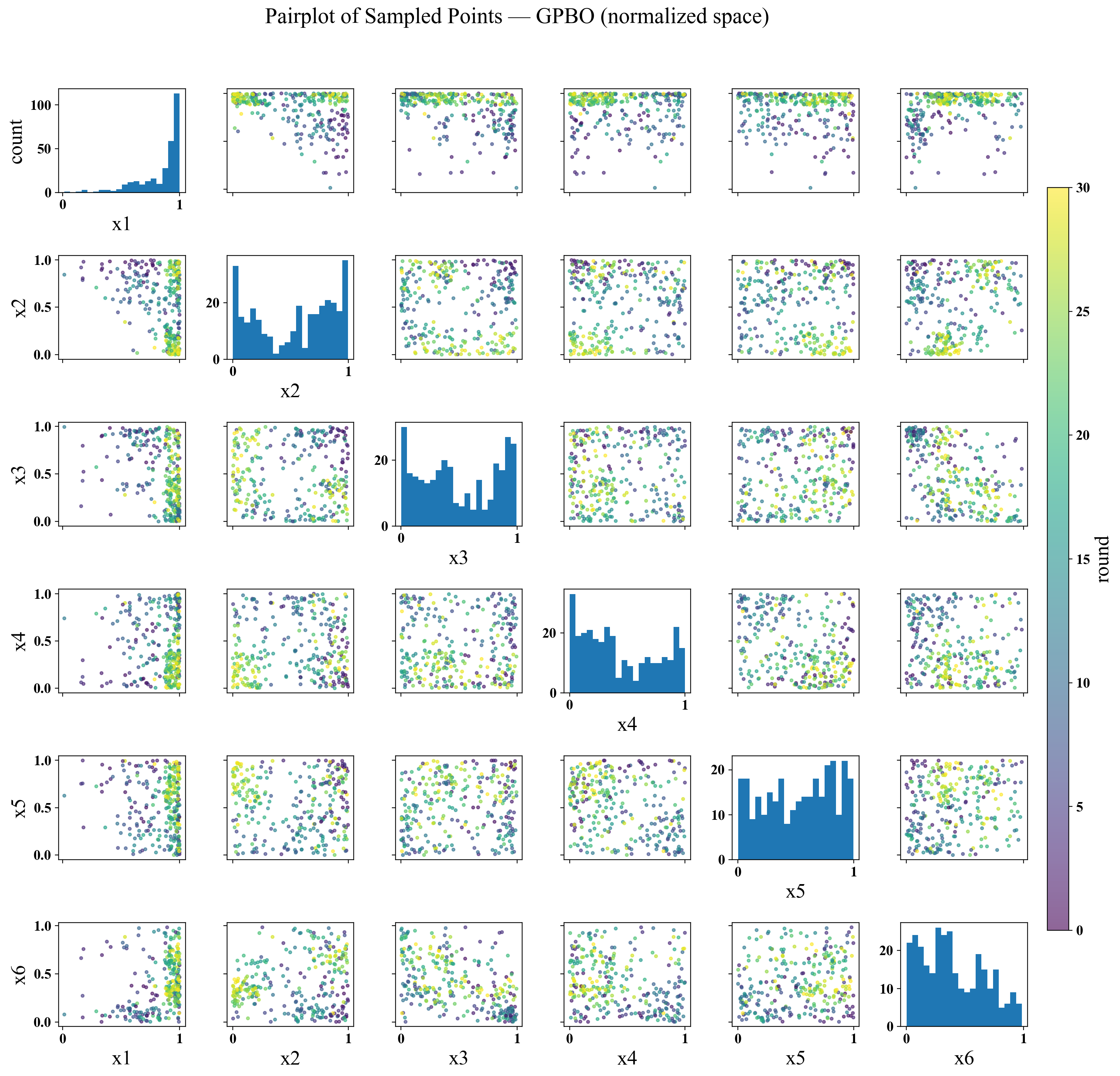}
    \caption{Pairplot of sampled points for GPBO on Concrete (normalized space).}
    \label{fig:con_gpbo}
\end{figure}

\begin{figure}[h]
    \centering
    \includegraphics[width=0.7\linewidth]{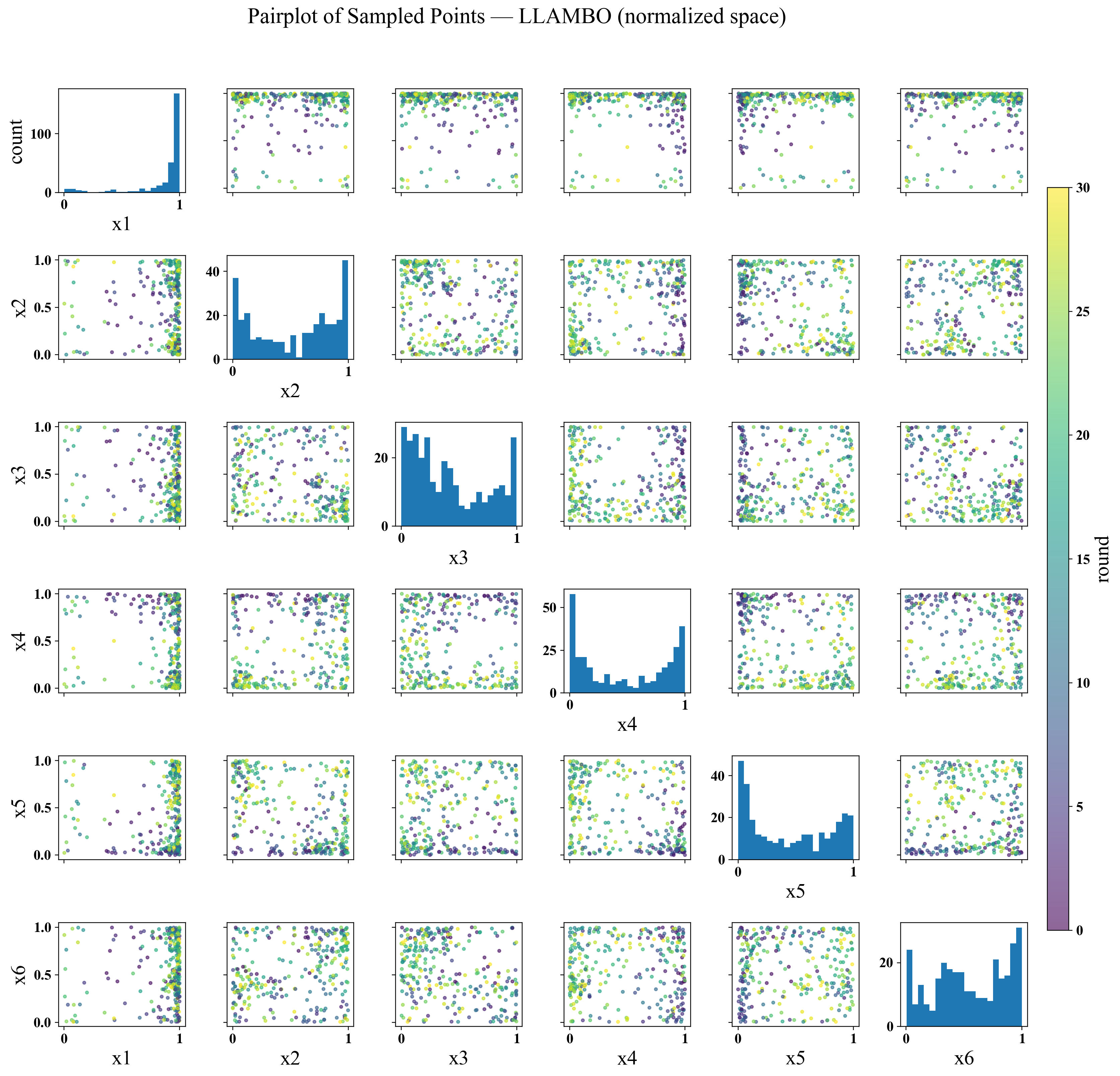}
    \caption{Pairplot of sampled points for LLAMBO on Concrete (normalized space).}
    \label{fig:con_llambo}
\end{figure}

\begin{figure}[h]
    \centering
    \includegraphics[width=0.7\linewidth]{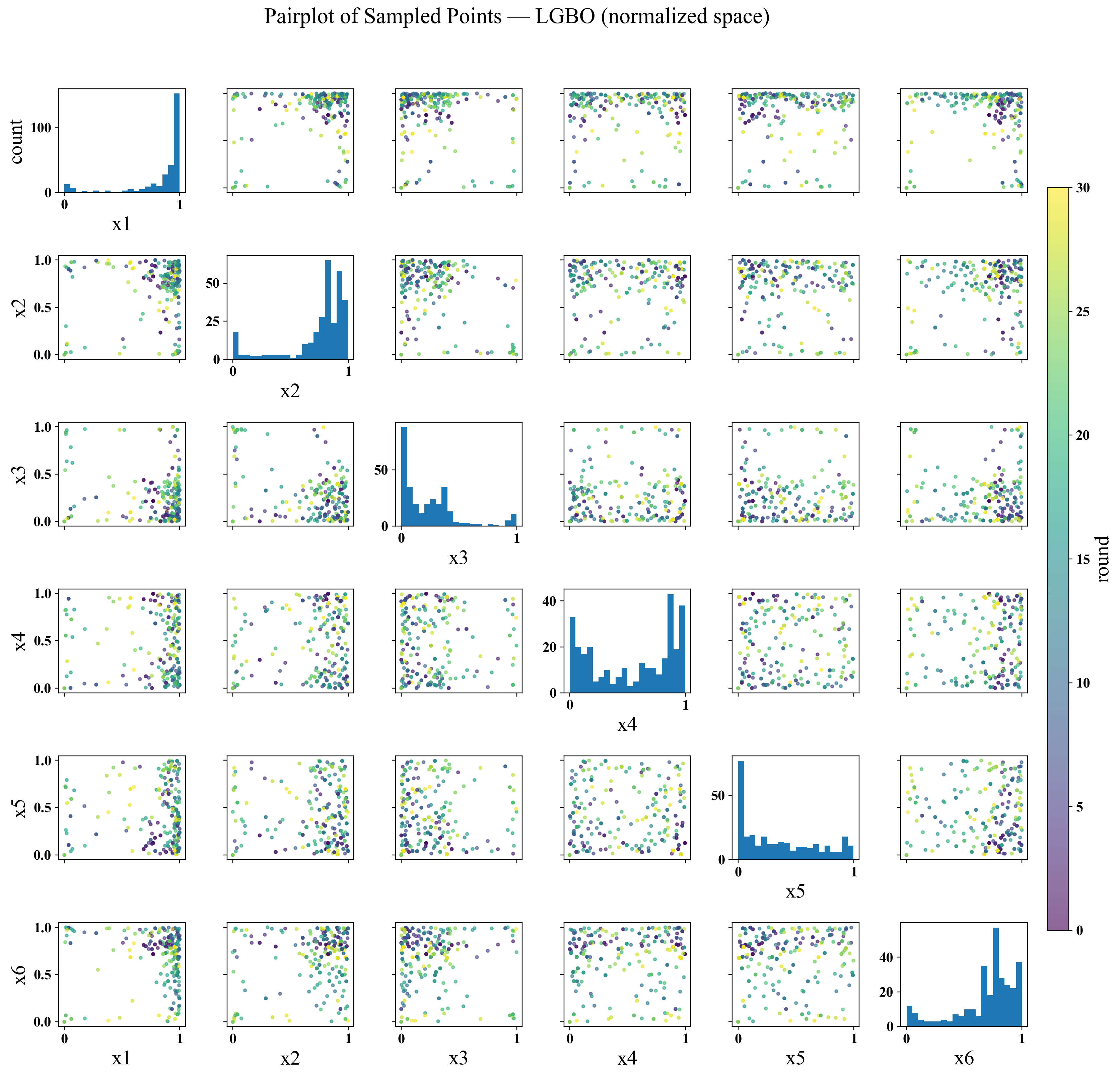}
    \caption{Pairplot of sampled points for LGBO (ours) on Concrete (normalized space).}
    \label{fig:con_lgbo}
\end{figure}
\section*{Large Language Model Usage Statement}

In accordance with the ICLR 2026 Author Guidelines on the use of large language models, 
we acknowledge that large language models were used for phrasing refinement and grammar correction during manuscript preparation. 
All scientific ideas, algorithmic designs, and experimental results are solely the work of the authors.
\clearpage

\section{More Experiments}

To further evaluate the robustness and generalization capability of LGBO in scientific discovery tasks, we extend our experimental analysis with two additional baselines and a higher-dimensional benchmark. Specifically, we incorporate two recently proposed LLM-augmented Bayesian optimization methods and conduct experiments on a challenging 14-dimensional scientific dataset. And two newly LLM-BO baselines are compared with LGBO.

We conduct fair comparisons with these methods on the original benchmark datasets and further validate LGBO on a more complex 14-dimensional scientific exploration task. All extended experiments consistently demonstrate that LGBO maintains a clear performance advantage—not only in low-dimensional settings but also in high-dimensional, real-world scientific scenarios—highlighting its effectiveness, scalability, and superiority in scientific discovery applications. 
\subsection{More Datasets}
\paragraph{COF \citep{cof}.}
The COF dataset consists of 608 covalent organic framework (COF) candidates evaluated for their performance in xenon/krypton separations. Each material is encoded as a 14-dimensional feature vector that captures its pore geometry (e.g., pore diameter, void fraction), textural properties (e.g., surface area, crystal density), and elemental composition. The optimization objective is the Xe/Kr selectivity, a central metric in adsorption-based noble-gas separations that quantifies a material's thermodynamic preference for xenon relative to krypton. Selectivity is defined as the ratio of xenon uptake to krypton uptake under identical pressure and temperature conditions, and higher values indicate stronger differential affinity toward xenon. This property is particularly important for applications such as the capture of radioactive xenon released during nuclear fuel reprocessing, off-gas treatment, and environmental monitoring, where efficient enrichment of xenon over krypton is critical.

\begin{table}[h]
\centering
\caption{COF dataset specification}
\begin{tabular}{lcl}
\toprule
Variable & Type & Range \\
\midrule
pore\_diameter    & continuous & [3.51, 56.40] \AA \\
void\_fraction    & continuous & [0.164, 0.928]\\
surface\_area     & continuous & [1996.63, 6357.01] m$^{2}$\,g$^{-1}$ \\
crystal\_density  & continuous & [102.72, 1610.70] kg\,m$^{-3}$ \\
B                 & continuous & [0.000, 0.182] mol\,fraction \\
O                 & continuous & [0.000, 0.250] mol\,fraction \\
C                 & continuous & [0.325, 0.667] mol\,fraction \\
H                 & continuous & [0.000, 0.500] mol\,fraction \\
Si                & continuous & [0.000, 0.0295] mol\,fraction \\
N                 & continuous & [0.000, 0.333] mol\,fraction \\
S                 & continuous & [0.000, 0.143] mol\,fraction \\
P                 & continuous & [0.000, 0.0667] mol\,fraction \\
halogens          & continuous & [0.000, 0.286] mol\,fraction \\
metals            & continuous & [0.000, 0.0238] mol\,fraction \\
\midrule
Xe/Kr selectivity & objective & maximize \\
\bottomrule
\end{tabular}
\label{tab:cof_dataset_spec}
\end{table}
\subsection{More Results}
We compare with two more LLM-BO algorithms below:
\begin{itemize}
    \item \textbf{ColaLLM}~\citep{Hvarfner2024ColaBO}: An enhanced variant of the classical colaBO framework that integrates large language models (LLMs) to model expert preferences and employs an LLM-based warm-up strategy to generate high-quality initial candidates, thereby accelerating convergence.

    \item \textbf{BOPRO}~\citep{BOPRO}: A method that leverages large language models as implicit Bayesian priors through in-context learning. BOPRO constructs prompts containing previously observed input--output pairs and uses the LLM’s contextual reasoning ability to extrapolate patterns or trends in the objective function. 
    \item \textbf{CAKE}~\citep{CAKE} A method that accelerates Bayesian optimization by designing kernel functions informed by large language models.
\end{itemize}
Across all five scientific datasets (Figure~\ref{fig:more_exp_cb}-\ref{fig:more_exp_cof}), LGBO consistently achieves the highest or near-highest objective values, particularly demonstrating strong early-stage convergence and stable final performance.

On the Cross-barrel and LNP3 tasks, LGBO rapidly surpasses other methods within the first 10 rounds, showing clear sample efficiency under limited evaluations. Although ColaLLM and LLAMBO also leverage LLM guidance, their convergence is slower and exhibits greater variance, suggesting less effective utilization of model priors.

In HPLC, LGBO steadily outperforms other approaches throughout the entire optimization process, with narrow confidence bands indicating robustness. On the COF dataset—the highest-dimensional task (d=14)—LGBO again achieves the best overall performance, highlighting its scalability and effectiveness in more challenging, high-dimensional search spaces.

The Concrete dataset appears relatively saturated across methods, yet LGBO still maintains a late-stage advantage, pointing to better exploitation capability in the final rounds.

GPBO, the only method without LLM guidance, generally lags behind the LLM-aided methods across most tasks, particularly where prior knowledge is essential or the search space is complex. While not uniformly the worst, its performance gap illustrates the tangible benefit of incorporating even lightweight model-derived preferences.

Overall, LGBO shows the most consistent, sample-efficient, and scalable optimization behavior, making it well-suited for complex scientific design settings.
\begin{figure}[htbp]
    \centering
    \includegraphics[width=0.8\linewidth]{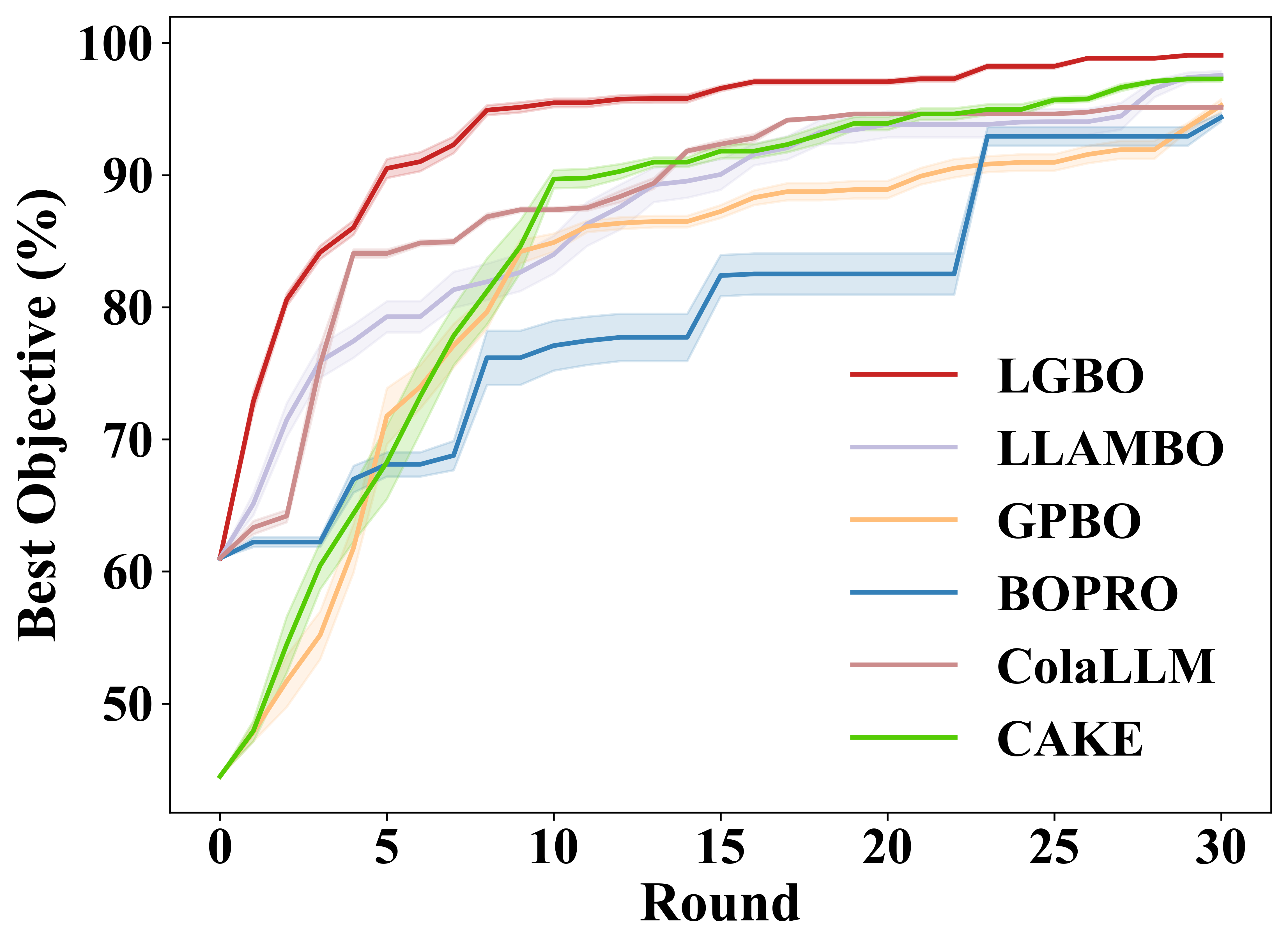}
    \caption{Performance comparison of LGBO against LLAMA, GPBO, ColaLLM and BOPRO on the Cross-barrel scientific dataset.}
    \label{fig:more_exp_cb}
\end{figure}
\begin{figure}[htbp]
    \centering
    \includegraphics[width=0.8\linewidth]{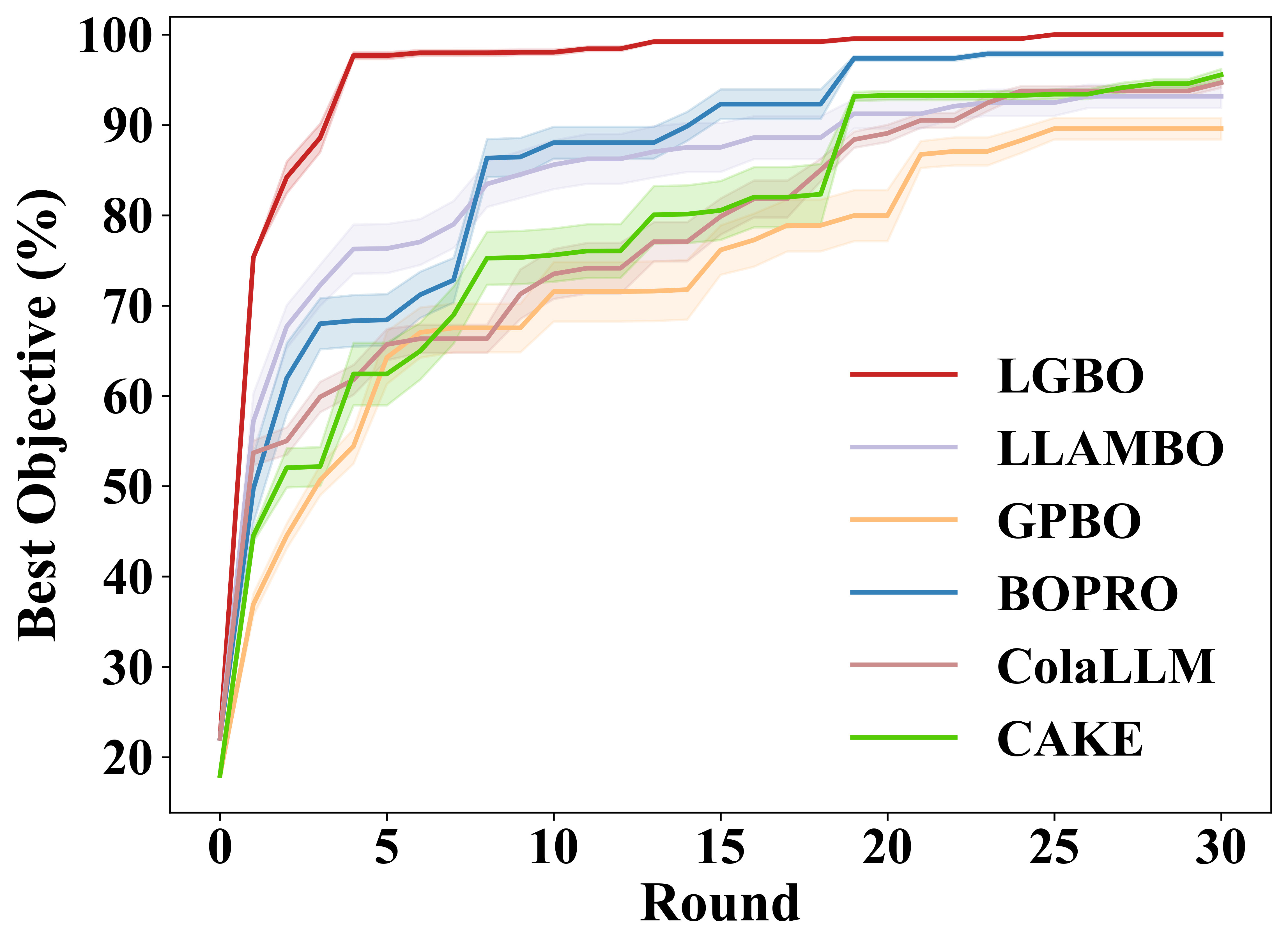}
    \caption{Performance comparison of LGBO against LLAMA, GPBO, ColaLLM and BOPRO on the LNP3 scientific dataset.}
    \label{fig:more_exp_lnp3}
\end{figure}
\begin{figure}[htbp]
    \centering
    \includegraphics[width=0.8\linewidth]{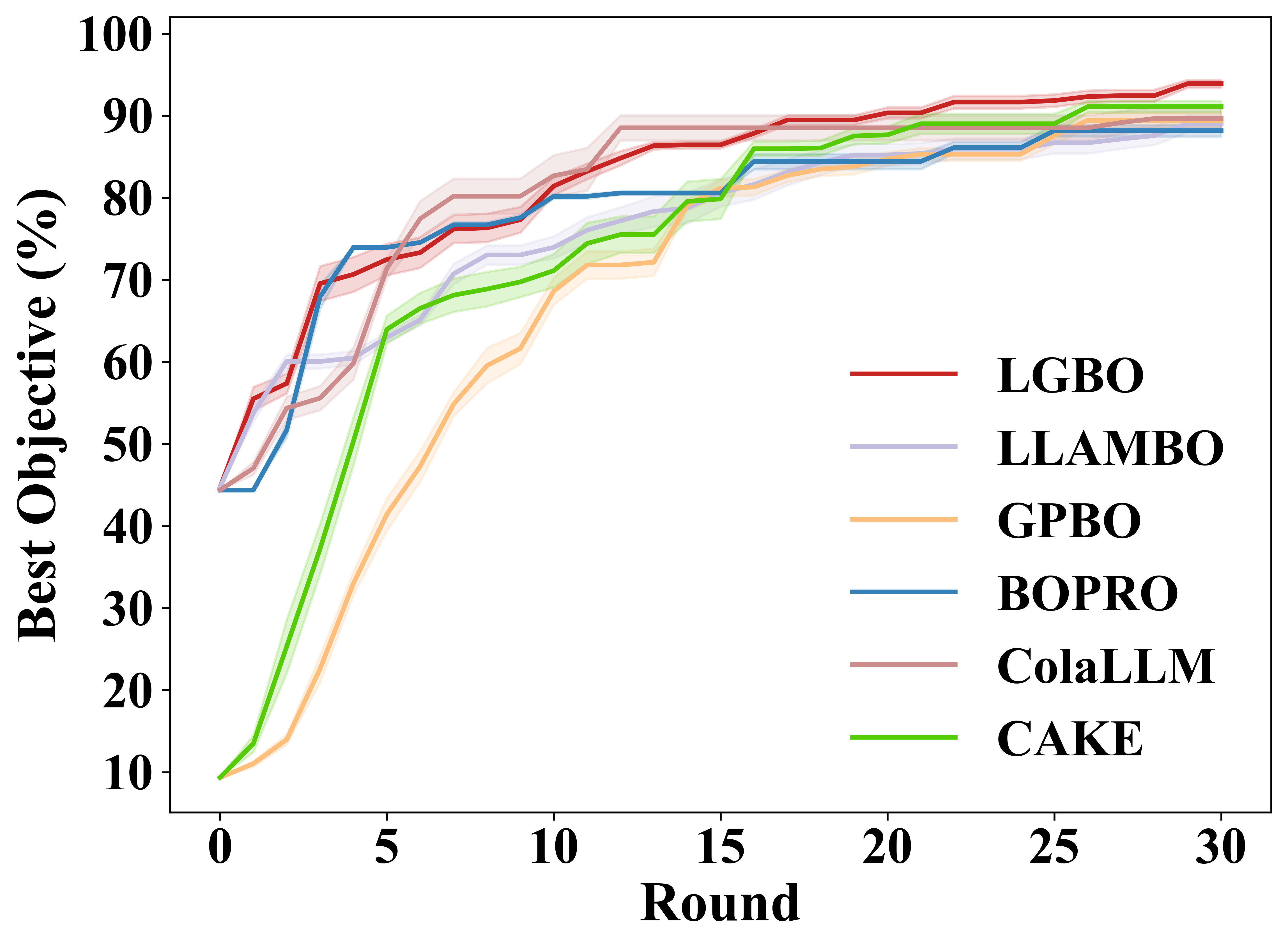}
    \caption{Performance comparison of LGBO against LLAMA, GPBO, ColaLLM and BOPRO on the HPLC scientific dataset.}
    \label{fig:more_exp_hplc}
\end{figure}
\begin{figure}[htbp]
    \centering
    \includegraphics[width=0.8\linewidth]{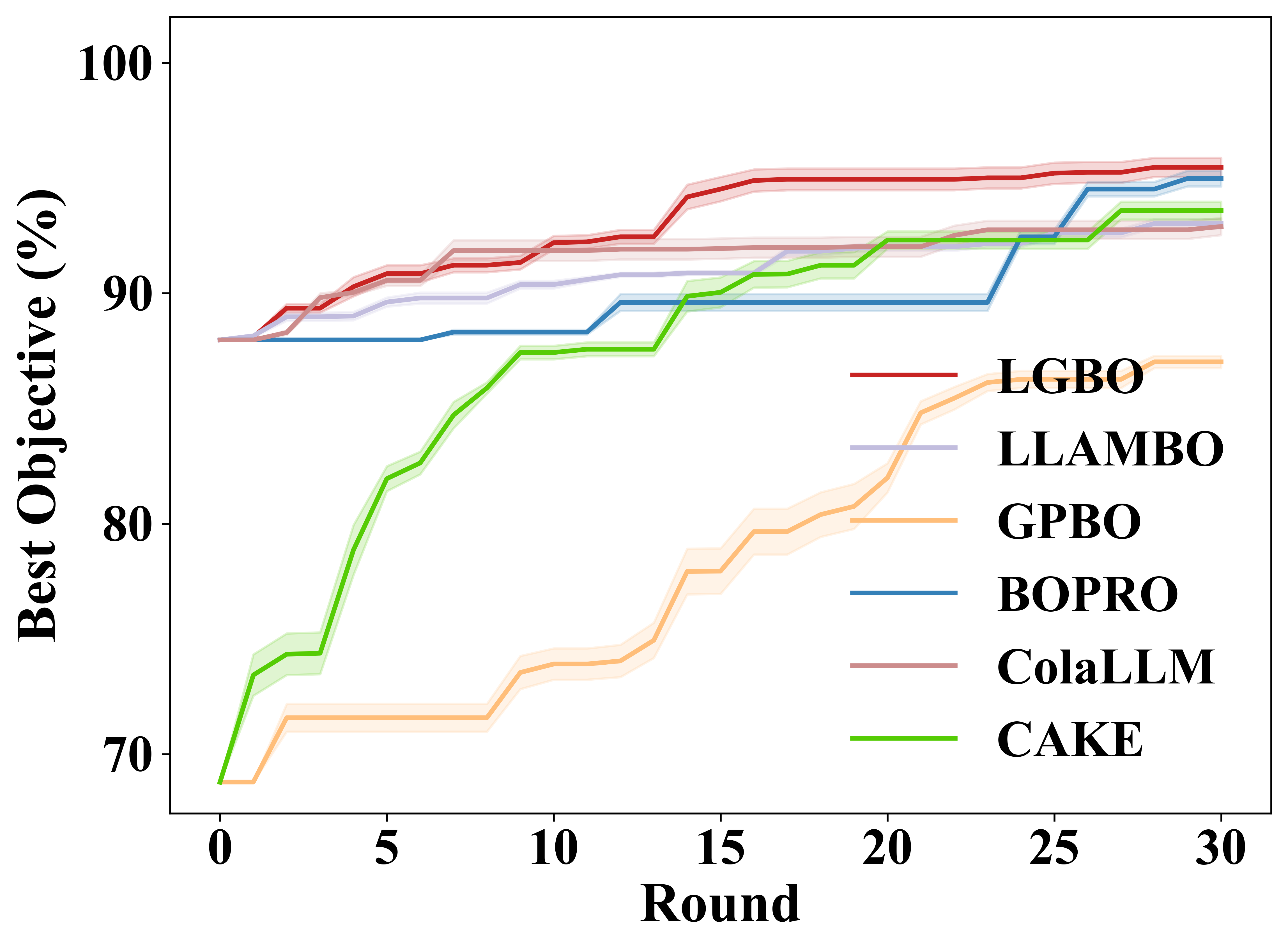}
    \caption{Performance comparison of LGBO against LLAMA, GPBO, ColaLLM and BOPRO on the Concrete scientific dataset.}
    \label{fig:more_exp_con}
\end{figure}
\begin{figure}[htbp]
    \centering
    \includegraphics[width=0.8\linewidth]{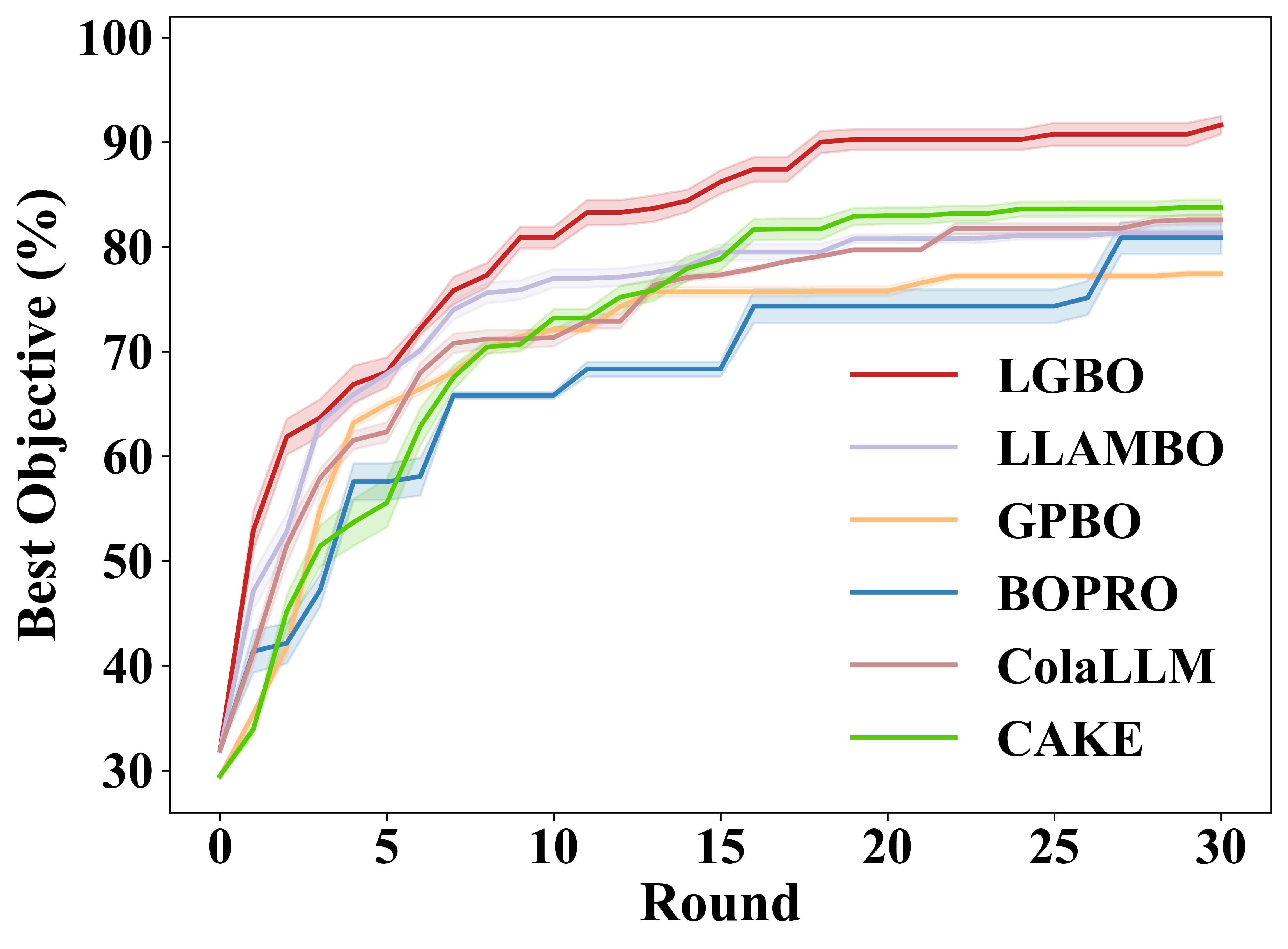}
    \caption{Performance comparison of LGBO against LLAMA, GPBO, ColaLLM and BOPRO on the COF scientific dataset.}
    \label{fig:more_exp_cof}
\end{figure}

\end{document}